
\documentclass{article}

\usepackage{mathtools}
\usepackage{hyperref}
\usepackage{url}
\usepackage{cleverref}
\usepackage{wrapfig,lipsum}
\usepackage{amsthm}
\usepackage{bbm}
\usepackage{enumitem}
\usepackage{multirow}
\usepackage{amssymb}

\usepackage{booktabs} 
\usepackage{colortbl}
\usepackage{amsfonts}   
\usepackage[table]{xcolor}
\usepackage[most]{tcolorbox}
\usepackage{hhline}

\definecolor{LavenderLight}{HTML}{C7C3F5}
\definecolor{SoftBlue}{RGB}{230,242,255}
\definecolor{SoftGreen}{RGB}{232,245,238}
\definecolor{SoftGray}{gray}{0.94}
\definecolor{SoftRed}{RGB}{247,235,235}
\definecolor{SoftPurple}{RGB}{238,235,247}

\newcommand{\dataset}{\rowcolor{gray!6}}
\newcommand{\model}{\rowcolor{gray!4}}
\newcommand{\baseline}{\rowcolor{SoftGray}}
\newcommand{\leaning}{\rowcolor{SoftBlue}}
\newcommand{\averse}{\rowcolor{SoftGreen}}

\newcommand{\modelcell}{\cellcolor{gray!4}}

\newcommand{\baselinecell}{\cellcolor{SoftGray}}
\newcommand{\leaningcell}{\cellcolor{SoftBlue}}
\newcommand{\aversecell}{\cellcolor{SoftGreen}}
\usepackage{hyperref}


\usepackage{amsmath,amsfonts,bm}


\newcommand{\gradRc}{\nabla \mathcal{R}}








\def\eqref#1{equation~\ref{#1}}









\def\1{\bm{1}}










\DeclareMathAlphabet{\mathsfit}{\encodingdefault}{\sfdefault}{m}{sl}
\SetMathAlphabet{\mathsfit}{bold}{\encodingdefault}{\sfdefault}{bx}{n}











\newcommand{\E}{\mathbb{E}}

\newcommand{\softmax}{\mathrm{softmax}}












\newcommand{\Ec}{\mathcal{E}}

\newcommand{\Lc}{\mathcal{L}}

\newcommand{\Rc}{\mathcal{R}}












\newcommand{\qt}{\tilde{q}}

\newcommand{\yt}{\tilde{y}}


















\newcommand{\T}{^\top}

\usepackage[accepted]{icml2026}

\theoremstyle{plain}
\newtheorem{theorem}{Theorem}[section]
\newtheorem{proposition}[theorem]{Proposition}
\newtheorem{lemma}[theorem]{Lemma}
\newtheorem{corollary}[theorem]{Corollary}
\theoremstyle{definition}
\newtheorem{definition}[theorem]{Definition}
\newtheorem{assumption}[theorem]{Assumption}
\theoremstyle{remark}
\newtheorem{remark}[theorem]{Remark}

\newcommand{\abs}[1]{\left\lvert #1 \right\rvert}

\newcommand{\norm}[1]{\left\lVert #1 \right\rVert}





\usepackage[textsize=tiny]{todonotes}

\icmltitlerunning{Beyond Log Likelihood: Probability-Based Objectives for Supervised Fine-Tuning across the Model Capability Continuum}

\begin{document}

\twocolumn[
  \icmltitle{Beyond Log Likelihood: Probability-Based Objectives for\\ Supervised Fine-Tuning across the Model Capability Continuum} 



  \icmlsetsymbol{equal}{*}

  \begin{icmlauthorlist}
    \icmlauthor{Gaotang Li}{equal,yyy}
    \icmlauthor{Ruizhong Qiu}{equal,yyy}
    \icmlauthor{Xiusi Chen}{equal,yyy}
    \icmlauthor{Heng Ji}{yyy}
    \icmlauthor{Hanghang Tong}{yyy}
  \end{icmlauthorlist}

  \icmlaffiliation{yyy}{University of Illinois Urbana-Champaign}

  \icmlcorrespondingauthor{Gaotang Li}{gaotang3@illinois.edu}
  \icmlcorrespondingauthor{Hanghang Tong}{htong@illinois.edu}

  \icmlkeywords{Machine Learning, ICML}

  \vskip 0.3in
  ]



\printAffiliationsAndNotice{\icmlEqualContribution}

\begin{abstract}
    Supervised fine-tuning (SFT) is the standard approach for post-training large language models (LLMs), yet it often shows limited generalization.  
    We trace this limitation to its default training objective: negative log likelihood (NLL).  
    While NLL is classically optimal when training from scratch, post-training operates in a different paradigm and could violate its optimality assumptions, where models already encode task-relevant priors and supervision can be long and noisy.
    In this work, we systematically study various probability-based objectives and characterize \emph{when} and \emph{why} different objectives succeed or fail under varying conditions.
    Through comprehensive experiments and extensive ablation studies across 8 model backbones, 27 benchmarks, and 7 domains, we uncover a critical dimension that governs objective behavior: the \emph{model-capability continuum}. Near the \emph{model-strong} end, prior-leaning objectives that downweight low-probability tokens (\emph{e.g.,} $-p$, $-p^{10}$, thresholded variants) consistently outperform NLL; toward the \emph{model-weak} end, NLL dominates; in between, no single objective prevails. Our theoretical analysis further elucidates how objectives trade places across the continuum, providing a principled foundation for adapting objectives to model capability. The code is available at \url{https://github.com/GaotangLi/Beyond-Log-Likelihood}.
\end{abstract}

\addtocontents{toc}{\protect\setcounter{tocdepth}{-1}}

\vspace{-0.5cm}
\section{Introduction}
\label{sec:intro}


Supervised fine-tuning (SFT) has become a standard approach for post-training large language models (LLMs), 
widely used to elicit and strengthen their capabilities~\citep{zhang2023instruction,chung2024scaling}.  
Despite its popularity, many existing studies find that SFT often exhibits limited generalization~\citep{ouyang2022training,chu2025sft}. 
Nevertheless, this limitation may not arise from the imitation learning paradigm itself. Instead, we find that it may stem from its default training objective: negative log likelihood (NLL, $-\log p$).
As a motivating case study, we generalize NLL into a parametrized family of learning objectives of the form $f^{\alpha}(p)\coloneqq-\frac{p^\alpha - 1}{\alpha}$, which includes NLL as a special case ($f^\alpha(p)\to-\log p$ as $\alpha\to0$). We surprisingly find that other objectives significantly outperform NLL on some tasks, as shown in Fig.~\ref{fig:motivation}.

\begin{figure}[!t]
    \centering
    \includegraphics[width=\linewidth]{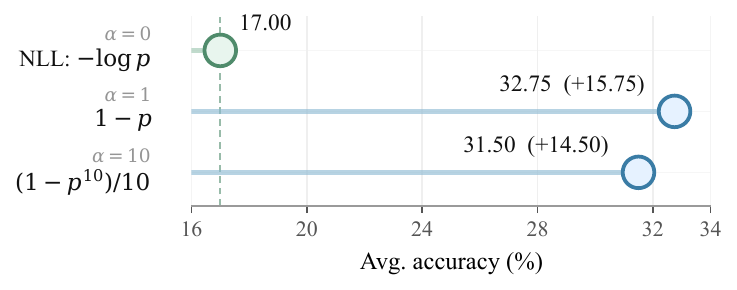}
    \caption{Motivating model-strong SFT result on math reasoning. For the objective family \(f^\alpha(p)=(1-p^\alpha)/\alpha\), where \(\alpha\to0\) recovers NLL (\(-\log p\)). Prior-leaning objectives with \(\alpha=1\) and \(\alpha=10\) substantially improve average accuracy over NLL.}
    \label{fig:motivation}
\end{figure}

\begin{figure*}[!t]
    \centering
    \includegraphics[width=\linewidth]{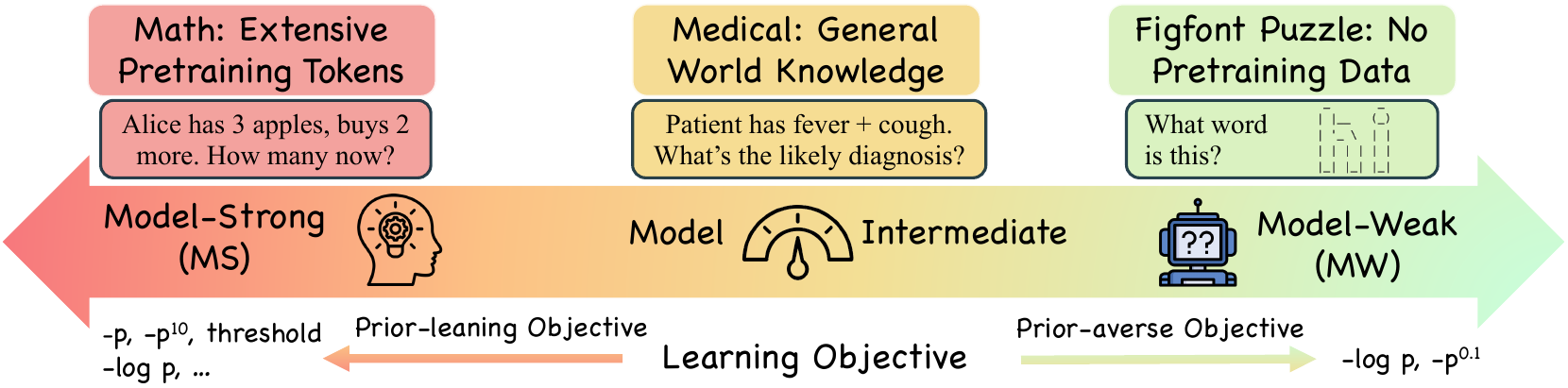}
    \caption{\textbf{The model capability continuum of SFT objectives in Post-Training.}
At the model-strong (MS) end, where base models already encode extensive priors (e.g., Llama 3 reports 25\% math pretraining tokens~\citep{grattafiori2024llama}),  
prior-leaning objectives that downweight low-probability tokens (\emph{e.g.}, $-p$, $-p^{10}$, or thresholded variants) consistently outperform NLL by up to 16\%.
At the model-weak (MW) end, where no useful priors exist (\emph{e.g.}, no figfont puzzles in pretraining data), the standard NLL dominates.  
In the model-intermediate (MI) region (e.g., medical reasoning, where models rely on partial world knowledge),  
the gap between objectives narrows and no single choice consistently prevails.  
This continuum highlights how SFT objective effectiveness depends critically on base-model capability. 
Our results suggest that no single fixed SFT objective is universally optimal across settings.
}
    \label{fig:figure1}
\end{figure*}

This unexpected observation motivates us to fundamentally revisit the training objective of SFT in its simplest form. 
While NLL has been shown to be optimal in classical learning theory when training from scratch on small-scale classification tasks~\citep{cox1958regression,zhang2004statistical,bartlett2006convexity},  
LLM post-training operates in a fundamentally different paradigm that could degrade the optimality of NLL. 
Post-training begins with a pretrained model that already encodes 
task-relevant priors,  
and typically involves long chain-of-thought supervision spanning thousands of tokens that may be noisy.  
Requiring the pretrained model to replicate every token verbatim can hinder generalization.

To this end, we conduct a comprehensive study to demystify \textbf{which scenarios suit NLL and which suit other objectives}, \emph{rather than advocating a single ``one-size-fits-all'' loss}.
Our study uncovers a critical dimension that governs the behavior of different objectives: the \textbf{model-capability continuum}. 
This continuum reflects the strength of prior signals inherited from pretraining: some domains (\emph{e.g.}, math with abundant pretraining tokens) align well with the model’s priors, while others (\emph{e.g.}, novel puzzles with no pretraining exposure) do not, as illustrated in Fig.~\ref{fig:figure1}. Accordingly, the effectiveness of a learning objective depends on prior strength: prior-leaning objectives excel when priors are reliable, whereas prior-averse ones remain necessary when priors are weak.

We validate this perspective through extensive experiments spanning 8 model backbones, 27 benchmarks, and 7 domains.  
Our results reveal a clear continuum in how objectives behave: at the \emph{model-strong} end, where base models already provide reliable priors,  
probability-based objectives that downweight low-probability tokens (e.g., $-p$, $-p^{10}$,
or thresholded variants) consistently outperform NLL.  
At the \emph{model-weak} end, where priors are misaligned with the data, NLL remains dominant by forcing the model to learn broadly from all tokens.  
In the intermediate region, the gap narrows and no single objective prevails.
Further empirical analyses show that convexity and concavity of the learning objective, as a proxy for the degree to which model priors are respected,  
has opposite effects across the continuum. 
Likelihood estimation on the training set
exhibits the same inversion.

To elucidate these findings, we provide theoretical underpinnings that characterize when and why different objectives outperform others.  
We characterize a sufficient condition showing that a more prior-leaning
(\emph{e.g.,} $-p$) achieve greater loss reduction than NLL in the model-strong end in gradient flow.  
The opposite holds in the model-weak end, where NLL achieves larger reductions.  
This theoretical characterization mirrors our empirical results and provides a principled explanation of how the objective form and the model capability interact.



\section{Related Works}
\label{sec:main_related_works}

\textbf{Improving SFT from RL perspective.} 
Motivated by the success of reinforcement learning in reasoning tasks, a growing body of work reinterprets and improves SFT through an RL lens.
\citet{wang2025implicit} cast SFT and DPO as instances of implicit reward learning, suggesting that smaller learning rates and alternative divergence-based objectives can improve performance. \citet{qin2025supervised} further integrate importance sampling into SFT, while \citet{zhu2025proximal} introduces a PPO-style clipped to constrain policy drift. Closest to our setting, \citet{wu2025generalization} propose uniformly reweighting gradient coefficients, which is essentially equivalent to our $-p$ objective. 
Collectively, these RL-inspired approaches can be interpreted as implementing \emph{prior-leaning} updates within our framework. While prior work demonstrates their effectiveness in certain domains, our model-capability continuum view shows that such updates can fail in others, highlighting that no single loss is universally optimal across settings.

\textbf{Other SFT loss functions.} Beyond RL-motivated objectives, a number of alternative losses to NLL have been explored in supervised learning, including mean squared error, focal loss~\citep{lin2017focal}, and Huber loss~\citep{huber1992robust}. These \emph{distribution-based} losses can be naturally understood through our framework. More recent proposals, such as entropic distribution matching~\citep{li2024entropic}, introduce additional regularization terms and can also be interpreted within our framework as \emph{prior-leaning} objectives.
Instead of adding objective sophistication or targeting specific applications, our work uncovers a model-capability continuum through simple probability-based objectives across diverse settings.
Additional discussion of related loss functions is provided in Appen.~\ref{appen:other_loss}, with a full version of the related work deferred to Appen.~\ref{sec:related_work}.

\textbf{Positioning of our work.} We provide the \emph{first} capability-based characterization of SFT objectives in LLM post-training. Instead of promoting a single “best” loss, we establish a principled account of when and why objectives trade advantages across the model-capability continuum.
\section{A Unified Categorization of SFT Objectives}
\label{sec:prelim}

\textbf{Language Model Post-Training.} We focus on the post-training stage of large language models (LLMs).  
Let $p_\theta$ denote a pretrained base model that has already undergone large-scale pretraining and accumulated extensive world knowledge.  
Such models typically produce predictions that are reasonably well-calibrated~\citep{zhu2023calibration,xie2024calibrating},  
and their outputs encode task-relevant priors derived from pretraining corpora.

\textbf{Standard Supervised Fine-Tuning.} We consider supervised fine-tuning (SFT) on a dataset \(T\) of input-output pairs \((x, \yt)\), where \(\yt = (y_1, \dots, y_N)\) denotes the target sequence.  
The model defines token-level conditionals \(p_\theta(y_t \mid y_{<t}, x)\).  
At decoding step \(t\), let \(z_t \in \mathbb{R}^V\) denote the logits over the vocabulary, \(p_t = \mathrm{softmax}(z_t)\), and \(p_{t,i} = \mathrm{softmax}(z_t)_i\).
For brevity, write \(y = y_t\), and denote by \(\delta_{i,y}\) the Kronecker delta.  
In standard SFT, the training objective is to minimize the negative log likelihood, equivalently the cross-entropy loss, over the dataset:  

\vspace{-1.5em}
\begin{align}
    \label{eq:logp}
    \mathcal{L}_{\log(p)}(\theta) 
    &=  \E_{(x, \yt) \sim T} \big[ - \log p_{\theta}(\yt \mid x) \big] \tag*{}\\
    &= \E_{(x, \yt) \sim T} \left[ \sum_{t=1}^{N} -\log p_{\theta}(y_t \mid y_{<t}, x) \right].
\end{align}

\textbf{A General Family of Probability-Based Objectives.}  
We now extend beyond log likelihood by considering a broader family of objectives.  
For any differentiable and nonincreasing function \(f:[0,1]\to\mathbb{R}\), we define

\vspace{-1.5em}
\begin{align}
    \label{eq:general_obj}
    \mathcal{L}_{f(p)}(\theta) 
    &= \E_{(x, \yt) \sim T} \big[ f\big( p_{\theta}(\yt \mid x) \big) \big] \tag*{}\\
    &= \E_{(x, \yt) \sim T} \left[ \sum_{t=1}^{N} f\big( p_{\theta}(y_t \mid y_{<t}, x) \big) \right].
\end{align}

One useful general instance of \(f\) is given by  

\vspace{-1em}
\begin{equation}
    \label{eq:example_alpha}
    f^\alpha (p) = \frac{1 - p^{\alpha}}{\alpha}.
\end{equation}

As \(\alpha \to 0\), it reduces to \(f^\alpha (p) \to - \log(p)\) (NLL).  
When \(\alpha = 1\), it yields the plain-\(p\) objective \(f^\alpha(p) = 1-p\), which corresponds to \emph{maximizing the expected average prediction accuracy}.  
More generally, the function is concave when \(\alpha \geq 1\) and convex when \(0 \leq \alpha \leq 1\).

\textbf{Prior-learning versus Prior-averse Objectives.} 
The key distinction among these objectives lies in the form of their gradients with respect to the \emph{correct logit class}, which governs the resulting learning dynamics.  

\begin{lemma}[Gradient Shape]
\label{lem:logit_grad}
Let \(f:[0,1]\!\to\!\mathbb{R}\) be differentiable and nonincreasing.  
Then the gradient of Eq.~\ref{eq:general_obj} with respect to the logits at step \(t\) is
\begin{align*}
\frac{\partial \bigl(\mathcal{L}_f\bigr)}{\partial z_{t,i}}
\;=\;
s_f\!\left(p_{t,y}\right)\,\bigl(\delta_{i,y} - p_{t,i}\bigr),
\\
\text{where } s_f(p) \;\triangleq\; -f'(p)\,p \;\geq 0, \,\, \delta_{iy}= \1 \{i=y\}.
\end{align*}

\vspace{-0.15cm}
In particular, for the correct class \(i=y\),
\begin{align*}
\frac{\partial \bigl(\mathcal{L}_f\bigr)}{\partial z_{t,y}}
= s_f(p_{t,y})\,(1-p_{t,y})
= W_f(p_{t,y}),
\\
W_f(p) \;\triangleq\; -f'(p)\,p\,(1-p).
\end{align*}
\end{lemma}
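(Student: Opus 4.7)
The plan is to derive both displays by a direct chain-rule computation through the softmax, followed by specialization to the correct class. No nontrivial analysis is required; the entire proof reduces to two well-known facts plus one algebraic rewrite, so I expect the argument to be only a few lines.

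First, I would push the derivative inside the expectation and the sum over token positions, justified by differentiability of $f$ on the compact interval $[0,1]$ (uniform boundedness of the integrand gives dominated-convergence style interchange). This isolates the calculation to a single token term $f(p_{t,y})$ at a fixed position $t$, and makes clear why the per-position formula then aggregates trivially into the stated gradient of $\mathcal{L}_f$.

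Second, I would compute $\partial f(p_{t,y})/\partial z_{t,i} = f'(p_{t,y})\,\partial p_{t,y}/\partial z_{t,i}$ by the chain rule, and substitute the standard softmax Jacobian identity $\partial p_{t,y}/\partial z_{t,i} = p_{t,y}(\delta_{i,y} - p_{t,i})$, which follows from direct differentiation of $p_{t,y} = e^{z_{t,y}}/\sum_j e^{z_{t,j}}$ (treating the cases $i=y$ and $i\ne y$ separately before unifying them through $\delta_{i,y}$). Factoring out $-f'(p_{t,y})\,p_{t,y}$ and identifying $s_f(p_{t,y}) \triangleq -f'(p_{t,y})\,p_{t,y}$ yields precisely the claimed structural form $s_f(p_{t,y})\,(\delta_{i,y} - p_{t,i})$, up to the sign convention adopted in the statement.

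Third, I would specialize to the correct class by setting $i=y$, so that $\delta_{y,y}=1$ contributes the factor $(1-p_{t,y})$, and this directly matches $W_f(p_{t,y}) = -f'(p_{t,y})\,p_{t,y}(1-p_{t,y})$ by definition. Finally, to verify $s_f(p)\ge 0$, I would observe that $f$ nonincreasing gives $f'(p)\le 0$, while $p\in[0,1]$ gives $p\ge 0$, so $-f'(p)\,p\ge 0$. The only care required is keeping the Kronecker-delta bookkeeping consistent when separating the on-class ($i=y$) and off-class ($i\ne y$) contributions in the softmax Jacobian; there is no genuine analytic obstacle to overcome.
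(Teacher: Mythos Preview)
Your proposal is correct and matches the paper's own proof essentially line for line: apply the chain rule through the softmax Jacobian $\partial p_{t,y}/\partial z_{t,i} = p_{t,y}(\delta_{i,y}-p_{t,i})$, factor out $-f'(p_{t,y})\,p_{t,y}$ to identify $s_f$, specialize to $i=y$, and note $f'\le 0$ for the nonnegativity of $s_f$. The only addition you make is the remark about interchanging the derivative with the expectation and sum, which the paper omits; otherwise the arguments are identical.
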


\begin{proposition}[Convex versus Concave Objectives]
\label{prop:inter_convex_concave}
Let \(f \in C^2[0,1]\) with \(f'(p) < 0\) for all \(p \in (0,1)\).  
Define \(W_f(p) = -f'(p)\,p(1-p)\). Then if \(f\) is concave, any maximizer of \(W_f\) lies in the interval \([\tfrac{1}{2}, 1]\); if \(f\) is convex, any maximizer of \(W_f\) lies in the interval \([0, \tfrac{1}{2}]\).  

In other words, convex objectives emphasize gradient contributions from low-probability tokens,  
while concave objectives emphasize high-probability tokens.  
\end{proposition}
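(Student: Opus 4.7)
The plan is to reduce the proposition to a routine monotonicity argument on $W_f$ by differentiating once and reading off the sign of $W_f'$ on each half of $[0,1]$. Concretely, I would first introduce the auxiliary function $g(p) := -f'(p)$, which is nonnegative on $(0,1)$ by hypothesis, so that $W_f(p) = g(p)\,p(1-p)$. Differentiating gives
\begin{equation*}
W_f'(p) \;=\; g'(p)\,p(1-p) \;+\; g(p)\,(1-2p),
\qquad\text{with}\qquad g'(p) \;=\; -f''(p).
\end{equation*}
The two factors $p(1-p)$ and $g(p)$ are always nonnegative, while the sign of $(1-2p)$ flips at $p=\tfrac12$, and the sign of $g'(p)$ is controlled by the convexity/concavity of $f$. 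This sets up a clean case split.

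For the concave case, I would use $f''\le 0$, hence $g'\ge 0$. On $[0,\tfrac12]$ both summands in $W_f'$ are nonnegative, so $W_f$ is nondecreasing there; combined with $W_f(0)=0$ (from $p(1-p)=0$ and $g(0)$ finite by $f\in C^2[0,1]$), this implies $W_f(p)\le W_f(\tfrac12)$ for every $p\in[0,\tfrac12]$, so any maximizer must lie in $[\tfrac12,1]$. The convex case is symmetric: $f''\ge 0$ gives $g'\le 0$, and on $[\tfrac12,1]$ both summands are nonpositive, so $W_f$ is nonincreasing there and $W_f(p)\le W_f(\tfrac12)$ for $p\in[\tfrac12,1]$, forcing every maximizer into $[0,\tfrac12]$.

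There is essentially no hard step; the only thing to be careful about is distinguishing ``maximum value'' from ``location of the maximizer'' and handling the boundary endpoints. Continuity of $f'$ on the closed interval, which is free from $f\in C^2[0,1]$, ensures $W_f$ is continuous on $[0,1]$ with $W_f(0)=W_f(1)=0$, so a maximizer exists and the monotonicity argument above correctly pins it to the claimed half-interval. If desired, strict statements (a maximizer lies in $(\tfrac12,1]$ rather than just $[\tfrac12,1]$) would require strict concavity/convexity of $f$ together with $g(\tfrac12)>0$, but the proposition as stated needs only the weak inequalities, so no further technical work is needed.
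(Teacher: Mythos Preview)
Your approach is exactly the paper's: introduce $g(p)=-f'(p)$ (the paper calls it $s$), write $W_f'(p)=g'(p)\,p(1-p)+g(p)(1-2p)$, and read off the sign on each half-interval from the sign of $g'=-f''$. One small tightening: you write $g$ is ``nonnegative'' and conclude $W_f$ is ``nondecreasing'' on $[0,\tfrac12]$, but nondecreasing alone only shows $W_f(p)\le W_f(\tfrac12)$, which does not rule out a maximizer in $[0,\tfrac12)$ if $W_f$ happened to be flat there. The hypothesis is actually $f'(p)<0$ strictly, so $g(p)>0$ on $(0,1)$; together with $1-2p>0$ on $(0,\tfrac12)$ this gives $W_f'(p)>0$ and hence \emph{strict} increase on $(0,\tfrac12)$, which (with $W_f(0)=0<W_f(\tfrac12)$) excludes every point of $[0,\tfrac12)$ as a maximizer. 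This is precisely how the paper argues, and no strict concavity/convexity of $f$ is needed.
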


\begin{figure}[h!]
    \centering
    \includegraphics[width=\linewidth]{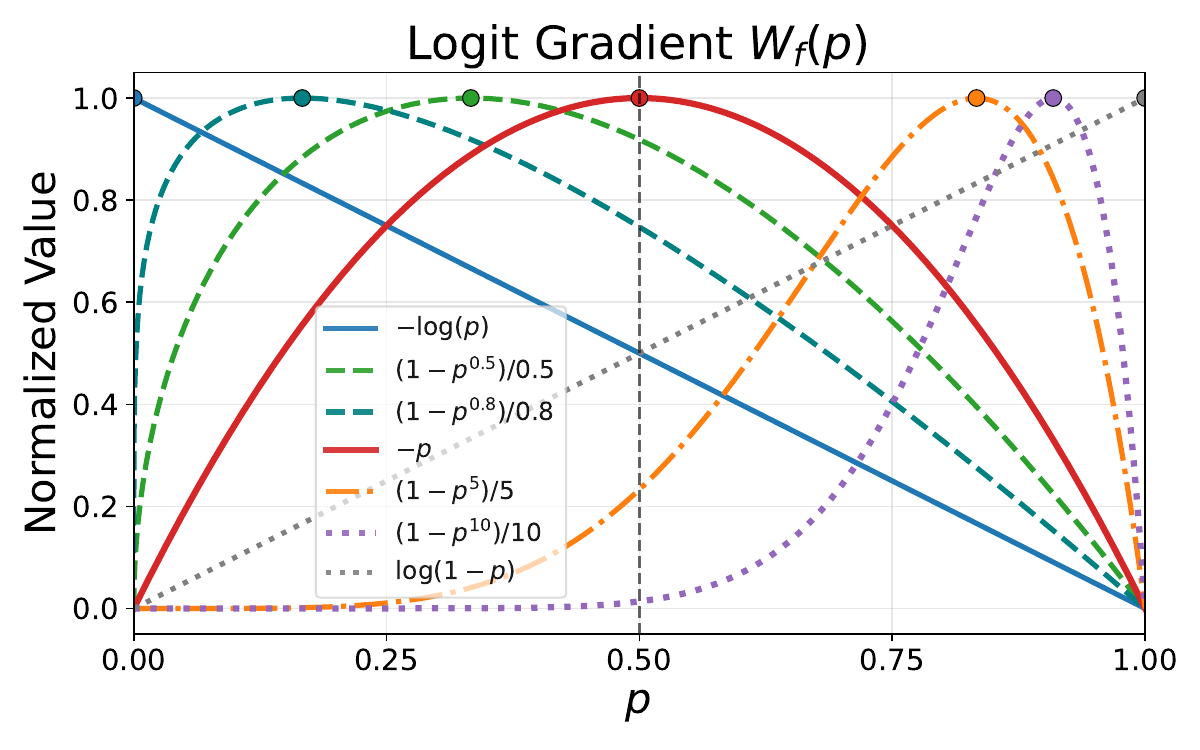} 
    \caption{Logit-gradient weights $W_f(p)$ for representative objectives. The marked peak of each curve shows the token-probability region where the correct-logit gradient is largest, distinguishing prior-averse objectives from prior-leaning objectives.}
    \label{fig:logit_gradient}
\end{figure}

The weighting term \(W_f(p)\) determines how much learning signal each token contributes relative to the model's prior belief.  
For the parametric family in Eq.~\ref{eq:example_alpha}, we have \(W_f(p) = p^{\alpha}(1-p)\). 
As \(\alpha \to 0\) (NLL), this reduces to \(W_f(p) \to (1-p)\), which strongly emphasizes low-probability tokens.  
When \(\alpha \geq 1\) ($f(p)= 1-p$), the gradient signal from low-probability tokens quickly diminishes.  
For a special case \(f(p) = -\log(1-p)\), we obtain \(W_f(p)=p\), which exhibits the opposite trend of $-\log(p)$ by emphasizing high-probability tokens.
Fig.~\ref{fig:logit_gradient} visualizes these gradient shapes \(W_f(p)\) for different objectives: the dot marks where each logit-gradient weight is largest,  
and the dashed line at \(p=0.5\) serves as a reference point separating objectives that favor low- versus high-probability tokens.
More formally, Prop.~\ref{prop:inter_convex_concave} shows that convex objectives (e.g., \(-\log p\)) achieve their maximum within \([0,0.5]\),  
thus prioritizing low-probability tokens (\emph{prior-averse});  
whereas concave objectives (e.g., \(-p^2\)) peak within \([0.5,1]\), thereby reinforcing already confident predictions (\emph{prior-leaning}).  
This distinction illustrates how convexity modulates the degree to which an objective respects model priors.
In particular, the family in Eq.~\ref{eq:example_alpha} can be seen as providing a smooth transition between prior-averse and prior-leaning behavior.  
This leads to the following definition.

\begin{definition}[Prior-leaning versus Prior-adverse Objectives]
   We classify objectives according to how $W_f$ distributes its mass over $p$. We say the objective is:
   \vspace{-0.2cm}
   \begin{itemize}
   \setlength\itemsep{0em}
        \item \emph{Prior-leaning} if the majority of gradient weight is concentrated on medium- to high-probability tokens (i.e., $p$ above a threshold $\tau$), thereby leveraging the model's prior to refine already plausible predictions.
        \item \emph{Prior-averse} if the majority of gradient weight is concentrated on low-probability tokens ($p$ below $\tau$), pushing the model to learn from unlikely predictions.
    \end{itemize}
\end{definition}

This definition emphasizes that different objectives exploit the model’s prior in opposite ways.
While the precise boundary between prior-leaning and prior-averse (e.g., the choice of threshold $\tau$) is not unique and may depend on the task,  
some objectives exhibit clear contrasts (\emph{e.g.}, $-\log p$ versus $-p$), which form the primary focus of our study.  
To further probe their behavior, we also consider a hard-thresholding variant:  

\vspace{-1em}
\begin{equation}
    \label{eq:hard_threshold}
    \mathcal{L}_{\mathrm{HT}(I), f(p)}(\theta) 
    = \E_{(x, \yt) \sim T} \left[
        f \big( p(\yt \mid x) \big)\, \1 \{ p(\yt \mid x) \in I \}
     \right],
\end{equation}

where $\mathrm{HT}(I)$ denotes restricting updates to tokens whose predicted probabilities fall within an interval $I \subseteq [0,1]$.
This formulation is very useful for ablation, as it isolates the contribution of tokens in specific probability ranges. 

\textbf{The model capability continuum.}
Unlike traditional classification tasks, language model post-training spans a wide variety of domains that differ substantially in how well they are supported by pretraining. 
Consequently, not all tasks should be treated uniformly.  
We categorize tasks along a \emph{model-capability continuum}, defined by the strength of the base model prior. A general categorization is shown in Fig.~\ref{fig:figure1}.
Our classification relies on two complementary perspectives: 
(1) From the pretraining data side, tasks differ in the portion of relevant data contained in the corpus.  
For example, the LLaMA-3 report indicates that $\sim$25\% of its pretraining tokens are math-related, suggesting strong priors for mathematical reasoning (\emph{model-strong}).  
By contrast, figfont puzzles fall entirely outside the pretraining corpus and thus represent \emph{model-weak} tasks, while domains with partial coverage, such as medical reasoning, are considered \emph{intermediate}. 
(2) From the model side, we use the mean predicted probability on the training set as a quantitative proxy for prior strength. This is analogous to how the widely-used benchmark \texttt{lm\_eval} evaluates base models by computing log-likelihood over answer tokens across many benchmarks~\citep{eval-harness}.
This measure aligns well with intuition: math tasks achieve high predicted likelihood of the training even before SFT (e.g., Qwen2.5-Math-7B: 0.81, LLaMA-3.1-8B: 0.76), whereas medical reasoning lies in the middle ($\sim$0.50), and figfont puzzles remain extremely low ($\sim$0.01).  
Together, these perspectives motivate our continuum view and ground it in both qualitative and quantitative evidence.  
The details and the rationales about our classification are included in Appen.~\ref{appen:continum_selection}.

At the model strong (MS) end, prior-leaning objectives can be leveraged to refine a small number of critical tokens by concentrating learning on mid- to high-probability tokens that are more likely to be correct.  
At the model weak (MW) end, prior-averse objectives are more suitable, as they encourage the model to improve predictions across all tokens.  
For models of intermediate capability (MI), both objectives may provide benefits, depending on the characteristics of the task and the base model.  

\section{Main Experiments}
\label{sec:main_exp}

\begin{table*}[h]
\centering
\caption{Main results in the Model Strong (MS) end. Both prior-leaning objectives \colorbox{SoftBlue}{$-p$} and \colorbox{SoftBlue}{thresholded $-\log(p)$} consistently outperform the prior-averse \colorbox{SoftGreen}{$-\log(p)$} objective across models and datasets. Best results are in bold.
}
\label{tab:strong_regime_main_exp}
\resizebox{0.70\linewidth}{!}{
\begin{tabular}{lcccccc}
\toprule
\dataset \multicolumn{1}{c}{\textbf{Models}} & \multicolumn{1}{l}{\textbf{Math500}} & \multicolumn{1}{l}{\textbf{Minerva Math}} & \multicolumn{1}{l}{\textbf{Olympiad Bench}} & \multicolumn{1}{l}{\textbf{AIME24}} & \multicolumn{1}{l}{\textbf{AMC23}} & \multicolumn{1}{l}{\textbf{Avg.}} \\ \midrule
\model \multicolumn{7}{c}{\textbf{LLaMA-3.1-8B}}                                                                                                                                                                                                                                           \\ \midrule \baseline
Base                                & 1.76                                 & 0.68                                      & 0.86                                        & 0.00                                & 1.25                               & 0.91                              \\
\averse -log(p)                               & 17.59                                & 5.84                                      & 3.04                                        & 0.21                                & 5.78                               & 6.49                              \\
\leaning -log(p)\(\1\)\{p \(\geq\) 0.2\}                 & 24.39                                & \textbf{10.49}                            & 5.10                                        & \textbf{0.41}                       & \textbf{11.25}                     & 10.33                             \\
\leaning -p                                   & \textbf{25.29}                       & 10.09                                     & \textbf{6.37}                               & \textbf{0.41}                       & 10.62                              & \textbf{10.56}                    \\ \midrule
\model \multicolumn{7}{c}{\textbf{DeepSeekMath-7B}}                                                                                                                                                                                                                                        \\ \midrule 
\baseline Base                                & 5.70                                 & 2.89                                      & 1.51                                        & 0.00                                & 2.34                               & 2.49                              \\
\averse -log(p)                               & 28.79                                & 9.29                                      & 6.57                                        & 0.21                                & 10.62                              & 11.10                             \\
\leaning -log(p)\(\1\)\{p \(\geq\) 0.2\}                 & \textbf{40.38}                       & 19.38                                     & \textbf{13.98}                              & 0.62                                & 18.91                              & 18.65                             \\
\leaning -p                                   & 39.55                                & \textbf{20.14}                            & \textbf{13.99}                              & \textbf{1.24}                       & \textbf{20.62}                     & \textbf{19.11}                    \\ \midrule
\model \multicolumn{7}{c}{\textbf{Qwen2.5-Math-1.5B}}                                                                                                                                                                                                                                      \\ \midrule
\baseline Base                                & 30.71                                & 8.81                                      & 14.88                                       & 2.49                                & 17.97                              & 14.97                             \\
\averse -log(p)                               & 42.52                                & 12.71                                     & 12.09                                       & 0.62                                & 17.03                              & 17.00                             \\
\leaning -log(p)\(\1\)\{p \(\geq\) 0.2\}                 & 63.95                                & 24.79                                     & 26.08                                       & \textbf{7.09}                                & \textbf{38.28}                              & 32.04                             \\
\leaning -p                                   & \textbf{65.27}                       & \textbf{26.18}                            & \textbf{26.66}                              & 6.88                       & 38.13                    & \textbf{32.75}                    \\ 
\midrule
\model \multicolumn{7}{c}{\textbf{Qwen2.5-Math-7B}}                                                                                                                                                                                                                                        \\ \midrule
\baseline Base                                & 40.38                                & 13.66                                     & 16.36                                       & 6.04                                & 24.69                              & 20.23                             \\
\averse -log(p)                               & 51.90                                & 18.88                                     & 17.37                                       & 2.70                                & 22.50                              & 22.67                             \\
\leaning -log(p)\(\1\)\{p \(\geq\) 0.2\}                 & 67.85                                & \textbf{32.47}                            & \textbf{33.90}                              & \textbf{8.76}                       & \textbf{47.81}                     & \textbf{38.16}                    \\
\leaning -p                                   & \textbf{68.47}                       & 31.99                                     & 32.26                                       & 8.75                                & 41.09                              & 36.51                             \\ \bottomrule
\end{tabular}
}
\end{table*}

In this section, we empirically validate the proposed continuum view of SFT post-training  
and evaluate the performance of different probability-based objective functions. 

\subsection{Experimental Setup}
\label{subsec:main_exp_setup}

To empirically validate the continuum view, we conduct experiments across three representative domains: mathematical reasoning, medical reasoning, and textual puzzles, which serve as anchor settings for the model-capability continuum.  
As motivated in Sec.~\ref{sec:prelim}, these domains occupy different positions along the model-capability continuum.  
For the \emph{model-strong (MS)} end, we use NuminaMath~\citep{numina_math_datasets} as training data.  
For the \emph{model-weak (MW)} end, we generate synthetic figfont puzzles from Reasoning Gym~\citep{stojanovski2025reasoninggymreasoningenvironments}.  
For the \emph{intermediate (MI)} region, we adopt m23k~\citep{huang2025m1}, a high-quality medical reasoning dataset. 
Additional statistics supporting this classification are provided in Appen.~\ref{appen:continum_selection}. 
To test whether the same continuum behavior extends beyond these anchor domains, we also include fixed-domain capability sweeps in general instruction tuning (Sec.~\ref{subsec:general_instruction_tuning} and Appen.~\ref{appen:general_instruction_tuning}) and math (Appen.~\ref{appen:additional_model_strong}), as well as additional domains that instantiate the MS and MW ends: coding and low-resource multilingual instruction tuning (Appen.~\ref{appen:additional_model_strong}, \ref{appen:additional_model_weak}).

Across the main and appendix studies, our experiments cover a diverse set of advanced backbones, including LLaMA-3.1-3B, LLaMA-3.2-3B, LLaMA-3.1-8B, DeepSeekMath-7B, Qwen2.5-Math-1.5B, Qwen2.5-Math-7B, Qwen2.5-1.5B/3B/7B/14B/32B, and Qwen2.5-Coder-7B.   
We primarily compare the $-p$ and $-\log p$ objectives, with one exception: on the MS end, we also evaluate a thresholded variant of $-\log p$ that excludes low-probability tokens.  
The evaluation datasets, training details, and further experimental configurations are provided in Appen.~\ref{appen:exp_setup}.

\subsection{Main Results}

\textbf{Model-Strong Results Interpretation.} 
Tab.~\ref{tab:strong_regime_main_exp} reports results in the model-strong (MS) end, where base models already exhibit strong priors aligned with the ground truth.
In this setting, the $-p$ objective consistently outperforms standard negative log-likelihood ($-\log p$). 
This trend suggests that when model predictions are already reliable, a prior-leaning objective like $-p$ better capitalizes on high-confidence tokens by suppressing the influence of low-probability ones.  
To further dissect this effect, we evaluate a thresholded variant of $-\log p$ that excludes tokens with $p<0.2$.  
This adjustment directly mitigates the effect of low-confidence tokens and leads to consistent improvements over standard $-\log p$.
In many cases, it performs on par with, or even surpasses, $-p$ applied to full tokens.
Such evidence highlights that the weakness of standard NLL in this setting lies in its excessive emphasis on low-probability tokens.
Prior-leaning objectives that explicitly downweight low-confidence tokens consistently yield the greatest gains at the MS end.
This MS behavior is not restricted to the math domain: the coding experiments in Appen.~\ref{appen:additional_model_strong} show the same preference for $-p$ over $-\log p$, and the fixed-domain math scaling study further shows that the advantage of $-p$ increases as the backbone becomes stronger.
We provide further empirical analysis in Sec.~\ref{sec:empirical_analysis} with a more careful study of the pattern.

\begin{table*}[h!]
    \centering
\caption{Main results in the Model Intermediate (MI) region. Both \colorbox{SoftBlue}{\(-p\)} and \colorbox{SoftGreen}{\(-\log(p)\)} result in similar performance.}
\label{tab:medical_sft}
    \resizebox{0.75\linewidth}{!}{
    \begin{tabular}{llllllllllll}
    \toprule
    \dataset \multicolumn{1}{c}{\textbf{Model}} & \multicolumn{1}{c}{\textbf{MedMC}} & \multicolumn{1}{c}{\textbf{MedQA}} & \multicolumn{1}{c}{\textbf{PubMed}} & \multicolumn{1}{c}{\textbf{MMLU-P}} & \multicolumn{1}{c}{\textbf{GPQA}} & \multicolumn{1}{c}{\textbf{Lancet}} & \multicolumn{1}{c}{\textbf{MedB (4)}} & \multicolumn{1}{c}{\textbf{MedB (5)}} & \multicolumn{1}{c}{\textbf{MedX}} & \multicolumn{1}{c}{\textbf{NEJM}} & \multicolumn{1}{c}{\textbf{Avg.}} \\ \midrule
    \model \multicolumn{12}{c}{\textbf{LLaMA-3.1-3B}}                                                                                                                                                                                                                                                                                                                                                                                                                     \\ \midrule
    \baseline Base                               & 21.30                              & 21.92                              & 22.60           & 11.40                               & 23.08                             & 25.00                               & 23.05                                 & 15.26                                 & 10.35                             & 23.22                             & 19.48                             \\
    \averse -log(p)                              & \textbf{42.60}                              & \textbf{45.56}                              & \textbf{67.40}                               & \textbf{38.63}                               & 24.36                             & \textbf{46.84}                               & \textbf{46.10}                                 & \textbf{34.42}                                 & 11.59                             & \textbf{43.28}                             & \textbf{37.99}                    \\
    \leaning -p                                  & 39.42                              & 41.95                              & 62.70                               & 33.88                               & \textbf{38.46}                             & 44.17                               & 35.71                                 & 28.57                                 & \textbf{12.63}                             & 40.80                             & 36.29                             \\ \midrule
    \model \multicolumn{12}{c}{\textbf{LLaMA-3.1-8B}}                                                                                                                                                                                                                                                                                                                                                                                                                     \\ \midrule
    \baseline Base                               & 23.57                              & 29.14                              & 21.00                               & 20.00                               & 29.49                             & 22.57                               & 30.52                                 & 20.45                                 & 10.01                             & 20.73                             & 21.89                             \\
    \averse -log(p)                              & \textbf{55.08}                              & \textbf{59.47}                              & 74.00                               & \textbf{53.62}                               & 32.05                             & \textbf{57.28}                               & \textbf{52.27}                                 & \textbf{46.10}                                 & \textbf{15.87}                             & \textbf{59.20}                             & \textbf{47.23}                    \\
    \leaning -p                                  & 54.10                              & 58.44                              & \textbf{76.50}                               & 52.70                               & \textbf{44.87}                             & 54.13                               & 42.21                                 & 42.53                                 & 13.80                             & 54.73                             & 45.89                             \\ \midrule
    \model \multicolumn{12}{c}{\textbf{Qwen2.5-1.5B}}                                                                                                                                                                                                                                                                                                                                                                                                                     \\ \midrule
    \baseline Base                               & 22.21                              & 21.84                              & 18.50                               & 11.21                               & 24.36                             & 22.57                               & 24.03                                 & 17.53                                 & 10.84                             & 18.74                             & 18.59                             \\
    \averse -log(p)                              & \textbf{39.64}                              & \textbf{39.59}                              & 66.70                               & 34.92                               & 33.33                             & \textbf{38.83}                               & \textbf{38.31}                                 & 27.60                                 & 10.56                             & 34.16                             & \textbf{35.13}                    \\
    \leaning -p                                  & 38.58                              & 36.68                              & \textbf{68.00}                               & \textbf{38.37}                               & \textbf{35.90}                             & 35.68                               & 36.69                                 & \textbf{28.90}                                 & \textbf{11.94}                             & \textbf{39.97}                             & 35.02                             \\ \midrule
    \model \multicolumn{12}{c}{\textbf{Qwen2.5-Math-7B}}                                                                                                                                                                                                                                                                                                                                                                                                                  \\ \midrule
    \baseline Base                               & 35.84                              & 27.26                              & 49.30                               & 30.23                               & 35.90                             & 30.34                               & 24.03                                 & 18.18                                 & 10.21                             & 24.71                             & 27.55                             \\
    \averse -log(p)                              & \textbf{36.48}                              & 33.78                              & \textbf{72.60}                               & 35.50                               & 38.46                             & \textbf{40.05}                               & \textbf{29.87}                                 & \textbf{26.95}                                 & \textbf{10.42}                             & \textbf{26.70}                             & 33.56                             \\
    \leaning -p                                  & 35.62                              & 33.78                              & 69.90                               & \textbf{38.83}                               & \textbf{42.31}                             & 35.44                               & 33.12                                 & 27.60                                 & 10.49                             & 26.70                             & \textbf{33.83}                    \\ \bottomrule
    \end{tabular}
    }
\end{table*}

\textbf{Model-Intermediate Results Interpretation.}
In Tab.~\ref{tab:medical_sft}, results on medical reasoning reveal a strikingly different pattern: the performance of $-p$ and $-\log p$ is nearly indistinguishable, with differences well within statistical variation.  
This neutrality arises from the nature of intermediate priors. 
On one hand, the priors are not strong enough for the prior-leaning objective $-p$ to yield consistent refinements; on the other, they are not weak enough for the prior-averse objective $-\log p$ to offer a decisive corrective advantage. 
This observation is important because it indicates the existence of a region where gains are unlikely to come from altering the learning objective itself.  
Instead, improvements may rely on alternative directions, such as better data curation or targeted domain supervision.


\begin{table*}[h] 
    \caption{Main results in the Model Weak (MW) end. \colorbox{SoftGreen}{\(-\log(p)\)} consistently outperforms \colorbox{SoftBlue}{\(-p\)}. Best results are in bold.}
    \vspace{+0.5em}
    \label{tab:figfont}
    \centering 
    \resizebox{0.85\linewidth}{!}{ 
\begin{tabular}{lllllllllllll}
\toprule
                                    &  \multicolumn{3}{c}{\modelcell \textbf{LLaMA-3.2-3B}}                                    & \multicolumn{3}{c}{\modelcell \textbf{LLaMA-3.1-8B}}                                    & \multicolumn{3}{c}{\modelcell \textbf{Qwen2.5-1.5B}}                                    & \multicolumn{3}{c}{\modelcell \textbf{Qwen2.5-7B}}                                      \\ \midrule
\textbf{Metric}                  & \multicolumn{1}{c}{\baselinecell Base} & \multicolumn{1}{c}{\aversecell -log(p)} & \multicolumn{1}{c}{\leaningcell -p} & \multicolumn{1}{c}{\baselinecell Base} & \multicolumn{1}{c}{\aversecell -log(p)} & \multicolumn{1}{c}{\leaningcell -p} & \multicolumn{1}{c}{\baselinecell Base} & \multicolumn{1}{c}{\aversecell -log(p)} & \multicolumn{1}{c}{\leaningcell -p} & \multicolumn{1}{c}{\baselinecell Base} & \multicolumn{1}{c}{\aversecell -log(p)} & \multicolumn{1}{c}{\leaningcell -p} \\ \midrule
\textbf{Exact Match}             & \baselinecell 0.00                     & \aversecell \textbf{1.08}             & \leaningcell 0.00                  & \baselinecell 0.00                     & \aversecell \textbf{1.34}             & \leaningcell 0.00                  & \baselinecell 0.00                     & \aversecell \textbf{0.60}             & \leaningcell 0.0                  & \baselinecell 0.00                     & \aversecell \textbf{35.20}            & \leaningcell 0.00                  \\
\textbf{Jaro-Winkler Similarity} & \baselinecell 41.89                    & \aversecell \textbf{44.39}            & \leaningcell 2.43                  & \baselinecell 30.17                    & \aversecell \textbf{43.59}            & \leaningcell 10.15                 & \baselinecell 35.32                    & \aversecell \textbf{32.98}            & \leaningcell 8.36                 & \baselinecell 44.92                    & \aversecell \textbf{82.48}            & \leaningcell 10.15                 \\ \bottomrule
\end{tabular} 
}
\end{table*}

\textbf{Model-Weak Results Interpretation.} 
Tab.~\ref{tab:figfont} reveals the opposite trend at the MW end: $-\log p$ consistently outperforms $-p$, often by substantial margins. 
When priors are poorly aligned with the ground truth, the prior-leaning objective $-p$ allocates disproportionate weight to unreliable high-probability tokens, thereby reinforcing errors.  
By contrast, the prior-averse $-\log p$ ensures that low-probability tokens, which often correspond to mistakes, receive stronger gradient signals, forcing the model to correct its errors and spread learning more broadly across the output distribution.  
This explains why NLL, despite its shortcomings elsewhere, remains the most effective objective in weak-prior settings.  
This MW pattern is also reproduced outside figfont puzzles: in low-resource multilingual instruction tuning, $-\log p$ consistently outperforms $-p$ for both LLaMA-3.1-8B and Qwen2.5-7B (Appen.~\ref{appen:additional_model_weak}).
Consequently, progress on MW tasks is more likely to come from stronger or more targeted supervision or other methods of injecting knowledge.
We provide further empirical analysis in Sec.~\ref{sec:empirical_analysis} with a more careful study of the pattern.

\subsection{Experiments on General Instruction Tuning}
\label{subsec:general_instruction_tuning}

To further demonstrate the breadth of our continuum view, we run additional general instruction-tuning experiments comparing the two objectives, $-p$ and $-\log p$. Experimental details and the full results are deferred to Appen.~\ref{appen:general_instruction_tuning}. 
We fine-tune three Qwen2.5 base models spanning 3B to 14B parameters, and evaluate the resulting models trained under each objective. Figure~\ref{fig:general_instruction_tuning} reports the head-to-head win rate between the two final models at each scale. The results mirror the same continuum behavior observed elsewhere: as model scale increases, the preferred objective shifts across the MS, MI, and MW regimes, supporting the generality of our claim beyond previous settings.

\textbf{Fixed-domain capability sweep.}
Here the data mixture, evaluation protocol, and objective comparison are held fixed, while only the base-model scale changes. 
The smooth transition from an NLL-favoring smaller model to a $-p$-favoring larger model therefore shows that the continuum behavior also appears within a single broad instruction-tuning setting.

\begin{figure}[h]
    \centering
    \includegraphics[width=0.76\linewidth]{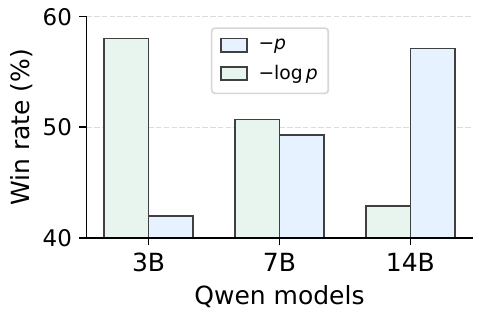}
    \vspace{-0.4em}
    \caption{Head-to-head win rate of the \colorbox{SoftBlue}{\(-p\)}-trained model against the \colorbox{SoftGreen}{\(-\log(p)\)}-trained model on AlpacaEval2. As backbone size increases from 3B to 14B, the prior-leaning objective becomes more effective.
    }
    \label{fig:general_instruction_tuning}
\end{figure}

\subsection{Additional Experiments}

\textbf{Generalization across scales and domains.}
The appendix experiments further show that the observed reversal is not limited to the three anchor domains. 
On the fixed math domain, Appen.~\ref{appen:additional_model_strong} shows that the performance gap between $-p$ and $-\log p$ widens as the Qwen2.5 backbone scales from 3B to 32B. 
Across domains, it also reports coding results where $-p$ again outperforms $-\log p$, while Appen.~\ref{appen:additional_model_weak} reports low-resource multilingual instruction tuning results where $-\log p$ consistently outperforms $-p$. 
Together with the main experiments above, these results strongly support a capability-continuum interpretation. 

\section{Empirical Analysis}
\label{sec:empirical_analysis}

In this section, we provide a deeper empirical analysis of the findings in Sec.~\ref{sec:main_exp}, 
with a particular emphasis on the MS and MW ends where the choice of training objective has the largest effect. 
Our goal is to move beyond merely reporting performance numbers and to analyze the mechanisms that drive the observed differences.  
To this end, we structure the analysis around three guiding questions:  

\begin{enumerate}[noitemsep,topsep=0pt]
    \item In the MS end, what mechanisms explain the underperformance of NLL?  
    \item How do objectives with different emphasis on model priors behave across the two ends? 
    \item To what extent are these objectives consistent with likelihood estimation on the training set?  
\end{enumerate}


Answering these questions provides a deeper understanding of how different objectives interact with model capability from complementary perspectives.

\textbf{Model Setup.}  
For ablation studies in the MS end, we focus on Qwen-2.5-Math-1.5B, which shows the clearest gap between objectives.  
For the MW end, we use Qwen-2.5-7B.  
All training details and evaluation protocols remain identical to those in Sec.~\ref{sec:main_exp},  
ensuring that differences arise solely from the choice of objective.  

\subsection{Ablation on Quantile Thresholding with Different Objectives}

\begin{figure}[h]
    \centering
    \includegraphics[width=\linewidth]{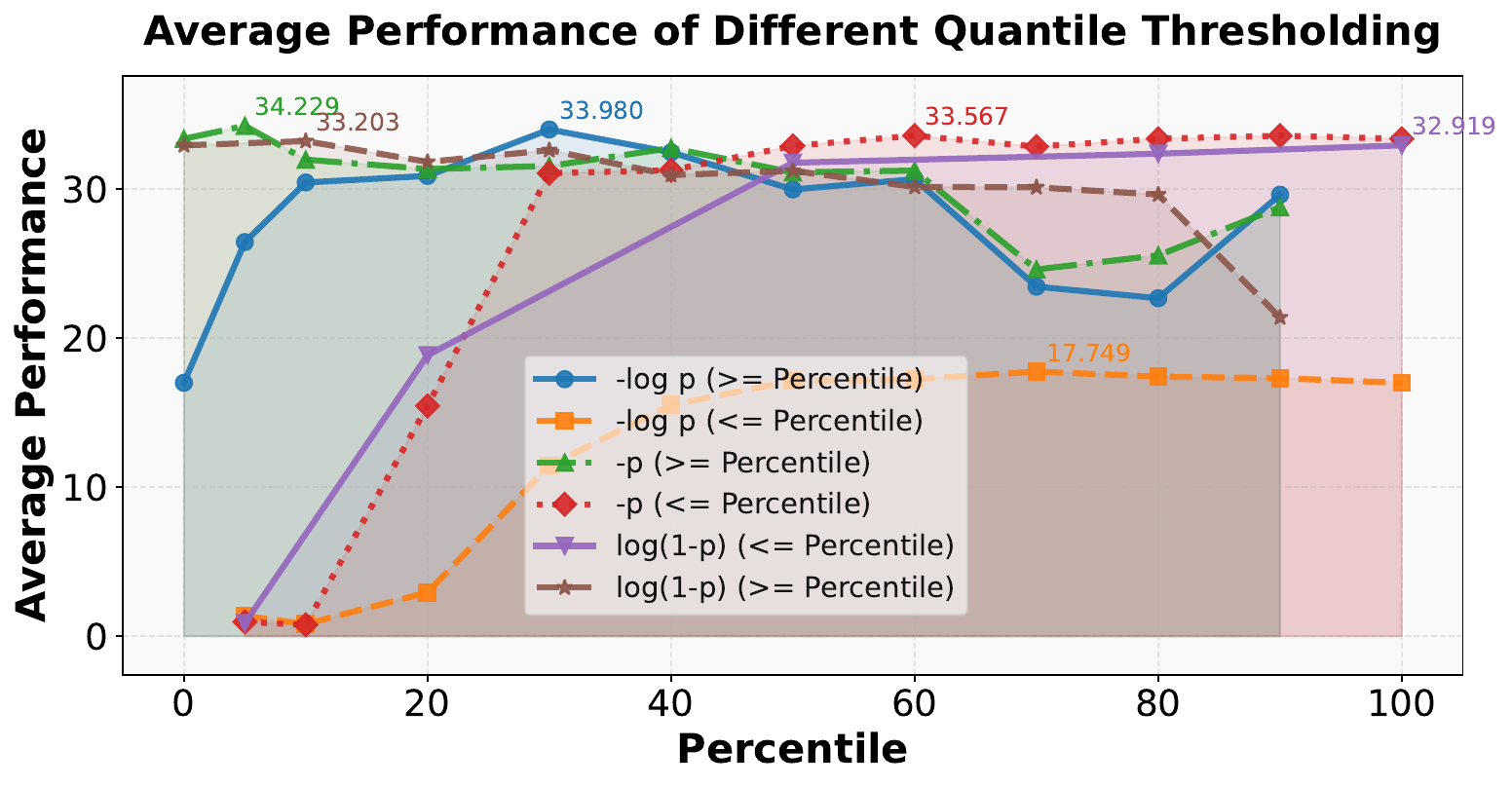}
    \caption{\textbf{Performance under quantile thresholding} for $-\log(p)$, $-p$, and $\log(1-p)$.   
    Let $Q_{\text{percentile}}$ denote the predicted probability at the specified percentile of the training set.  
    (\(\geq\) Percentile) corresponds to $I = [Q_{\text{percentile}}, 1]$ in Eq.~\ref{eq:hard_threshold},  
    while (\(\leq\) Percentile) corresponds to $I = [0, Q_{\text{percentile}}]$.
    Key findings:  
    (1) low-probability tokens consistently harm performance across all objectives;  
    (2) when training on all tokens, objectives that de-emphasize low-probability tokens ($-p$ and $\log(1-p)$) outperform $-\log(p)$;  
    (3) restricting training to only the top 10\% of tokens yields the strongest improvements across all objectives, surpassing standard SFT.
    }
    \label{fig:ablation_quantile}
\end{figure}

\textbf{Detailed Setup.}  
This ablation examines how restricting training to different quantiles of tokens affects the relative performance of objectives.  
We compare three instances of $f(p)$ in Eq.~\ref{eq:general_obj}: $-\log(p)$, $-p$, and $\log(1-p)$, which emphasize low-, mid-, and high-probability tokens, respectively (shown in Fig.~\ref{fig:logit_gradient}).  
All experiments are identical except for the subset of tokens selected by the quantile thresholding rule in Eq.~\ref{eq:hard_threshold}.  
Quantile thresholds are computed from the base model's predicted token probabilities prior to training.  
We apply both bottom thresholding and top thresholding, denoted by (\(\geq\) Percentile) and (\(\leq\) Percentile), respectively.  
Bottom thresholds vary from 5\% to 100\%, and top thresholds vary from 0\% to 90\%.  

\textbf{Results Interpretation.} 
The results in Fig.~\ref{fig:ablation_quantile} reveal consistent patterns that align with our main experiments in Sec.~\ref{sec:main_exp}.  
First, all objectives achieve strong performance when restricted to only the top 10\% tokens, significantly exceeding standard NLL on all tokens.  
Second, performance drops sharply when training on low-probability tokens, confirming that they contribute adversarially to learning.  
Third, when applying bottom-thresholding, $-p$ and $\log(1-p)$ consistently outperform $-\log(p)$, illustrating the benefits of objectives that de-emphasize unreliable tokens.  
Finally, the degradation of $\log(p)$ performance when trained on all tokens (blue curve) can be largely attributed to the bottom 10\% quantile.  
Overall, these results reinforce the main conclusion from Sec.~\ref{sec:main_exp}: in the MS end, \emph{low-probability tokens act primarily as noise to the strong model}.  

\begin{figure*}[t!]
    \centering
    \includegraphics[width=0.95\linewidth]{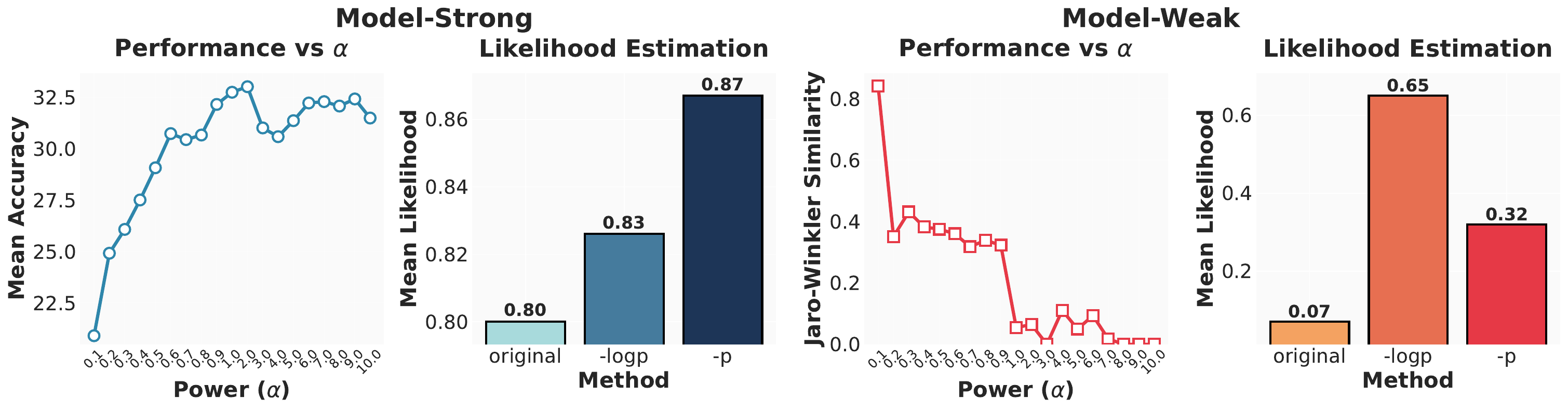}
    \caption{Analysis of MS and MW ends in terms of objective convexity (with Eq.~\ref{eq:example_alpha}) and likelihood estimation.  
    In MS, more concave (prior-leaning) objectives yield better downstream accuracy, while in MW, more convex (prior-averse) objectives dominate.  
    The likelihood estimation results align with these trends, suggesting that objective shape directly interacts with model prior strength.}
    \label{fig:ablation_convex_MLE}
\end{figure*}

\subsection{Objective Convexity and Performance Difference} 
\label{subsubsec:empirical_convex_concave}

\textbf{Detailed Setup.} 
To systematically examine the effect of objective on downstream performance, 
we study the parametric family in Eq.~\ref{eq:example_alpha}.  
This objective is concave when $\alpha \geq 1$ and convex when $\alpha \leq 1$. A ``more concave'' objective is more prior-leaning and vice versa, as shown in Fig.~\ref{fig:logit_gradient}. We leverage the convexity of this objective as a proxy for assessing prior-leaning versus prior-averse objectives. 
We vary $\alpha$ from $0.1$ to $1.0$ in increments of $0.1$, and from $1.0$ to $10.0$ in increments of $1.0$.

\textbf{Results Interpretation.}
As shown in Fig.~\ref{fig:ablation_convex_MLE}, convexity affects performance in opposite directions across the SFT continuum.  
In the MS end, accuracy improves as $\alpha$ increases, peaking near $\alpha = 1$ and remaining stable for larger values.  
In the MW end, performance is maximized at $\alpha = 0.1$ and deteriorates rapidly as $\alpha$ approaches $1$ and exceeds the convexity boundary.  
This dichotomy highlights the importance of aligning objective shape with model prior strength: concave objectives (that emphasize model priors) are more effective when priors are strong, while convex objectives (that de-emphasize model priors) are preferable when priors are weak.

\subsection{Likelihood Estimation on the Training Set}
\label{subsubsec:empirical_mle}

\textbf{Detailed Setup.}  
In this ablation, we evaluate the empirical training performance of different objectives by computing the average predicted likelihood on the training set before and after fine-tuning:
\begingroup
\setlength{\abovedisplayskip}{0.35em}
\setlength{\belowdisplayskip}{0.35em}
\begin{equation}
\label{eq:likelihood}
    \operatorname{Likelihood\ Estimation}(\theta) \coloneqq
    \frac{1}{N} \sum_{i=1}^{n} \sum_{j=1}^{\abs{\tilde{y}_i}}
    p_{\theta}(\tilde{y}_{i,j}).
\end{equation}
\endgroup

where $i$ denotes the $i$-th sample and $j$ denotes the $j$-th token, and $N = \sum_{i=1}^n \abs{\tilde{y}_i} $, the total number of training tokens.
We focus on comparing $-p$ and $-\log(p)$ in both the MS and MW ends.

\textbf{Results Interpretation.}  
The likelihood estimation results, shown in Fig.~\ref{fig:ablation_convex_MLE}, closely parallel the downstream accuracy trends.  
In the MS end, $-p$ achieves higher mean predicted probabilities, confirming that they better align with strong model priors and effectively capture the training distribution.  
In contrast, in the MW end, $-\log(p)$ yield higher training performance, reflecting their ability to correct misaligned priors by emphasizing low-probability tokens.  
These findings indicate that the interaction between objective shape and regime governs not only generalization performance but also the model's fit to the training data. 

\section{Theoretical Analysis}
\label{sec:theory}

\subsection{Setup}
\label{subsec:theory_setup_main}

\textbf{Data.}
Let the input prompt be \(x\in\mathcal{X}\).  
The \emph{true} conditional distribution over tokens \(y\in[V]\) is denoted by
\(r(y\mid x)\), with $y^* \sim r(\cdot \mid x)$.  
We write \(\mathcal{D}\) for the marginal distribution over pairs \(\bigl(x,r(\cdot\mid x)\bigr)\), and let \(T(\cdot\mid x)\) denote the empirical training distribution over contexts \(x\), which we abuse the notation for writing $(x, \tilde{y}) \sim T$. We use subscript $p_{(\cdot)}$ to denote model predictions $p (\cdot)$.

\textbf{Model and objectives.}
Let \(p_\theta(\cdot\mid x)=\mathrm{softmax}\bigl(z_\theta(x)\bigr)\) be the next-token distribution of an
autoregressive LM with parameters \(\theta\), and write \(p_0(\cdot\mid x)=p_{\theta_0}(\cdot\mid x)\) for the base model.
We define the \emph{population risk} to be
\[
\mathcal{R}(\theta)\;=\;\mathbb{E}_{(x,y^*)\sim\mathcal{D}, y^p \sim p_{\theta}(\cdot \mid x)}
\Bigl[- \, \! \mathbbm{1} \{y^* = y^p\}\Bigr],
\]

During SFT we minimize the empirical objective
\[
\mathcal{L}_f(\theta)
\;=\;\mathbb{E}_{(x,\tilde y)\sim T}\bigl[f\bigl(p_\theta(\tilde y\mid x)\bigr)\bigr]
\]
where \(f:[0,1]\to\mathbb{R}\) is differentiable and decreasing in $p$.
Our theoretical analysis mainly relies on the following assumption about the two ends of the continuum:

\begin{assumption}[Model-Capability Assumption]
\label{assump:model_capability}
   We make the following assumptions about model capability in the Model-Strong and Model-Weak ends:
   \begin{itemize}[leftmargin=2em, labelsep=0.5em]
       \item \textbf{Model-Weak.} In the MW end, we assume that model predictions are uniform over the vocabulary $V$, where $2/V<0.55$.
       
       \item \textbf{Model-Strong.} In the MS end, we assume that for any given $x$, $\Pr_{y^*, \tilde{y}} \left[(p_{y^*} + p_{\tilde{y}}) \geq 0.55 \right] \geq K$ with $K \geq 0.70$.
   \end{itemize}
\end{assumption}

\begin{assumption}[Trainable Base Model]
    \label{assump:trainable_base_model}
    We assume that the base model is still not perfect: for any given $x$, $\Pr \left[0.55 \leq (p_{y^*} + p_{\tilde{y}}) \leq 0.95 \right] \geq 1-K$ in the MS end.
\end{assumption}

\begin{remark}
   The MW assumption captures the essential condition of weakness by modeling the base as uninformative.  
   The MS assumption is grounded in practice: in Appen.~\ref{appen:theory_justification}, we empirically validate this.  
   For a uniform predictor, $p_{y^*}+p_{\tilde y}=2/V$, so $2/V<0.55$ ensures that the MW assumption does not satisfy the MS threshold. This condition is trivially true for LLM-scale vocabularies.
   Assumption~\ref{assump:trainable_base_model} is mild and simply guarantees that optimization is nontrivial.  We choose $1-K$ for simplicity of proof. 
\end{remark}

\subsection{Main Results}
\label{subsec:theory_main_results}

We analyze the optimization dynamics of different objectives under gradient flow.  
For an objective $f_i$, let $\dot{\theta}_t^{(i)} = - \nabla \mathcal{L}_{f_i}(\theta)$ denote the corresponding gradient flow,  
and let $\mathcal{R}(\theta_t^{(i)})$ be the population risk at time $t$.  
Our goal is to maximize the reduction in risk, as captured by $\dot{\mathcal{R}}(\theta_t^{(i)})$.

\begin{theorem}[Characterization via Gradient Flow, Informal]
\label{thm:main_thm}
Suppose that $f_2'(p) - f_1'(p) < 0$ for all $\tilde{p}$, and Assumptions~\ref{assump:model_capability}--\ref{assump:trainable_base_model} hold.  
Then, in a simplified setup, we have the following conclusions:

\begin{itemize}[leftmargin=2em, labelsep=0.5em]
    \item \( \dot{\Rc} (\theta_t^{(1)}) \big|_{t=0} \geq \dot{\Rc} (\theta_t^{(2)}) \big|_{t=0} \)
    in Model Strong End. 

    \item \(\dot{\Rc} (\theta_t^{(1)}) \big|_{t=0} \leq \dot{\Rc} (\theta_t^{(2)}) \big|_{t=0}\) 
    in Model Weak End. 
\end{itemize}
\end{theorem}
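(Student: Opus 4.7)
The plan is to write both instantaneous risk rates as an inner product of gradients evaluated at the base model and then reduce the desired inequality to a statement about the sign of a single scalar expectation, which the pointwise ordering $f_2'-f_1'<0$ multiplies with a geometric quantity controlled by Assumptions~\ref{assump:model_capability}--\ref{assump:trainable_base_model}. Concretely, starting from $\dot{\mathcal{R}}(\theta_t^{(i)})|_{t=0} = -\langle \nabla_\theta \mathcal{R}(\theta_0),\,\nabla_\theta \mathcal{L}_{f_i}(\theta_0)\rangle$ and pushing both gradients into logit space via $\nabla_z p_j = p_j(e_j-p)$ together with Lemma~\ref{lem:logit_grad}, the difference collapses, in the simplified orthogonal-per-context setup, to an expectation of the form
\[
\dot{\mathcal{R}}^{(2)}|_{t=0} - \dot{\mathcal{R}}^{(1)}|_{t=0}
\;\propto\;
\mathbb{E}_{x,y^*,\tilde y}\!\left[(f_2'-f_1')(p_{\tilde y})\,p_{y^*}p_{\tilde y}\,A(y^*,\tilde y,p)\right],
\]
where $A(y^*,\tilde y,p) \coloneqq \delta_{y^*\tilde y}-p_{y^*}-p_{\tilde y}+\|p\|_2^2$ is the logit-space inner product $(e_{y^*}-p)^\top(e_{\tilde y}-p)$. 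Since $(f_2'-f_1')<0$ pointwise and $p_{y^*}p_{\tilde y}\ge 0$, the sign of this difference is governed entirely by the weighted sign of $A$.

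I would then split the outer expectation along label agreement. On the diagonal $\{y^*=\tilde y\}$ one has $A = (1-p_{y^*})^2+\sum_{j\neq y^*}p_j^2 \ge 0$ always, so this slice contributes with a fixed sign in both regimes. On the off-diagonal $\{y^*\neq \tilde y\}$ one has $A = \|p\|_2^2 - p_{y^*} - p_{\tilde y}$, and here the two regimes cleanly diverge. For the Model-Weak end, Assumption~\ref{assump:model_capability} fixes $p\equiv (1/V)\mathbf{1}$, so the diagonal value $1-1/V$ dwarfs the off-diagonal value $-1/V$ for any reasonable joint law on $(y^*,\tilde y)$, and the sign of the difference is pinned to the diagonal, yielding the MW inequality. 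For the Model-Strong end, I would use $\|p\|_2^2 \le \max_j p_j$ together with a short case split on whether $\max_j p_j$ equals $p_{y^*}$, $p_{\tilde y}$, or some third coordinate to show that $p_{y^*}+p_{\tilde y}\ge 0.55$ forces $\|p\|_2^2 < p_{y^*}+p_{\tilde y}$, hence $A<0$ strictly on the off-diagonal; combined with the probability mass $K\ge 0.70$ supplied by Assumption~\ref{assump:model_capability} and the cap $p_{y^*}+p_{\tilde y}\le 0.95$ from Assumption~\ref{assump:trainable_base_model}, the off-diagonal mass flips the overall sign relative to the MW case.

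The main obstacle is the quantitative balancing in the MS regime: the diagonal slice still contributes with the MW-like sign, so I must show that the off-diagonal mass, weighted by $|f_2'-f_1'|$ and by $p_{y^*}p_{\tilde y}$, strictly outweighs the diagonal contribution. My plan is to obtain an explicit lower bound $|A|\ge 0.55-\max_j p_j$ on the off-diagonal, pair it with the probability budget $K$ from Assumption~\ref{assump:model_capability}, and absorb the diagonal contribution through the complementary $1-K$ together with the trivial $(1-p_{y^*})^2\le 1$; the numeric threshold $K\ge 0.70$ is then exactly what is needed for the comparison of $K\cdot(0.55-\max_j p_j)$-type terms against $(1-K)$-type terms to come out in favor of the off-diagonal slice. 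Everything else---linearity of expectation, pulling the scalar $(f_2'-f_1')$ through, and absorbing positive constants---is routine bookkeeping.
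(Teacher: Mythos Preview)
Your reduction is exactly the paper's: push both gradients into logit space, get the bilinear form $\mathbb{E}\bigl[(f_2'-f_1')(p_{\tilde y})\,p_{y^*}p_{\tilde y}\,A\bigr]$ with $A=\langle e_{y^*}-p,\,e_{\tilde y}-p\rangle$, split on $\{\tilde y=y^*\}$ versus $\{\tilde y\neq y^*\}$, and evaluate directly under the uniform law for the MW end. That part is fine and matches the paper.

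The MS argument, however, has a structural gap. You propose to balance the favorable off-diagonal slice against the unfavorable diagonal slice using the budget $K$ versus $1-K$. But $K$ is \emph{not} the probability of $\{\tilde y\neq y^*\}$; it is the probability (conditional on $x$) that $p_{y^*}+p_{\tilde y}\ge 0.55$. The diagonal/off-diagonal weights are governed by a separate label-noise parameter $\mathcal{E}:=\Pr[\tilde y\neq y^*]$, which does not appear anywhere in Assumptions~\ref{assump:model_capability}--\ref{assump:trainable_base_model}. If $\mathcal{E}$ is tiny (clean labels), the diagonal term---which always carries the MW sign---dominates no matter how strong the model is, and your inequality fails. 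The paper's formal statement therefore adds, beyond the hypotheses you listed, (i) quantitative control of $q_{\tilde y}(f_2'-f_1')(q_{\tilde y})$ on two sub-intervals via constants $c,d$ with $c<10d$, and (ii) an explicit lower bound on $\mathcal{E}$. The role of $K$ (together with Assumption~\ref{assump:trainable_base_model}) is only to certify that the off-diagonal conditional expectation is positive; it does not by itself outweigh the diagonal.

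A secondary issue: your proposed lower bound $|A|\ge 0.55-\max_j p_j$ is vacuous whenever $\max_j p_j\ge 0.55$, which is the typical situation in the MS regime. The paper instead bounds $|A|$ via the quadratic envelope $-q_{y^*}-q_{\tilde y}+\|q\|^2\le 1+2(q_{y^*}+q_{\tilde y})^2-3(q_{y^*}+q_{\tilde y})$ on the window $[0.55,0.95]$ supplied by Assumption~\ref{assump:trainable_base_model}, and pairs it with a sharp upper bound ($\le 0.00546$) on $q_{y^*}q_{\tilde y}(-q_{y^*}-q_{\tilde y}+\|q\|^2)$ in the complementary region. You would need both of these, together with the $c,d$ control on $f_2'-f_1'$, to make the off-diagonal positivity argument close.
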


\vspace{+0.1em}

\begin{remark}
   This theorem characterizes a sufficient condition for which the relative advantage of two objectives reverses across the MS and MW ends. 
   For example, by setting $f_1 (p) = 1 - p$ and $f_2 (q) = - \log p$,
   we can conclude that in the model-strong end, the prior-leaning $-p$ objective achieves larger risk reduction than NLL, whereas in the model-weak end, NLL is superior.  
   This reversal mirrors our empirical observations and highlights the central theme of this work: the effectiveness of an SFT objective depends critically on model capability. The full version of the theoretical analysis is available at Appen.~\ref{appen:main_theory}.
\end{remark}

\section{Discussion}
\label{sec:discussion}

In this work, we revisited supervised fine-tuning (SFT) objectives for large language model post-training and showed that negative log likelihood (NLL), while classically optimal from scratch, is not universally effective once models already encode priors and supervision is long and noisy. Our central contribution is the \emph{model-capability continuum}, instantiated through a general family of probability-based objectives, which shows that objective effectiveness depends critically on the prior strength of the base model. Across extensive analyses, we find that objectives reverse their relative advantage across different regions, yielding a unified explanation of how objective form interacts with model capability. Appen.~\ref{appen:discussion} further situates several well-known existing objectives in this lens: Focal loss is more prior-averse than NLL, Huber-style probability losses are prior-leaning, and KL-regularized or RL-style objectives favor high-probability behaviors.

For practitioners, the continuum gives a simple diagnostic for objective choice. If the target task is well represented in pretraining, prior-leaning objectives such as $-p$ or thresholded NLL are natural candidates. If the task is poorly covered and predictions are close to uninformative, NLL remains preferable because it gives strong corrective updates to low-probability tokens. This assessment can be guided by knowledge of the pretraining mixture when available, and more directly by measuring predicted probabilities on downstream examples. These observations also suggest that highly specialized models, such as strong mathematical reasoners, may benefit most from high-quality and relevant pretraining: once the prior is strong, lightweight SFT with an appropriate objective can be highly effective. Beyond SFT, prior-leaning objectives may also be useful during late-stage pretraining, where raw web-scale corpora contain substantial noise.

Our findings point to several natural future directions. First, adaptive objectives could adjust their prior-averse or prior-leaning behavior as training progresses, rather than using a fixed objective throughout SFT. Such objectives may be especially useful in the intermediate regime, where neither prior-leaning nor prior-averse objective is uniformly preferred, and the right emphasis may change as the model improves. Second, a more general theoretical treatment could relax the stylized assumptions used in our analysis and characterize a broader range of training dynamics across the continuum. Third, developing better quantitative measures of model capability is an important direction. Predicted probability is a useful and accessible proxy, but it does not always coincide with true capability, especially in settings with miscalibration or confident hallucinations. More refined uncertainty quantification and capability diagnostics could therefore make objective selection more reliable in practice.

\section*{Acknowledgement}

We sincerely thank Ziqi Wang for his constructive advice and valuable engagement throughout this project. His sharp suggestions, thoughtful feedback, and sustained discussions substantially shaped and improved this work.

This work is supported by NSF (2134079) and DARPA ITM Program No. FA8650-23-C-7316. 
The content of the information in this document does not necessarily reflect the position or the policy of the Government, and no official endorsement should be inferred.  The U.S. Government is authorized to reproduce and distribute reprints for Government purposes notwithstanding any copyright notation here on.
This research used both the DeltaAI advanced computing and data resource, which is supported by the National Science Foundation (award OAC 2320345) and the State of Illinois, and the Delta advanced computing and data resource which is supported by the National Science Foundation (award OAC 2005572) and the State of Illinois. Delta and DeltaAI are joint efforts of the University of Illinois Urbana-Champaign and its National Center for Supercomputing Applications. This work was supported by allocation CIS250611 from the Advanced Cyberinfrastructure Coordination Ecosystem: Services \& Support (ACCESS) program, which is supported by National Science Foundation grants \#2138259, \#2138286, \#2138307, \#2137603, and \#2138296.

\section*{Impact Statement}

This paper presents work whose goal is to advance the field of Machine
Learning. There are many potential societal consequences of our work, none of which we feel must be specifically highlighted here.


\bibliography{reference}

@article{chung2024scaling,
  title={Scaling instruction-finetuned language models},
  author={Chung, Hyung Won and Hou, Le and Longpre, Shayne and Zoph, Barret and Tay, Yi and Fedus, William and Li, Yunxuan and Wang, Xuezhi and Dehghani, Mostafa and Brahma, Siddhartha and others},
  journal={Journal of Machine Learning Research},
  volume={25},
  number={70},
  pages={1--53},
  year={2024}
}

@inproceedings{mihaylov2018can,
    title = "Can a Suit of Armor Conduct Electricity? A New Dataset for Open Book Question Answering",
    author = "Mihaylov, Todor  and
      Clark, Peter  and
      Khot, Tushar  and
      Sabharwal, Ashish",
    editor = "Riloff, Ellen  and
      Chiang, David  and
      Hockenmaier, Julia  and
      Tsujii, Jun{'}ichi",
    booktitle = "Proceedings of the 2018 Conference on Empirical Methods in Natural Language Processing",
    month = oct # "-" # nov,
    year = "2018",
    address = "Brussels, Belgium",
    publisher = "Association for Computational Linguistics",
    url = "https://aclanthology.org/D18-1260/",
    doi = "10.18653/v1/D18-1260",
    pages = "2381--2391",
}

@article{shao2024deepseekmath,
  title={Deepseekmath: Pushing the limits of mathematical reasoning in open language models},
  author={Shao, Zhihong and Wang, Peiyi and Zhu, Qihao and Xu, Runxin and Song, Junxiao and Bi, Xiao and Zhang, Haowei and Zhang, Mingchuan and Li, YK and Wu, Yang and others},
  journal={arXiv preprint arXiv:2402.03300},
  year={2024}
}

@inproceedings{lin2017focal,
  title={Focal loss for dense object detection},
  author={Lin, Tsung-Yi and Goyal, Priya and Girshick, Ross and He, Kaiming and Doll{\'a}r, Piotr},
  booktitle={Proceedings of the IEEE international conference on computer vision},
  pages={2980--2988},
  year={2017}
}

@inproceedings{
xu2024magpie,
title={Magpie: Alignment Data Synthesis from Scratch by Prompting Aligned {LLM}s with Nothing},
author={Zhangchen Xu and Fengqing Jiang and Luyao Niu and Yuntian Deng and Radha Poovendran and Yejin Choi and Bill Yuchen Lin},
booktitle={The Thirteenth International Conference on Learning Representations},
year={2025},
url={https://openreview.net/forum?id=Pnk7vMbznK}
}

@inproceedings{aghajanyan2021better,
title={Better Fine-Tuning by Reducing Representational Collapse},
author={Armen Aghajanyan and Akshat Shrivastava and Anchit Gupta and Naman Goyal and Luke Zettlemoyer and Sonal Gupta},
booktitle={International Conference on Learning Representations},
year={2021},
url={https://openreview.net/forum?id=OQ08SN70M1V}
}

@misc{eval-harness,
  author       = {Gao, Leo and Tow, Jonathan and Abbasi, Baber and Biderman, Stella and Black, Sid and DiPofi, Anthony and Foster, Charles and Golding, Laurence and Hsu, Jeffrey and Le Noac'h, Alain and Li, Haonan and McDonell, Kyle and Muennighoff, Niklas and Ociepa, Chris and Phang, Jason and Reynolds, Laria and Schoelkopf, Hailey and Skowron, Aviya and Sutawika, Lintang and Tang, Eric and Thite, Anish and Wang, Ben and Wang, Kevin and Zou, Andy},
  title        = {A framework for few-shot language model evaluation},
  month        = 12,
  year         = 2023,
  publisher    = {Zenodo},
  version      = {v0.4.0},
  doi          = {10.5281/zenodo.10256836},
  url          = {https://zenodo.org/records/10256836}
}

@incollection{huber1992robust,
  title={Robust estimation of a location parameter},
  author={Huber, Peter J},
  booktitle={Breakthroughs in statistics: Methodology and distribution},
  pages={492--518},
  year={1992},
  publisher={Springer}
}

@inproceedings{xu2024wizardlm,
  title={WizardLM: Empowering large pre-trained language models to follow complex instructions},
  author={Xu, Can and Sun, Qingfeng and Zheng, Kai and Geng, Xiubo and Zhao, Pu and Feng, Jiazhan and Tao, Chongyang and Lin, Qingwei and Jiang, Daxin},
  booktitle={The Twelfth International Conference on Learning Representations},
  year={2024}
}

@inproceedings{
li2024entropic,
title={Preserving Diversity in Supervised Fine-Tuning of Large Language Models},
author={Ziniu Li and Congliang Chen and Tian Xu and Zeyu Qin and Jiancong Xiao and Zhi-Quan Luo and Ruoyu Sun},
booktitle={The Thirteenth International Conference on Learning Representations},
year={2025},
url={https://openreview.net/forum?id=NQEe7B7bSw}
}

@inproceedings{milletari2016vnet,
  title={V-net: Fully convolutional neural networks for volumetric medical image segmentation},
  author={Milletari, Fausto and Navab, Nassir and Ahmadi, Seyed-Ahmad},
  booktitle={2016 fourth international conference on 3D vision (3DV)},
  pages={565--571},
  year={2016},
  organization={Ieee}
}

@inproceedings{rege-cambrin-etal-2024-beyond,
    title = "Beyond Accuracy Optimization: Computer Vision Losses for Large Language Model Fine-Tuning",
    author = "Rege Cambrin, Daniele  and
      Gallipoli, Giuseppe  and
      Benedetto, Irene  and
      Cagliero, Luca  and
      Garza, Paolo",
    editor = "Al-Onaizan, Yaser  and
      Bansal, Mohit  and
      Chen, Yun-Nung",
    booktitle = "Findings of the Association for Computational Linguistics: EMNLP 2024",
    month = nov,
    year = "2024",
    address = "Miami, Florida, USA",
    publisher = "Association for Computational Linguistics",
    url = "https://aclanthology.org/2024.findings-emnlp.704/",
    doi = "10.18653/v1/2024.findings-emnlp.704",
    pages = "12060--12079",
}

@InProceedings{wei2023magicoder,
  title = 	 {Magicoder: Empowering Code Generation with {OSS}-Instruct},
  author =       {Wei, Yuxiang and Wang, Zhe and Liu, Jiawei and Ding, Yifeng and Zhang, Lingming},
  booktitle = 	 {Proceedings of the 41st International Conference on Machine Learning},
  pages = 	 {52632--52657},
  year = 	 {2024},
  editor = 	 {Salakhutdinov, Ruslan and Kolter, Zico and Heller, Katherine and Weller, Adrian and Oliver, Nuria and Scarlett, Jonathan and Berkenkamp, Felix},
  volume = 	 {235},
  series = 	 {Proceedings of Machine Learning Research},
  month = 	 {21--27 Jul},
  publisher =    {PMLR},
  pdf = 	 {https://raw.githubusercontent.com/mlresearch/v235/main/assets/wei24h/wei24h.pdf},
  url = 	 {https://proceedings.mlr.press/v235/wei24h.html},
}

@article{Davis2025WhatIT,
  title={What is the objective of reasoning with reinforcement learning?},
  author={Damek Davis and Benjamin Recht},
  journal={ArXiv},
  year={2025},
  volume={abs/2510.13651},
  url={https://api.semanticscholar.org/CorpusID:282102854}
}

@article{liu2023your,
  title={Is your code generated by chatgpt really correct? rigorous evaluation of large language models for code generation},
  author={Liu, Jiawei and Xia, Chunqiu Steven and Wang, Yuyao and Zhang, Lingming},
  journal={Advances in Neural Information Processing Systems},
  volume={36},
  pages={21558--21572},
  year={2023}
}

@article{koksal2025muri,
  title={Muri: High-quality instruction tuning datasets for low-resource languages via reverse instructions},
  author={K{\"o}ksal, Abdullatif and Thaler, Marion and Imani, Ayyoob and {\"U}st{\"u}n, Ahmet and Korhonen, Anna and Sch{\"u}tze, Hinrich},
  journal={Transactions of the Association for Computational Linguistics},
  volume={13},
  pages={1032--1055},
  year={2025},
  publisher={MIT Press 255 Main Street, 9th Floor, Cambridge, Massachusetts 02142, USA~…}
}

@inproceedings{xuan2025mmlu,
    title = "{MMLU}-{P}ro{X}: A Multilingual Benchmark for Advanced Large Language Model Evaluation",
    author = "Xuan, Weihao  and
      Yang, Rui  and
      Qi, Heli  and
      Zeng, Qingcheng  and
      Xiao, Yunze  and
      Feng, Aosong  and
      Liu, Dairui  and
      Xing, Yun  and
      Wang, Junjue  and
      Gao, Fan  and
      Lu, Jinghui  and
      Jiang, Yuang  and
      Li, Huitao  and
      Li, Xin  and
      Yu, Kunyu  and
      Dong, Ruihai  and
      Gu, Shangding  and
      Li, Yuekang  and
      Xie, Xiaofei  and
      Juefei-Xu, Felix  and
      Khomh, Foutse  and
      Yoshie, Osamu  and
      Chen, Qingyu  and
      Teodoro, Douglas  and
      Liu, Nan  and
      Goebel, Randy  and
      Ma, Lei  and
      Marrese-Taylor, Edison  and
      Lu, Shijian  and
      Iwasawa, Yusuke  and
      Matsuo, Yutaka  and
      Li, Irene",
    editor = "Christodoulopoulos, Christos  and
      Chakraborty, Tanmoy  and
      Rose, Carolyn  and
      Peng, Violet",
    booktitle = "Proceedings of the 2025 Conference on Empirical Methods in Natural Language Processing",
    month = nov,
    year = "2025",
    address = "Suzhou, China",
    publisher = "Association for Computational Linguistics",
    url = "https://aclanthology.org/2025.emnlp-main.79/",
    doi = "10.18653/v1/2025.emnlp-main.79",
    pages = "1513--1532",
    ISBN = "979-8-89176-332-6",
}

@article{zhong2024opportunities,
  title={Opportunities and challenges of large language models for low-resource languages in humanities research},
  author={Zhong, Tianyang and Yang, Zhenyuan and Liu, Zhengliang and Zhang, Ruidong and Liu, Yiheng and Sun, Haiyang and Pan, Yi and Li, Yiwei and Zhou, Yifan and Jiang, Hanqi and others},
  journal={arXiv preprint arXiv:2412.04497},
  year={2024}
}

@inproceedings{zhu2023calibration,
    title = "On the Calibration of Large Language Models and Alignment",
    author = "Zhu, Chiwei  and
      Xu, Benfeng  and
      Wang, Quan  and
      Zhang, Yongdong  and
      Mao, Zhendong",
    editor = "Bouamor, Houda  and
      Pino, Juan  and
      Bali, Kalika",
    booktitle = "Findings of the Association for Computational Linguistics: EMNLP 2023",
    month = dec,
    year = "2023",
    address = "Singapore",
    publisher = "Association for Computational Linguistics",
    url = "https://aclanthology.org/2023.findings-emnlp.654/",
    doi = "10.18653/v1/2023.findings-emnlp.654",
    pages = "9778--9795",
    abstract = "As large language models attract increasing attention and find widespread application, concurrent challenges of reliability also arise at the same time. Confidence calibration, an effective analysis method for gauging the reliability of deep models, serves as a crucial tool for assessing and improving their reliability. However, such investigation has been comparatively underexplored. In this work, we conduct a systematic examination of the calibration of aligned language models throughout the entire construction process, including pretraining and alignment training. At each stage, we investigate how different training settings, such as parameter scales and training data, affect model calibration. To thoroughly assess model calibration, we evaluate models on three most concerned aspects: generation, factuality and understanding. Our work sheds light on whether popular LLMs are well-calibrated and how the training process influences model calibration."
}

@article{jin2021disease,
  title={What disease does this patient have? a large-scale open domain question answering dataset from medical exams},
  author={Jin, Di and Pan, Eileen and Oufattole, Nassim and Weng, Wei-Hung and Fang, Hanyi and Szolovits, Peter},
  journal={Applied Sciences},
  volume={11},
  number={14},
  pages={6421},
  year={2021},
  publisher={MDPI}
}

@inproceedings{
hendrycks2021measuring,
title={Measuring Mathematical Problem Solving With the {MATH} Dataset},
author={Dan Hendrycks and Collin Burns and Saurav Kadavath and Akul Arora and Steven Basart and Eric Tang and Dawn Song and Jacob Steinhardt},
booktitle={Thirty-fifth Conference on Neural Information Processing Systems Datasets and Benchmarks Track (Round 2)},
year={2021},
url={https://openreview.net/forum?id=7Bywt2mQsCe}
}

@article{lewkowycz2022solving,
  title={Solving quantitative reasoning problems with language models},
  author={Lewkowycz, Aitor and Andreassen, Anders and Dohan, David and Dyer, Ethan and Michalewski, Henryk and Ramasesh, Vinay and Slone, Ambrose and Anil, Cem and Schlag, Imanol and Gutman-Solo, Theo and others},
  journal={Advances in neural information processing systems},
  volume={35},
  pages={3843--3857},
  year={2022}
}

@misc{kaggle-aimo,
  title        = {AI Mathematical Olympiad Prize},
  author       = {{AI Mathematical Olympiad Prize}},
  howpublished = {\url{https://www.kaggle.com/competitions/ai-mathematical-olympiad-prize}},
  note         = {Accessed: 2025-09-24},
  year         = {2024}
}

@misc{maa-aime-thresholds,
  title        = {AIME Thresholds Are Available},
  author       = {{Mathematical Association of America}},
  howpublished = {\url{https://maa.org/aime-thresholds-are-available/}},
  note         = {Accessed: 2025-09-24},
  year         = {2024}
}

@misc{maa-math-competitions,
  title        = {Math Competitions},
  author       = {{Mathematical Association of America}},
  howpublished = {\url{https://maa.org/math-competitions}},
  note         = {Accessed: 2025-09-24},
  year         = {2023}
}

@inproceedings{
zuo2025medxpertqa,
title={MedXpert{QA}: Benchmarking Expert-Level Medical Reasoning and Understanding},
author={Yuxin Zuo and Shang Qu and Yifei Li and Zhang-Ren Chen and Xuekai Zhu and Ermo Hua and Kaiyan Zhang and Ning Ding and Bowen Zhou},
booktitle={Forty-second International Conference on Machine Learning},
year={2025},
url={https://openreview.net/forum?id=IyVcxU0RKI}
}

@inproceedings{chen2025benchmarking,
    title = "Benchmarking Large Language Models on Answering and Explaining Challenging Medical Questions",
    author = "Chen, Hanjie  and
      Fang, Zhouxiang  and
      Singla, Yash  and
      Dredze, Mark",
    editor = "Chiruzzo, Luis  and
      Ritter, Alan  and
      Wang, Lu",
    booktitle = "Proceedings of the 2025 Conference of the Nations of the Americas Chapter of the Association for Computational Linguistics: Human Language Technologies (Volume 1: Long Papers)",
    month = apr,
    year = "2025",
    address = "Albuquerque, New Mexico",
    publisher = "Association for Computational Linguistics",
    url = "https://aclanthology.org/2025.naacl-long.182/",
    doi = "10.18653/v1/2025.naacl-long.182",
    pages = "3563--3599",
    ISBN = "979-8-89176-189-6",
}

@inproceedings{jin2019pubmedqa,
    title = "{P}ub{M}ed{QA}: A Dataset for Biomedical Research Question Answering",
    author = "Jin, Qiao  and
      Dhingra, Bhuwan  and
      Liu, Zhengping  and
      Cohen, William  and
      Lu, Xinghua",
    editor = "Inui, Kentaro  and
      Jiang, Jing  and
      Ng, Vincent  and
      Wan, Xiaojun",
    booktitle = "Proceedings of the 2019 Conference on Empirical Methods in Natural Language Processing and the 9th International Joint Conference on Natural Language Processing (EMNLP-IJCNLP)",
    month = nov,
    year = "2019",
    address = "Hong Kong, China",
    publisher = "Association for Computational Linguistics",
    url = "https://aclanthology.org/D19-1259/",
    doi = "10.18653/v1/D19-1259",
    pages = "2567--2577",
}

@article{wang2024mmlu,
  title={Mmlu-pro: A more robust and challenging multi-task language understanding benchmark},
  author={Wang, Yubo and Ma, Xueguang and Zhang, Ge and Ni, Yuansheng and Chandra, Abhranil and Guo, Shiguang and Ren, Weiming and Arulraj, Aaran and He, Xuan and Jiang, Ziyan and others},
  journal={Advances in Neural Information Processing Systems},
  volume={37},
  pages={95266--95290},
  year={2024}
}

@inproceedings{rein2024gpqa,
  title={Gpqa: A graduate-level google-proof q\&a benchmark},
  author={Rein, David and Hou, Betty Li and Stickland, Asa Cooper and Petty, Jackson and Pang, Richard Yuanzhe and Dirani, Julien and Michael, Julian and Bowman, Samuel R},
  booktitle={First Conference on Language Modeling},
  year={2024}
}

@inproceedings{pal2022medmcqa,
  title={Medmcqa: A large-scale multi-subject multi-choice dataset for medical domain question answering},
  author={Pal, Ankit and Umapathi, Logesh Kumar and Sankarasubbu, Malaikannan},
  booktitle={Conference on health, inference, and learning},
  pages={248--260},
  year={2022},
  organization={PMLR}
}

@inproceedings{xie2024calibrating,
    title = "Calibrating Language Models with Adaptive Temperature Scaling",
    author = "Xie, Johnathan  and
      Chen, Annie S  and
      Lee, Yoonho  and
      Mitchell, Eric  and
      Finn, Chelsea",
    editor = "Al-Onaizan, Yaser  and
      Bansal, Mohit  and
      Chen, Yun-Nung",
    booktitle = "Proceedings of the 2024 Conference on Empirical Methods in Natural Language Processing",
    month = nov,
    year = "2024",
    address = "Miami, Florida, USA",
    publisher = "Association for Computational Linguistics",
    url = "https://aclanthology.org/2024.emnlp-main.1007/",
    doi = "10.18653/v1/2024.emnlp-main.1007",
    pages = "18128--18138",
}

@article{sheng2024hybridflow,
  title   = {HybridFlow: A Flexible and Efficient RLHF Framework},
  author  = {Guangming Sheng and Chi Zhang and Zilingfeng Ye and Xibin Wu and Wang Zhang and Ru Zhang and Yanghua Peng and Haibin Lin and Chuan Wu},
  year    = {2024},
  journal = {arXiv preprint arXiv: 2409.19256}
}

@article{zhang2023instruction,
author = {Zhang, Shengyu and Dong, Linfeng and Li, Xiaoya and Zhang, Sen and Sun, Xiaofei and Wang, Shuhe and Li, Jiwei and Hu, Runyi and Zhang, Tianwei and Wang, Guoyin and Wu, Fei},
title = {Instruction Tuning for Large Language Models: A Survey},
year = {2026},
issue_date = {May 2026},
publisher = {Association for Computing Machinery},
address = {New York, NY, USA},
volume = {58},
number = {7},
issn = {0360-0300},
url = {https://doi.org/10.1145/3777411},
doi = {10.1145/3777411},
abstract = {This article surveys research works in the quickly advancing field of instruction tuning (IT), a crucial technique to enhance the capabilities and controllability of large language models (LLMs). Instruction tuning refers to the process of further training LLMs on a dataset consisting of (instruction, output) pairs in a supervised fashion, which bridges the gap between the next-word prediction objective of LLMs and the users’ objective of having LLMs adhere to human instructions. In this work, we make a systematic review of the literature, including the general methodology of IT, the construction of IT datasets, the training of IT models, and applications to different modalities, domains and application, along with analysis of aspects that influence the outcome of IT (e.g., generation of instruction outputs, size of the instruction dataset). We also review the potential pitfalls of IT along with criticism against it, along with efforts pointing out current deficiencies of existing strategies and suggest some avenues for fruitful research.},
journal = {ACM Comput. Surv.},
month = jan,
articleno = {169},
numpages = {36},
keywords = {Large language model, instruction tuning, survey}
}

@inproceedings{mishra2022cross,
  title={Cross-Task Generalization via Natural Language Crowdsourcing Instructions},
  author={Mishra, Swaroop and Khashabi, Daniel and Baral, Chitta and Hajishirzi, Hannaneh},
  booktitle={Proceedings of the 60th Annual Meeting of the Association for Computational Linguistics (Volume 1: Long Papers)},
  pages={3470--3487},
  year={2022}
}

@article{zhou2023lima,
  title={Lima: Less is more for alignment},
  author={Zhou, Chunting and Liu, Pengfei and Xu, Puxin and Iyer, Srinivasan and Sun, Jiao and Mao, Yuning and Ma, Xuezhe and Efrat, Avia and Yu, Ping and Yu, Lili and others},
  journal={Advances in Neural Information Processing Systems},
  volume={36},
  pages={55006--55021},
  year={2023}
}

@misc{taori2023alpaca,
  author = {Rohan Taori and Ishaan Gulrajani and Tianyi Zhang and Yann Dubois and Xuechen Li and Carlos Guestrin and Percy Liang and Tatsunori B. Hashimoto },
  title = {Stanford Alpaca: An Instruction-following LLaMA model},
  year = {2023},
  publisher = {GitHub},
  journal = {GitHub repository},
  howpublished = {\url{https://github.com/tatsu-lab/stanford_alpaca}},
}

@inproceedings{
dubois2024length,
title={Length-Controlled AlpacaEval: A Simple Debiasing of Automatic Evaluators},
author={Yann Dubois and Percy Liang and Tatsunori Hashimoto},
booktitle={First Conference on Language Modeling},
year={2024},
url={https://openreview.net/forum?id=CybBmzWBX0}
}

@inproceedings{lightman2023let,
  title={Let's verify step by step},
  author={Lightman, Hunter and Kosaraju, Vineet and Burda, Yuri and Edwards, Harrison and Baker, Bowen and Lee, Teddy and Leike, Jan and Schulman, John and Sutskever, Ilya and Cobbe, Karl},
  booktitle={The Twelfth International Conference on Learning Representations},
  year={2023}
}

@article{ouyang2022training,
  title={Training language models to follow instructions with human feedback},
  author={Ouyang, Long and Wu, Jeffrey and Jiang, Xu and Almeida, Diogo and Wainwright, Carroll and Mishkin, Pamela and Zhang, Chong and Agarwal, Sandhini and Slama, Katarina and Ray, Alex and others},
  journal={Advances in neural information processing systems},
  volume={35},
  pages={27730--27744},
  year={2022}
}

@inproceedings{howard2018universal,
  title={Universal Language Model Fine-tuning for Text Classification},
  author={Howard, Jeremy and Ruder, Sebastian},
  booktitle={Proceedings of the 56th Annual Meeting of the Association for Computational Linguistics (Volume 1: Long Papers)},
  pages={328--339},
  year={2018}
}

@article{dodge2020fine,
  title={Fine-tuning pretrained language models: Weight initializations, data orders, and early stopping},
  author={Dodge, Jesse and Ilharco, Gabriel and Schwartz, Roy and Farhadi, Ali and Hajishirzi, Hannaneh and Smith, Noah},
  journal={arXiv preprint arXiv:2002.06305},
  year={2020}
}

@inproceedings{
chu2025sft,
title={{SFT} Memorizes, {RL} Generalizes: A Comparative Study of Foundation Model Post-training},
author={Tianzhe Chu and Yuexiang Zhai and Jihan Yang and Shengbang Tong and Saining Xie and Dale Schuurmans and Quoc V Le and Sergey Levine and Yi Ma},
booktitle={Forty-second International Conference on Machine Learning},
year={2025},
url={https://openreview.net/forum?id=dYur3yabMj}
}

@inproceedings{kirkunderstanding,
  title={Understanding the Effects of RLHF on LLM Generalisation and Diversity},
  author={Kirk, Robert and Mediratta, Ishita and Nalmpantis, Christoforos and Luketina, Jelena and Hambro, Eric and Grefenstette, Edward and Raileanu, Roberta},
  booktitle={The Twelfth International Conference on Learning Representations}
}

@article{bai2022training,
  title={Training a helpful and harmless assistant with reinforcement learning from human feedback},
  author={Bai, Yuntao and Jones, Andy and Ndousse, Kamal and Askell, Amanda and Chen, Anna and DasSarma, Nova and Drain, Dawn and Fort, Stanislav and Ganguli, Deep and Henighan, Tom and others},
  journal={arXiv preprint arXiv:2204.05862},
  year={2022}
}

@article{achiam2023gpt,
  title={Gpt-4 technical report},
  author={Achiam, Josh and Adler, Steven and Agarwal, Sandhini and Ahmad, Lama and Akkaya, Ilge and Aleman, Florencia Leoni and Almeida, Diogo and Altenschmidt, Janko and Altman, Sam and Anadkat, Shyamal and others},
  journal={arXiv preprint arXiv:2303.08774},
  year={2023}
}

@inproceedings{
liu2025uft,
title={{UFT}: Unifying Supervised and Reinforcement Fine-Tuning},
author={Mingyang Liu and Gabriele Farina and Asuman E. Ozdaglar},
booktitle={The Thirty-ninth Annual Conference on Neural Information Processing Systems},
year={2026},
url={https://openreview.net/forum?id=usOkGv1S7M}
}

@article{qin2025supervised,
  title={Supervised fine tuning on curated data is reinforcement learning (and can be improved)},
  author={Qin, Chongli and Springenberg, Jost Tobias},
  journal={arXiv preprint arXiv:2507.12856},
  year={2025}
}

@inproceedings{
wang2025implicit,
title={Implicit Reward as the Bridge: A Unified View of {SFT} and {DPO} Connections},
author={Bo Wang and Qinyuan Cheng and Runyu Peng and Rong Bao and Peiji Li and Qipeng Guo and Linyang Li and Zhiyuan Zeng and Yunhua Zhou and Xipeng Qiu},
booktitle={The Thirty-ninth Annual Conference on Neural Information Processing Systems},
year={2026},
url={https://openreview.net/forum?id=xUx2B2NHvj}
}

@inproceedings{
zhu2025proximal,
title={Proximal Supervised Fine-Tuning},
author={Wenhong Zhu and Ruobing Xie and Rui Wang and Xingwu Sun and Di Wang and Pengfei Liu},
booktitle={The Fourteenth International Conference on Learning Representations},
year={2026},
url={https://openreview.net/forum?id=hQtwQqYikp}
}

@inproceedings{
wu2025generalization,
title={On the Generalization of {SFT}: A Reinforcement Learning Perspective with Reward Rectification},
author={Yongliang Wu and Yizhou Zhou and Zhou Ziheng and Yingzhe Peng and Xinyu Ye and Xinting Hu and Wenbo Zhu and Lu Qi and Ming-Hsuan Yang and Xu Yang},
booktitle={The Fourteenth International Conference on Learning Representations},
year={2026},
url={https://openreview.net/forum?id=Lv7PjbcaMi}
}

@inproceedings{
zhang2025best,
title={The Best Instruction-Tuning Data are Those That Fit},
author={Dylan Zhang and Qirun Dai and Hao Peng},
booktitle={The Thirty-ninth Annual Conference on Neural Information Processing Systems},
year={2026},
url={https://openreview.net/forum?id=4jFSekBaDT}
}

@article{gneiting2007strictly,
  title={Strictly proper scoring rules, prediction, and estimation},
  author={Gneiting, Tilmann and Raftery, Adrian E},
  journal={Journal of the American statistical Association},
  volume={102},
  number={477},
  pages={359--378},
  year={2007},
  publisher={Taylor \& Francis}
}

@article{cox1958regression,
  title={The regression analysis of binary sequences},
  author={Cox, David R},
  journal={Journal of the Royal Statistical Society Series B: Statistical Methodology},
  volume={20},
  number={2},
  pages={215--232},
  year={1958},
  publisher={Oxford University Press}
}

@book{casella2024statistical,
  title={Statistical inference},
  author={Casella, George and Berger, Roger},
  year={2024},
  publisher={Chapman and Hall/CRC}
}

@book{cover1999elements,
  title={Elements of information theory},
  author={Cover, Thomas M},
  year={1999},
  publisher={John Wiley \& Sons}
}

@article{savage1971elicitation,
  title={Elicitation of personal probabilities and expectations},
  author={Savage, Leonard J},
  journal={Journal of the American Statistical Association},
  volume={66},
  number={336},
  pages={783--801},
  year={1971},
  publisher={Taylor \& Francis}
}

@article{bartlett2006convexity,
  title={Convexity, classification, and risk bounds},
  author={Bartlett, Peter L and Jordan, Michael I and McAuliffe, Jon D},
  journal={Journal of the American Statistical Association},
  volume={101},
  number={473},
  pages={138--156},
  year={2006},
  publisher={Taylor \& Francis}
}

@article{zhang2004statistical,
  title={Statistical analysis of some multi-category large margin classification methods},
  author={Zhang, Tong},
  journal={Journal of Machine Learning Research},
  volume={5},
  number={Oct},
  pages={1225--1251},
  year={2004}
}

@misc{numina_math_datasets,
  author = {Jia LI and Edward Beeching and Lewis Tunstall and Ben Lipkin and Roman Soletskyi and Shengyi Costa Huang and Kashif Rasul and Longhui Yu and Albert Jiang and Ziju Shen and Zihan Qin and Bin Dong and Li Zhou and Yann Fleureau and Guillaume Lample and Stanislas Polu},
  title = {NuminaMath},
  year = {2024},
  publisher = {Numina},
  journal = {Hugging Face repository},
  howpublished = {\url{[https://huggingface.co/AI-MO/NuminaMath-CoT](https://github.com/project-numina/aimo-progress-prize/blob/main/report/numina_dataset.pdf)}}
}

@InProceedings{huang2025m1,
  title = 	 {m1: Unleash the Potential of Test-Time Scaling for Medical Reasoning with Large Language Models},
  author =       {Huang, Xiaoke and Wu, Juncheng and Liu, Hui and Tang, Xianfeng and Zhou, Yuyin},
  booktitle = 	 {Proceedings of the Fifth Machine Learning for Health Symposium},
  pages = 	 {369--383},
  year = 	 {2026},
  editor = 	 {Argaw, Peniel and Zhang, Haoran and Jabbour, Sarah and Chandak, Payal and Ji, Jerry and Mukherjee, Sumit and Salaudeen, Olawale and Chang, Trenton and Healey, Elizabeth and Gröger, Fabian and Adibi, Amin and Hegselmann, Stefan and Wild, Benjamin and Noori, Ayush},
  volume = 	 {297},
  series = 	 {Proceedings of Machine Learning Research},
  month = 	 {13--14 Dec},
  publisher =    {PMLR},
  pdf = 	 {https://raw.githubusercontent.com/mlresearch/v297/main/assets/huang26a/huang26a.pdf},
  url = 	 {https://proceedings.mlr.press/v297/huang26a.html},
}

@misc{stojanovski2025reasoninggymreasoningenvironments,
      title={REASONING GYM: Reasoning Environments for Reinforcement Learning with Verifiable Rewards},
      author={Zafir Stojanovski and Oliver Stanley and Joe Sharratt and Richard Jones and Abdulhakeem Adefioye and Jean Kaddour and Andreas Köpf},
      year={2025},
      eprint={2505.24760},
      archivePrefix={arXiv},
      primaryClass={cs.LG},
      url={https://arxiv.org/abs/2505.24760},
}

@article{grattafiori2024llama,
  title={The llama 3 herd of models},
  author={Grattafiori, Aaron and Dubey, Abhimanyu and Jauhri, Abhinav and Pandey, Abhinav and Kadian, Abhishek and Al-Dahle, Ahmad and Letman, Aiesha and Mathur, Akhil and Schelten, Alan and Vaughan, Alex and others},
  journal={arXiv preprint arXiv:2407.21783},
  year={2024}
}
\bibliographystyle{icml2026}

\newpage
\addtocontents{toc}{\protect\setcounter{tocdepth}{2}}

\clearpage
\appendix
\onecolumn
\tableofcontents

\newpage 

\section{Related Works}
\label{sec:related_work}

\textbf{Language Model Post-training.} 
Supervised Fine-Tuning (SFT) has emerged as the dominant paradigm for post-training, adapting pretrained models to tasks or domains by directly fitting labeled data~\citep{zhang2023instruction,chung2024scaling}. 
The availability of high-quality instruction datasets~\citep{mishra2022cross,zhou2023lima,taori2023alpaca,lightman2023let} has further boosted SFT's effectiveness.
Nevertheless, abundament studies highlight that SFT alone often overfits, generalizes poorly, and yields sub-optimal models~\citep{howard2018universal,dodge2020fine,ouyang2022training}.
To address these limitations while retaining SFT's efficiency, the prevailing recipe is to combine SFT with RL, forming the de facto post-training paradigm~\citep{bai2022training,achiam2023gpt,kirkunderstanding,chu2025sft,liu2025uft}.
Yet, existing SFT post-training consistently minimizes the negative log-likelihood objective, \(-\log(p)\), whose suitability has rarely been questioned. In this work, we show that 
it is not universally optimal and argue for revisiting objectives that better exploit pretrained priors in SFT.

\textbf{Improving SFT (from an RL perspective).} 
Motivated by the success of reinforcement learning in reasoning tasks, a growing body of work seeks to reinterpret and improve SFT through an RL lens.
\citet{wang2025implicit} cast both SFT and DPO as instances of implicit reward learning, showing that smaller learning rates and alternative divergence-based objectives can enhance performance.
\citet{qin2025supervised} integrates importance sampling into SFT, while \citet{zhu2025proximal} introduces a PPO-style clipped surrogate objective to constrain policy drift.
\citet{aghajanyan2021better} introduces an explicit KL regularizer during fine-tuning, which can also be viewed as a prior-leaning mechanism that discourages unnecessary deviation from the pretrained model.
Most closely related to our work, \citet{wu2025generalization} proposes reweighting gradient coefficients uniformly, essentially equivalent to our \(-p\) objective, for which we provide a deeper characterization and analysis. 
Overall, these approaches can be regarded as special cases of our proposed ``prior-leaning'' objectives, implemented through RL techniques to downweight low-probability tokens.  
In contrast, we show that the same effect can be achieved far more simply by applying a threshold.  
Moreover, these RL-inspired methods are only validated in a single domain, whereas we demonstrate the potential limitations of prior-leaning objectives in the model-weak end.  
Other than RL-inspired approaches, \citet{zhang2025best} further explore data selection by favoring high-probability instances, a weaker form of our token-wise thresholding objective.

\textbf{Classical views on SFT learning objectives.}
In the conventional view of classification, the NLL has long been regarded as the optimal training objective:
it is the maximum likelihood estimator (statistical consistency)~\citep{cox1958regression,casella2024statistical}, equivalent 
to minimizing cross-entropy/KL-divergence (information-theoretic)~\citep{cover1999elements}, 
the unique strictly proper local scoring rule ensuring calibrated probabilities (decision-theoretic)~\citep{savage1971elicitation,gneiting2007strictly},
and a convex surrogate to 0-1 loss guaranteeing Bayes consistency and tractable optimization (learning-theoretic)~\citep{bartlett2006convexity,zhang2004statistical}.
These arguments, however, assume training from scratch on simple classification tasks, 
whereas SFT in language model post-training starts from powerful pretrained models with long chain-of-thought supervision 
where only final answers are evaluated and intermediate tokens may be noisy.
Under these conditions, the premises for \(-\log(p)\) might no longer hold, and in this work, we provide the first systematic characterization of such settings.
We provide a detailed discussion with other loss functions in Appen.~\ref{appen:other_loss}.

\section{Detailed Experimental Setup}
\label{appen:exp_setup}

\begin{table}[h!]
    \centering
    \caption{General experimental setup across different regions of the model-capability continuum.}
    \label{tab:setup_appen}
    \vspace{+0.5em}
        \resizebox{\linewidth}{!}{
\begin{tabular}{@{}lccccc@{}}
\toprule
\multicolumn{1}{c}{\textbf{Continuum}} & \textbf{Domain}   & \textbf{Signals} & \textbf{Training Data} & \textbf{Evaluation Data}                                                                                            & \textbf{Objectives to Compare} \\ \midrule
MS                                   & math-reasoning    & sparse           & NuminaMath CoT         & Math500, Minerva Math, Olympiad Bench, AIME24, AMC23                                                                & -p, -log(p), threshold(-log(p))   \\
MI                                   & medical-reasoning & sparse           & m23k                    & \begin{tabular}[c]{@{}c@{}}MedMC, MedQA, PubMed, MMLU-P, GPQA, \\ Lancet, MedB(4), MedB(5), MedX, NEJM\end{tabular} & -p, -log(p)                      \\
MW                                   & text games        & dense            & synthetic              & synthetic                                                                                                           & -p, -log(p)                      \\ \bottomrule
\end{tabular}
        }
\end{table}

We now provide details of our experimental setup, including the rationale for the choice of datasets across the continuum, the corresponding training and evaluation benchmarks, and specific training protocols. An overview is summarized in Tab.~\ref{tab:setup_appen}.

\subsection{Continuum Selection}
\label{appen:continum_selection}

\begin{table}[h!]
\centering
     \caption{Continuum selection based on mean predicted probability (Eq.~\ref{eq:likelihood}).  
    In the MS end, base models already achieve high likelihood on the training set before fine-tuning;  
    in the MI region, predictions are around 0.5; in the MW end, predictions are near zero.}
    \label{tab:continumm_selection}
\resizebox{\linewidth}{!}{
\begin{tabular}{@{}lcccc@{}}
\toprule
\multicolumn{5}{c}{\textbf{Model Strong (Math)}}                                                                  \\ \midrule
\textbf{Mean Predicted Probability} & 0.80         & 0.76            & 0.80              & 0.81            \\
\textbf{Model Name}                 & LLaMA-3.1-8B & DeepSeekMath-7B & Qwen2.5-Math-1.5B & Qwen2.5-Math-7B \\ \midrule
\multicolumn{5}{c}{\textbf{Model Intermediate (Med)}}                                                            \\ \midrule
\textbf{Mean Predicted Probability} &   0.50           &     0.53            &     0.56              &    0.59             \\
\textbf{Model Name}                 & LLaMA-3.2-3B & LLaMA-3.1-8B    & Qwen2.5-1.5B      & Qwen2.5-Math-7B \\ \midrule
\multicolumn{5}{c}{\textbf{Model Weak (Puzzles)}}                                                                    \\ \midrule
\textbf{Mean Predicted Probability} & 0.01         & 0.01            & 0.01              & 0.07            \\
\textbf{Model Name}                 & LLaMA-3.2-3B & LLaMA-3.1-8B    & Qwen2.5-1.5B      & Qwen2.5-7B      \\ \bottomrule
\end{tabular}
}
\end{table}

We assign math tasks to the MS end, medical tasks to the MI region, and figfont puzzles to the MW end.  
For the MS end, we use LLaMA-3.1-8B, DeepSeekMath-7B, Qwen2.5-Math-1.5B, and Qwen2.5-Math-7B.  
For the MI region, we use LLaMA-3.2-3B, LLaMA-3.1-8B, Qwen2.5-1.5B, and Qwen2.5-Math-7B.  
For the MW end, we use LLaMA-3.2-3B, LLaMA-3.1-8B, Qwen2.5-1.5B, and Qwen2.5-7B.  
We rely on base models in all cases.

Our rationale for this selection is twofold.  

\textbf{First, evidence from pretraining corpora.}  
Fig.~\ref{fig:figure1} illustrates that some domains are strongly represented in pretraining while others are not.  
For example, open-sourced documentation of LLaMA-3 reports that $\sim$25\% of pretraining tokens are math-related~\citep{grattafiori2024llama},  
indicating strong priors for math reasoning. Similarly, DeepSeekMath and Qwen2.5-Math were explicitly pretrained on math corpora.  
By contrast, medical corpora are only partially present in pretraining, yielding moderate priors,  
and figfont puzzles are completely absent, making them a natural MW task. 

\textbf{Second, quantitative evidence from model predictions.}  
Tab.~\ref{tab:continumm_selection} shows mean predicted probabilities (Eq.~\ref{eq:likelihood}) on the training set,  
which we use as a proxy for prior strength given that base LLMs are generally well-calibrated and their predictions more faithfully reflect inherent model capability~\citep{zhu2023calibration,xie2024calibrating} .  
In the MS end, models already achieve very high likelihoods (around 0.8) before fine-tuning.  
In the MW end, predictions are close to zero, reflecting a lack of relevant prior knowledge.  
In between, predictions cluster around 0.5, reflecting an intermediate level of task familiarity.  
Together, these observations justify our continuum classification and ground it in both qualitative and quantitative evidence.

\subsection{Training and Evaluation Details}
\label{appen:training_details}

\textbf{General framework.} All SFT experiments are conducted using \texttt{verl}~\citep{sheng2024hybridflow}.  
We fix the optimizer to AdamW, with a base learning rate of $5\times10^{-5}$ for all models except LLaMA-3.1-8B, where we use $2\times10^{-5}$.  
We employ cosine decay scheduling with a warm-up ratio of 0.1, and train for a single epoch.  
All training runs are performed on 2 H200 GPUs with a single node. We have initially tuned the learning rate over $\{1e-3, 5e-4, 1e-4, 1e-5, 2e-5, 5e-5, 1e-6, 5e-6\}$ with one single model on each setting and do not observe notable performance changes when the learning rate is around $5e-4$ to $1e-6$ with the objectives we studied in this paper. Therefore, we generally fix the learning rate to ensure a fair and computationally efficient comparison.

\textbf{Model-Strong (Math).} 
Our setup for mathematical reasoning largely follows~\citet{wu2025generalization}.  
We train on NuminaMath-CoT~\citep{numina_math_datasets}, which contains 859k chain-of-thought problems collected from multiple sources.  
For efficiency, we sample a 67k subset, which we find to achieve equivalent performance to larger subsets (100k+ or more).  
We set the maximum training length to 3072 tokens and use a micro-batch size of 4.  
Evaluation covers five representative math benchmarks: Math500~\citep{hendrycks2021measuring}, Minerva Math~\citep{lewkowycz2022solving}, Olympiad Bench~\citep{kaggle-aimo}, AIME24~\citep{maa-aime-thresholds}, and AMC23~\citep{maa-math-competitions}.  
Each evaluation uses temperature 1.0, with results reported as the average of 16 generations per example and a maximum generation length of 4096 tokens.

\textbf{Model-Intermediate (Medical).} 
We train on m23k~\citep{huang2025m1}, a 23k-instance medical reasoning dataset.  
We experimented with two variants: (i) including long-form reasoning traces (maximum length 8192, micro-batch size 1) and (ii) using only standard chain-of-thought (maximum length 1024, micro-batch size 16).  
Since performance was similar, we report results from the standard CoT variant.  
Evaluation strictly follows the protocol in~\citet{huang2025m1}, using temperature 0 and random seed 42.  
Benchmarks include MedMCQA~\citep{pal2022medmcqa}, MedQA-USMLE~\citep{jin2021disease}, PubMedQA~\citep{jin2019pubmedqa}, MMLU-Pro~\citep{wang2024mmlu}, GPQA (Medical)~\citep{rein2024gpqa}, Lancet \& NEJM~\citep{huang2025m1}, MedBullets~\citep{chen2025benchmarking}, and MedXpertQA~\citep{zuo2025medxpertqa}.  
A detailed overview of these datasets is provided in~\citet{huang2025m1}.

\textbf{Model-Weak (Figfont).} We generate synthetic figfont puzzles from ReasoningGym~\citep{stojanovski2025reasoninggymreasoningenvironments}. 
We generate synthetic figfont puzzle data from ReasoningGym~\citep{stojanovski2025reasoninggymreasoningenvironments}, creating 40k instances for training and 20k for evaluation.  
An example puzzle is shown in Fig.~\ref{fig:figure1}.  
Training mirrors the MI setup, with a maximum sequence length of 800 and a micro-batch size of 16.  
Inference uses temperature 0 and random seed 42.  
We evaluate with two metrics: (i) exact match and (ii) Jaro–Winkler similarity, a string-based similarity score that is more tolerant to small variations and complements the strictness of exact match.  
\section{Additional Experiment Results}


\subsection{Justification for Assumptions}
\label{appen:theory_justification}

\begin{table}[h!]
\centering
     \caption{Fraction of training-set tokens whose initial predicted probability exceeds 0.55 in the MS end. The high fractions indicate that the base models already assign substantial probability mass to training targets before fine-tuning, supporting Assump.~\ref{assump:model_capability}.}
    \label{tab:assumption_justification}
\resizebox{\linewidth}{!}{
\begin{tabular}{@{}lllll@{}}
\toprule
                                                                                                                    & \textbf{LLaMA-3.1-8B}      & \textbf{DeepSeekMath-7B} & \textbf{Qwen2.5-Math-1.5B} & \textbf{Qwen2.5-Math-7B}   \\ \midrule
\begin{tabular}[c]{@{}l@{}}Percentage of tokens with initial \\ predicted probability larger than 0.55\end{tabular} & \multicolumn{1}{c}{72.8\%} & \multicolumn{1}{c}{76.7\%}     & \multicolumn{1}{c}{80.6\%} & \multicolumn{1}{c}{81.2\%} \\ \bottomrule
\end{tabular}
}
\end{table}

Tab.~\ref{tab:assumption_justification} reports this quantity across the MS backbones used in our experiments.

\subsection{General Instruction Tuning Experiments}
\label{appen:general_instruction_tuning}
To demonstrate that our continuum extends beyond specialized domains (e.g., math or medical QA), we additionally consider general conversational alignment, where the goal is to match subtle human preferences and stylistic behaviors.
We construct a mixed SFT corpus from two well-known, high-quality public datasets: Magpie-Prob-300K-Filtered~\citep{xu2024magpie} and EvoInstruct~\citep{xu2024wizardlm}, which together target complex instruction following abilities.
For efficiency, we subsample 70k instances from the Magpie-Prob-300K-Filtered dataset and we retain only instances whose total sequence length is at most 2048 tokens.
This yields a combined training set of approximately \emph{140K} examples.

We consider three backbones, Qwen2.5-3B, Qwen2.5-7B, and Qwen2.5-14B, and fine-tune each for one epoch with batch sizes of 8, 4, and 2 and learning rates of 5e-5, 2e-6, and 1e-6, respectively.
We then compare models trained with the standard NLL objective $-\log p$ versus the prior-leaning objective $-p$ to probe the model-capability continuum in this more general alignment setting.
For evaluation, due to limited API budget, we use AlpacaEval2~\citep{dubois2024length} and directly compare responses from the $-p$ model versus the $-\log p$ model.
In the table below, we report the win rate of the prior-leaning model against the NLL baseline.

\begin{table}[h]
\caption{Win rates (\%) of the prior-leaning objective $-p$ against the NLL baseline $-\log p$ on AlpacaEval2.
We report both length-controlled win rate (LC WR) and standard win rate (WR), measuring the fraction of pairwise comparisons where the $-p$ model is preferred over the $-\log p$ model.
As the backbone scales from 3B to 14B parameters, the prior-leaning objective transitions from underperforming to outperforming NLL, consistent with our model-capability continuum.}
    \vspace{+0.5em}
    \label{tab:alignment}
    \centering 
    \resizebox{0.5\linewidth}{!}{ 
\begin{tabular}{@{}lccc@{}}
\toprule
      & \multicolumn{1}{l}{Qwen2.5-3B} & \multicolumn{1}{l}{Qwen2.5-7B} & \multicolumn{1}{l}{Qwen2.5-14B} \\ \midrule
LC WR & 41.0                           & 49.6                           & 57.5                            \\
WR    & 42.0                           & 49.0                           & 57.1                            \\ \bottomrule
\end{tabular}
}
\end{table}

In Table~\ref{tab:alignment}, we observe a clear continuum-style pattern as the model becomes stronger.
For the smaller Qwen2.5-3B backbone, the prior-leaning objective $-p$ underperforms NLL, achieving only about $41$--$42\%$ win rate, indicating that aggressively trusting the model’s prior is detrimental when the backbone is relatively weak.
For the intermediate Qwen2.5-7B model, the win rates are close to $50\%$, suggesting that the two objectives behave similarly in this regime.
In contrast, for the larger Qwen2.5-14B backbone, the same prior-leaning objective attains around $57$--$58\%$ win rate over NLL on AlpacaEval2, suggesting that once the model’s prior is sufficiently strong, emphasizing the model’s prior beliefs becomes beneficial even on broad alignment tasks.

\subsection{Additional Model Strong Experiments}
\label{appen:additional_model_strong}

\textbf{Additional Model-Strong: Coding Task.}
To further illustrate the generality of the model-strong end of our continuum, we examine the coding domain using the Qwen2.5-Coder-7B backbone, which is known to contain substantial coding-related knowledge.
For evaluation, we adopt the EvalPlus suite~\citep{liu2023your} and consider four standard benchmarks: HumanEval, HumanEval+, MBPP, and MBPP+ (all reported as pass@1 following \citet{liu2023your}).
For training, we use Magicoder-OSS-Instruct-75K~\citep{wei2023magicoder}, a high-quality instruction-tuning corpus tailored specifically for coding tasks, and fine-tune the model for one epoch.


Tab.~\ref{tab:coder} shows that, in this coding-heavy, model-strong setting, the prior-leaning objective $-p$ consistently outperforms NLL across all four benchmarks.
Relative to the base model, $-p$ delivers large absolute gains (e.g., from $63.6$ to $77.8$ average pass@1).
On MBPP and MBPP+, the $-p$ model nearly matches the performance of Qwen2.5-Coder-7B-Instruct (83.5 and 71.7, respectively), despite using only 75K training examples.
This provides additional evidence that in domains where the backbone already encodes rich task-relevant knowledge, moving toward the model-strong end with a prior-leaning objective can yield substantial gains.

\begin{table}[h]
\caption{Model-strong experiments on coding tasks with Qwen2.5-Coder-7B on the EvalPlus benchmarks (pass@1).
The prior-leaning objective $-p$ improves over the NLL objective $-\log p$ and substantially boosts performance over the base model.
On MBPP and MBPP+, the $-p$ model approaches the performance of the much more heavily tuned Qwen2.5-Coder-7B-Instruct variant, despite using only 75K instruction-tuning examples.}
    \vspace{+0.5em}
    \label{tab:coder}
    \centering 
    \resizebox{0.45\linewidth}{!}{ 
\begin{tabular}{@{}lccccc@{}}
\toprule
\multicolumn{1}{c|}{\textbf{Case}} & \multicolumn{2}{c|}{\textbf{HumanEval}}            & \multicolumn{2}{c|}{\textbf{MBPP}}                            & \textbf{Avg.}        \\ \midrule
\multicolumn{1}{l|}{}              & \textit{HE}   & \multicolumn{1}{c|}{\textit{HE+}}  & \multicolumn{1}{l}{MBPP} & \multicolumn{1}{l|}{MBPP+}         & \multicolumn{1}{l}{} \\ \midrule
\multicolumn{6}{c}{\textbf{Qwen2.5-Coder-7B}}                                                                                                                                  \\ \midrule
\multicolumn{1}{l|}{Base}          & 61.6          & \multicolumn{1}{c|}{53.0}          & 76.9                     & \multicolumn{1}{c|}{62.9}          & 63.6                 \\
\multicolumn{1}{l|}{-log(p)}       & 80.4          & \multicolumn{1}{c|}{71.3}          & 78.8                     & \multicolumn{1}{c|}{68.3}          & 74.7                 \\
\multicolumn{1}{l|}{-p}            & \textbf{83.5} & \multicolumn{1}{c|}{\textbf{75.6}} & \textbf{83.1}            & \multicolumn{1}{c|}{\textbf{69.0}} & \textbf{77.8}        \\ \bottomrule
\end{tabular}
}
\end{table}

\textbf{Performance versus Model Scale in the Model-Strong End.}
We also study how the benefit of the prior-leaning objective $-p$ evolves with model scale in the model-strong end.
Concretely, we evaluate Qwen2.5 models at four sizes (3B, 7B, 14B, and 32B parameters), training and evaluating them on the same mathematical tasks and configurations as in the main paper.
For the larger models (14B and 32B), we use slightly smaller learning rates of $2\times 10^{-5}$ and $10^{-5}$, respectively, to ensure stable optimization.

Fig.~\ref{fig:model_scale} plots the performance of $-p$ versus NLL with both axes in log scale.
As model size increases, the performance gap in favor of the prior-leaning objective widens monotonically: the gains are modest at 3B, more pronounced at 7B, and largest at 14B and 32B.
This trend is consistent with our continuum view: as models become more capable and their pretrained priors more informative, they can exploit prior-leaning objectives more effectively, leading to larger improvements in the model-strong end.

\begin{figure}[h]
    \centering
    \includegraphics[width=0.5\linewidth]{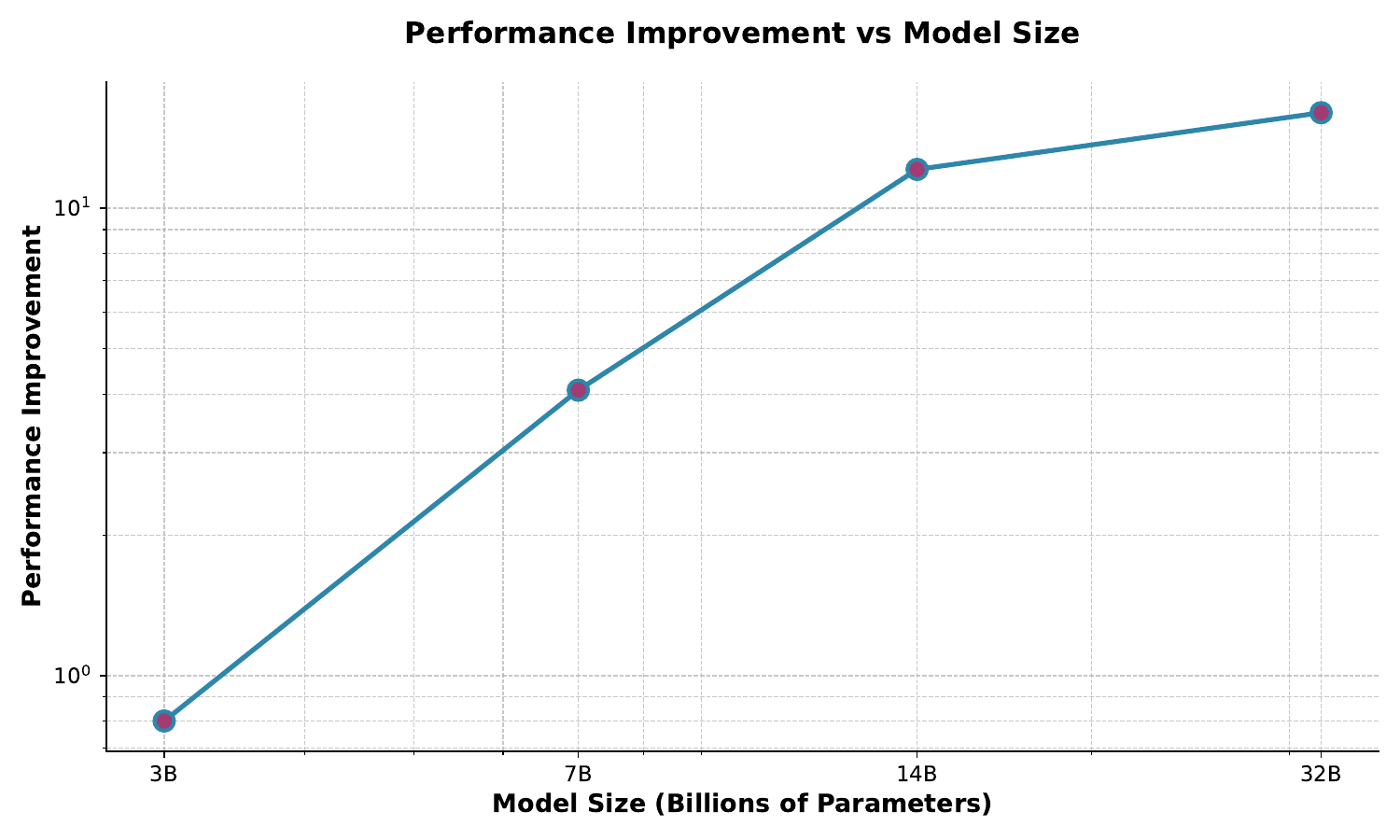}
    \caption{Performance improvement of the objective $-p$ over NLL across model scales. Both axes use log scale. Larger models exhibit larger gains, showing that the prior-leaning objective is increasingly effective as the underlying model becomes stronger.}
    \label{fig:model_scale}
\end{figure}

\subsection{Additional Model Weak Experiments}
\label{appen:additional_model_weak}

\textbf{Low-resource Language Instruction Tuning.} We study instruction tuning in low-resource language settings, which provide natural domains where existing language models remain very weak~\citep{zhong2024opportunities}.
For evaluation, we use MMLU-ProX~\citep{xuan2025mmlu}, a challenging question answering benchmark adapted from MMLU-Pro that covers a wide range of languages and assesses general knowledge and reasoning abilities.
Among these, we identify seven low-resource languages for which corresponding instruction-tuning data exist: Marathi (MR), Telugu (TE), Nepali (NE), Swahili (SW), Wolof (WO), Yoruba (YO), and Zulu (ZU).

For training, we use the \texttt{muri-it} dataset~\citep{koksal2025muri}, a 2M-example instruction-tuning corpus that spans nearly all languages.
We extract the subset corresponding to the seven evaluation languages, resulting in a total of \emph{95K} training instances.
We evaluate two backbones, LLaMA-3.1-8B and Qwen2.5-7B, both of which exhibit very limited pretraining exposure to these languages and have extremely low zero-shot performance.
Each model is trained for one epoch using the same configuration described earlier in the paper.

In terms of our continuum, these settings are \emph{model-weak}.
Tab.~\ref{tab:low_resource_language} presents the results and indeed confirms this.
In every low-resource evaluation language and for both backbones, the standard NLL objective substantially outperforms the prior-leaning objective $-p$.
This demonstrates that the model-weak end extends far beyond the puzzle cases discussed in the main paper and highlights low-resource multilingual instruction tuning as a natural and practically important instance of this end.

\begin{table}[h]
\caption{Additional benchmark results for the model-weak end. Low-resource languages provide a natural setting where pretrained models perform poorly. Best results are in bold.}
    \vspace{+0.5em}
    \label{tab:low_resource_language}
    \centering 
    \resizebox{\linewidth}{!}{ 
\begin{tabular}{lcccccccc}
\hline
\multicolumn{1}{c}{\textbf{Case}} & \textbf{Marathi (MR)} & \textbf{Telugu (TE)} & \textbf{Nepali (NE)} & \textbf{Swahili (SW)} & \textbf{Wolof (WO)} & \textbf{Yoruba (YO)} & \textbf{Zulu (ZU)} & \textbf{Avg.} \\ \hline
\multicolumn{9}{c}{\textbf{LLaMA-3.1-8B}}                                                                                                                                                                         \\ \hline
Base                              & 11.1                  & 5.0                  & 12.3                 & 4.1                   & 3.7                 & 7.4                  & 6.0                & 7.1           \\
-p                                & 15.2                  & 2.5                  & 15.5                 & 14.0                  & 11.5                & 10.0                 & 11.5               & 11.5          \\
-log(p)                           & \textbf{18.7}         & \textbf{13.3}        & \textbf{18.7}        & \textbf{18.4}         & \textbf{13.9}       & \textbf{13.6}        & \textbf{15.9}      & \textbf{16.1} \\ \hline
\multicolumn{9}{c}{\textbf{Qwen2.5-7B}}                                                                                                                                                                           \\ \hline
Base                              & 8.7                   & 2.4                  & 9.4                  & 8.3                   & 9.9                 & 7.9                  & 7.2                & 7.7           \\
-p                                & 20.6                  & 7.7                  & 19.5                 & 14.9                  & 12.9                & 10.5                 & 6.9                & 13.3          \\
-log(p)                           & \textbf{24.8}         & \textbf{9.1}         & \textbf{24.4}        & \textbf{19.9}         & \textbf{13.1}       & \textbf{12.6}        & \textbf{9.6}       & \textbf{16.2} \\ \hline
\end{tabular}
}
\end{table}

\subsection{Additional Knowledge Memorization Experiments}
\label{appen:knowledge_memo}

In this subsection, we demonstrate that our continuum view also applies to a classical knowledge memorization task.
We use OpenBookQA~\citep{mihaylov2018can} to study commonsense and factual question answering.
The dataset contains 5K training samples and 500 test samples.
We fine-tune Qwen2.5-14B and Qwen2.5-7B, both of which achieve over 80\% zero-shot accuracy on this benchmark, for one epoch using the same configurations described earlier in the paper.

Importantly, the labels in this setting are few, unambiguous, and essentially noise-free. This places the task in the regime where classical supervised classification applies and where NLL is expected to excel. In contrast, our motivation for prior-leaning objectives comes from the observation that typical reasoning SFT datasets contain imperfect and potentially noisy demonstrations. To emulate the model strong end under such conditions, we perturb the training set with small amounts of label noise and train under the same protocol. This serves as a proxy for \emph{pretraining style noise}, where incorrect or ambiguous labels are common.

Tab.~\ref{tab:knowledge_memorization} reports the results.
In the clean, conventional-classification-like setting, NLL indeed performs best, fully consistent with our framework.
However, once noise is introduced, analogous to imperfections in pretraining corpora and reasoning SFT datasets, the prior-leaning objective $-p$ remains robust while NLL collapses.
Moreover, as the model scale increases (14B compared to 7B), the performance gap between prior-leaning and prior-averse objectives in the clean setting becomes smaller.
These findings illustrate that the continuum we identify extends beyond long-form reasoning and also governs knowledge memorization under varying supervision quality.

\begin{table}[h]
\caption{Additional benchmark results on knowledge memorization (OpenBookQA). The \emph{clean} setting uses the original labels and reflects standard classification with noise-free supervision. The \emph{noisy} setting adds small label perturbations to emulate pretraining-style noise. In the clean setting, NLL performs best, whereas under noisy supervision, the prior-leaning objective $-p$ is substantially more robust, illustrating our continuum beyond reasoning tasks.}
    \vspace{+0.5em}
    \label{tab:knowledge_memorization}
    \centering 
    \resizebox{0.32\linewidth}{!}{ 
\begin{tabular}{@{}lcc@{}}
\toprule
\textbf{Cases} & \multicolumn{2}{c}{\textbf{OpenBookQA}}                            \\ \midrule
               & \multicolumn{1}{l}{Clean (MW)} & \multicolumn{1}{l}{Noisy (MS)} \\ \midrule
\multicolumn{3}{c}{\textbf{Qwen2.5-14B}}                                            \\ \midrule
Base           & \multicolumn{2}{c}{82.2}                                           \\
-log(p)        & \textbf{95.0}                     & 27.0                           \\
-p             & 93.4                              & \textbf{91.6}                  \\ \midrule
\multicolumn{3}{c}{\textbf{Qwen2.5-7B}}                                             \\ \midrule
Base           & \multicolumn{2}{c}{81.0}                                           \\
-log(p)        & \textbf{91.0}                     & 27.2                           \\
-p             & 86.2                              & \textbf{85.0}                  \\ \bottomrule
\end{tabular}
}
\end{table}

\section{Proofs for Sec.~\ref{sec:prelim}}
\label{appen:theory}

\begin{lemma}[Gradient Shape]
Let \(f:[0,1]\to\mathbb{R}\) be differentiable and nonincreasing.  
Consider the objective in Eq.~\ref{eq:general_obj}, whose step-\(t\) contribution depends on the correct-class probability \(p_{t,y}=\softmax(z_t)_y\) only through \(f(p_{t,y})\). Then the gradient of \(\mathcal{L}_f\) with respect to the logits at step \(t\) satisfies
\[
\frac{\partial \mathcal{L}_f}{\partial z_{t,i}}
\;=\;
s_f\!\left(p_{t,y}\right)\,\bigl(\delta_{i,y} - p_{t,i}\bigr),
\qquad
\text{where}\quad
s_f(p)\;\triangleq\; -\,f'(p)\,p\;\ge 0 .
\]
In particular, for the correct class \(i=y\),
\[
\frac{\partial \mathcal{L}_f}{\partial z_{t,y}}
= s_f(p_{t,y})\,(1-p_{t,y})
\;=\;
W_f(p_{t,y}),
\qquad
W_f(p)\;\triangleq\; -\,f'(p)\,p\,(1-p).
\]
\end{lemma}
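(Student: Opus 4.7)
The proof is essentially a chain-rule calculation combined with the standard softmax Jacobian; the real content is bookkeeping and sign tracking. I would proceed in three conceptual steps.

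First, I would reduce to a single-step pointwise derivative. Since $\mathcal{L}_f$ is an expectation of a sum over decoding steps and only the term at step $t$ depends on the logit vector $z_t$ (treating $y_{<t}$ as fixed under teacher forcing), the expectation over $(x,\tilde y)\sim T$ and the summation over $t'\ne t$ commute with $\partial/\partial z_{t,i}$, so it suffices to compute $\partial f(p_{t,y})/\partial z_{t,i}$ for a single realization.

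Second, I would apply the chain rule and the standard softmax Jacobian. Writing $p_{t,y}=e^{z_{t,y}}/\sum_{k}e^{z_{t,k}}$ and separating the cases $i=y$ and $i\ne y$ yields the familiar identity
\begin{equation*}
\frac{\partial p_{t,y}}{\partial z_{t,i}} \;=\; p_{t,y}\,\bigl(\delta_{i,y}-p_{t,i}\bigr).
\end{equation*}
Composing with $f$ gives $\partial f(p_{t,y})/\partial z_{t,i}=f'(p_{t,y})\,p_{t,y}\,(\delta_{i,y}-p_{t,i})$. Absorbing the leading scalar into $s_f(p)\triangleq -f'(p)\,p$ (with the sign convention adopted in the statement) recovers the claimed formula, and the nonnegativity $s_f\ge 0$ is immediate from $f'\le 0$ (since $f$ is nonincreasing) together with $p\in[0,1]$. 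Specializing to $i=y$ collapses $(\delta_{i,y}-p_{t,i})$ to $(1-p_{t,y})$ and produces $W_f(p_{t,y})=s_f(p_{t,y})(1-p_{t,y})$.

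There is no substantive obstacle here; the proof is a textbook exercise once the softmax Jacobian is recalled. The only minor care needed is notational: making sure that $y_{<t}$ is held fixed when differentiating at step $t$, that the expectation over $(x,\tilde y)\sim T$ passes through the derivative (justified by the finite-sum structure of the objective and the assumed differentiability of $f$), and that the sign convention in the definition of $s_f$ is tracked consistently through the chain rule so that the resulting $W_f$ is manifestly nonnegative on $[0,1]$.
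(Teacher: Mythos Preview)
Your proposal is correct and follows essentially the same route as the paper: compute the softmax Jacobian $\partial p_{t,y}/\partial z_{t,i}=p_{t,y}(\delta_{i,y}-p_{t,i})$, apply the chain rule through $f$, absorb the scalar into $s_f(p)=-f'(p)p$, and specialize to $i=y$. The only difference is that you are slightly more explicit about the reduction to a single decoding step and the passage of the derivative through the expectation, which the paper leaves implicit.
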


\begin{proof}
Write \(p_t=\softmax(z_t)\), so \(p_{t,i}=\exp(z_{t,i})/\sum_j \exp(z_{t,j})\).  
The softmax Jacobian gives, for all \(i\),
\[
\frac{\partial p_{t,y}}{\partial z_{t,i}}
= p_{t,y}\,(\delta_{i,y}-p_{t,i}).
\]
Since the step-\(t\) loss is \(f(p_{t,y})\), the chain rule yields
\[
\frac{\partial \mathcal{L}_f}{\partial z_{t,i}}
= f'(p_{t,y})\;\frac{\partial p_{t,y}}{\partial z_{t,i}}
= f'(p_{t,y})\,p_{t,y}\,(\delta_{i,y}-p_{t,i})
= \bigl(-f'(p_{t,y})\,p_{t,y}\bigr)\,(\delta_{i,y}-p_{t,i}).
\]
Define \(s_f(p)=-f'(p)\,p\). Because \(f\) is nonincreasing, \(f'(p)\le 0\) on \((0,1)\), hence \(s_f(p)\ge 0\).  
The displayed formula then follows, and for \(i=y\) we obtain
\(
\frac{\partial \mathcal{L}_f}{\partial z_{t,y}}
= s_f(p_{t,y})(1-p_{t,y})
= -f'(p_{t,y})\,p_{t,y}(1-p_{t,y})
= W_f(p_{t,y}).
\)
\end{proof}

\begin{proposition}[Convex versus Concave Objectives]
Let \(f\in C^2[0,1]\) with \(f'(p)<0\) for all \(p\in(0,1)\), and define \(W_f(p)=-f'(p)\,p(1-p)\).  
If \(f\) is concave (\(f''\le 0\)), then any maximizer of \(W_f\) lies in \([\tfrac12,1]\).  
If \(f\) is convex (\(f''\ge 0\)), then any maximizer of \(W_f\) lies in \([0,\tfrac12]\).
\end{proposition}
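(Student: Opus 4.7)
\section*{Proof plan for Proposition~\ref{prop:inter_convex_concave}}

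The plan is to analyze the sign of the derivative $W_f'(p)$ on each half of the interval $[0,1]$ and exploit the fact that $W_f$ vanishes at both endpoints. Since $f\in C^2[0,1]$, both $f'$ and $f''$ are continuous and bounded on $[0,1]$, so $W_f$ is $C^1$ and $W_f(0)=W_f(1)=0$ because the factor $p(1-p)$ kills the boundary.

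First I would differentiate $W_f$ using the product rule:
\begin{equation*}
W_f'(p) \;=\; -f''(p)\,p(1-p)\;+\;\bigl(-f'(p)\bigr)(1-2p).
\end{equation*}
The hypothesis $f'(p)<0$ on $(0,1)$ gives $-f'(p)>0$, and trivially $p(1-p)\ge 0$. So the signs of the two summands are controlled entirely by $-f''(p)$ and by $1-2p$.

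Next I would handle the concave case. Here $f''\le 0$, so $-f''(p)\,p(1-p)\ge 0$ on all of $[0,1]$. On the subinterval $[0,\tfrac12]$ we also have $1-2p\ge 0$, and combined with $-f'(p)>0$ this makes the second summand nonnegative. Therefore $W_f'(p)\ge 0$ on $[0,\tfrac12]$, which means $W_f$ is nondecreasing there. Combined with $W_f(0)=0$, the maximum of $W_f$ over $[0,\tfrac12]$ is attained at $\tfrac12$, so any global maximizer must lie in $[\tfrac12,1]$. The convex case is symmetric: $f''\ge 0$ makes the first summand nonpositive everywhere, and on $[\tfrac12,1]$ the factor $1-2p\le 0$ makes the second summand nonpositive as well, so $W_f'\le 0$ on $[\tfrac12,1]$, forcing any global maximizer into $[0,\tfrac12]$.

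I do not expect a real obstacle here since both halves of the argument follow from inspecting three signs. The only subtlety worth flagging is whether the endpoints $0$ and $1$ can themselves be maximizers; since $W_f(0)=W_f(1)=0$ and $W_f(p)>0$ on $(0,1)$ (because $-f'(p)>0$ and $p(1-p)>0$ there), the maximizers are in fact interior, so the stated closed intervals $[\tfrac12,1]$ and $[0,\tfrac12]$ are safe and could even be sharpened to $(\tfrac12,1)$ and $(0,\tfrac12)$ when $f''$ is strictly signed.
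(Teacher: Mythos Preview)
Your proposal is correct and follows essentially the same route as the paper's proof: compute $W_f'(p)=-f''(p)\,p(1-p)+(-f'(p))(1-2p)$, read off the signs of the two summands on each half of $[0,1]$, and conclude monotonicity rules out maximizers on the wrong side. The paper introduces the shorthand $s(p)=-f'(p)$ and phrases the monotonicity as \emph{strictly} increasing on $(0,\tfrac12)$ (respectively strictly decreasing on $(\tfrac12,1)$), which you also have from your own ingredients since $-f'(p)>0$ and $1-2p>0$ there; tightening your ``nondecreasing'' to ``strictly increasing on $(0,\tfrac12)$'' closes the only cosmetic gap and matches the paper exactly.
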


\begin{proof}
Set \(s(p)\coloneqq -f'(p)\). Then \(s(p)>0\) on \((0,1)\) by the hypothesis \(f'(p)<0\), and
\[
W_f(p)=s(p)\,p(1-p).
\]
Differentiate:
\[
W_f'(p)=s'(p)\,p(1-p)+s(p)\,(1-2p).
\]
\emph{Concave case.} If \(f''\le 0\) on \([0,1]\), then \(s'(p)=-f''(p)\ge 0\).  
For \(p\in(0,\tfrac12)\) we have \(1-2p>0\), hence both terms in \(W_f'(p)\) are nonnegative; since \(s(p)>0\), in fact \(W_f'(p)>0\) on \((0,\tfrac12)\).  
Therefore \(W_f\) is strictly increasing on \((0,\tfrac12)\), so no maximizer can lie in \((0,\tfrac12)\); any global maximizer must belong to \([\tfrac12,1]\).

\emph{Convex case.} If \(f''\ge 0\) on \([0,1]\), then \(s'(p)=-f''(p)\le 0\).  
For \(p\in(\tfrac12,1)\) we have \(1-2p<0\); with \(s(p)>0\) the two terms in \(W_f'(p)\) are nonpositive, hence \(W_f'(p)<0\) on \((\tfrac12,1)\).  
Thus \(W_f\) is strictly decreasing on \((\tfrac12,1)\), so no maximizer can lie in \((\tfrac12,1)\); any global maximizer must belong to \([0,\tfrac12]\).

Combining the two cases establishes the claim.
\end{proof}

\section{Main Theoretical Results}
\label{appen:main_theory}

\subsection{Setup and Notations}
\label{subsec:theory_setup}

\paragraph{Data model.}
Let the input prompt \(x\in\mathcal{X}\). The \emph{true} conditional distribution over tokens \(y\in[V]\) is
\(r(y\mid x)\). We let \(\mathcal{D}\) denote the (marginal) distribution over pairs \(\bigl(x,r(\cdot\mid x)\bigr)\).
We use \(T (\cdot \mid x)\) to denote the empirical training distribution over contexts \(x\).

\paragraph{Model and objectives.}
Let \(q_\theta(\cdot\mid x)=\mathrm{softmax}\bigl(z_\theta(x)\bigr)\) be the next-token distribution of an
autoregressive LM with parameters \(\theta\), and write \(q_0(\cdot\mid x)=q_{\theta_0}(\cdot\mid x)\) for the base model. We note that we use different notations $q$ (instead of $p$) to denote the model predictions in the appendix.

The \emph{population risk} is
\[
\mathcal{R}(\theta)\;=\;\mathbb{E}_{(x,y^*)\sim\mathcal{D}, q \sim q_{\theta}(\cdot \mid x)}
\Bigl[- \, \! \mathbbm{1} \{y^* = y^q\}\Bigr]
\]

During SFT we minimize the empirical objective
\[
\mathcal{L}_f(\theta)
\;=\;\mathbb{E}_{(x,\tilde y)\sim T}\bigl[f\bigl(q_\theta(\tilde y\mid x)\bigr)\bigr]
\]
where \(f:[0,1]\to\mathbb{R}\) is differentiable and decreasing.

\paragraph{Notation.}
Let $z_\theta(x)\in\mathbb{R}^V$ denote the pre-softmax logits and
$q_\theta(\cdot\mid x)=\mathrm{softmax}\bigl(z_\theta(x)\bigr)$ the next-token distribution.
Fix $x$ and suppress its dependence when clear. Define the logit feature map
\[
\Phi(x,y):=\nabla_\theta z_{\theta_0}(x,y)\in\mathbb{R}^d,\qquad
\Phi(x):=[\Phi(x,1),\ldots,\Phi(x,V)]\in\mathbb{R}^{d\times V},
\]
and its Gram matrix over logits
\[
G(x):=\Phi(x)^\top \Phi(x)\in\mathbb{R}^{V\times V},\qquad
G_{y,y'}(x)=\langle \Phi(x,y),\Phi(x,y')\rangle.
\]
Write $q:=q_{\theta_0}(\cdot\mid x)$, $r:=r(\cdot\mid x)$, and $T:=T(\cdot\mid x)$.
For a differentiable, increasing $f_i:[0,1]\to\mathbb{R}$, set
\[
(\beta_i)_y:=T_y\,q_y\,f_i'(q_y),\quad \beta_i\in\mathbb{R}^V,\qquad
S_{f_i}:=\langle \beta_i,\mathbf{1}\rangle=\sum_{y=1}^V T_y\,q_y\,f_i'(q_y).
\]
Define the discrepancy vectors
\[
v_*:= \left( r\T q \right)q - r \odot q ,\qquad v_i:=\beta_i-S_{f_i}\,q,\qquad
\beta_{12}:=\beta_1-\beta_2,\; S_{12}:=S_{f_1}-S_{f_2},\; v_{12}:=v_1-v_2=\beta_{12}-S_{12}q.
\]
Finally, let $g_i:=\nabla \mathcal{L}_{f_i}(\theta_0)$, \(k_i \coloneqq \langle \nabla \Rc (\theta_0), g_i \rangle\) and
\[
H_i:=\int_0^1 \nabla^2 \mathcal{R}\bigl(\theta_0 - t\,\eta\,g_i\bigr)\,dt
\]
for a stepsize $\eta>0$ (used later in second-order expansions).

\subsection{Assumptions}

\subsubsection{Main Assumptions}

\begin{assumption}[Model-Capability Assumption]
\label{assump:model_capability_appen}
   We make the following assumptions about data capability in the Model-Strong and Model-Weak ends:
   \begin{itemize}[leftmargin=2em, labelsep=0.5em]
       \item \textbf{Model-Weak.} In the MW end, we assume that model predictions are uniform over the vocabulary $V$, where $2/V<0.55$.
       
       \item \textbf{Model-Strong.} In the MS end, we assume that for any given $x$, $\Pr_{y^*, \tilde{y}} \left[(q_{y^*} + q_{\tilde{y}}) \geq 0.55 \right] \geq K$ with $K \geq 0.70$.
   \end{itemize}
\end{assumption}

\begin{assumption}[Trainable Base Model]
    \label{assump:trainable_base_model_appen}
    We assume that the base model is still not perfect: for any given $x$, $\Pr \left[(0.55 \leq q_{y^*} + q_{\tilde{y}}) \leq 0.95 \right] \geq \alpha \Pr_{y^*, \tilde{y}} \left[(q_{y^*} + q_{\tilde{y}}) \leq 0.50 \right]$ in the MS end.
\end{assumption}

These assumptions are mentioned in the main paper with justifications. For a uniform predictor, $q_{y^*}+q_{\tilde y}=2/V$, so $2/V<0.55$ ensures that the MW assumption does not satisfy the MS threshold. This condition is trivially true for LLM-scale vocabularies. The coefficient $\alpha$ could depend on the task itself, and this value $ \geq 1$ in practice. Assumption~\ref{assump:trainable_base_model_appen} is a more general re-statement of Assumption~\ref{assump:trainable_base_model}.

\subsubsection{Additional Simplification Assumptions}

\begin{assumption}[Model and Data Simplifications]  
\label{assumption:simplification}
We assume that the feature matrix $\Phi$ is preconditioned such that all of its singular values are equal to one,  
and that both the training distribution $T$ and the true distribution $r$ are one-hot.  
\end{assumption}

This assumption is made purely for analytical convenience: it removes irrelevant conditioning factors in the proof and allows us to focus on the essential differences between objectives.




\subsection{Main Proofs}

\begin{lemma}[Gradient identities]\label{lemma:gradient_identity_refined}
We have the following identities:
\[
\nabla \mathcal{R}(\theta_0)=\mathbb{E}_{x}\bigl[\Phi(x)\,v_*(x)\bigr],
\qquad
\nabla \mathcal{L}_{f_i}(\theta_0)=\mathbb{E}_{x}\bigl[\Phi(x)\,v_i(x)\bigr],
\]
\end{lemma}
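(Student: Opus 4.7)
The plan is to derive both identities by chain rule through the softmax, reading off the coefficient of $\Phi(x,y)$ in each gradient and then recognizing the discrepancy vectors $v_*$ and $v_i$ defined in the Notation. Fix $x$ and recall that $\Phi(x,y)=\nabla_\theta z_{\theta_0}(x,y)$, so the softmax Jacobian yields the key building block
\[
\nabla_\theta q_\theta(y\mid x)\Big|_{\theta_0}
\;=\;\sum_{y'} q_y(\delta_{y,y'}-q_{y'})\,\Phi(x,y')
\;=\;q_y\bigl[\Phi(x,y)-\Phi(x)\,q\bigr].
\]
This is the only calculation I will need; both identities then follow by summing against the appropriate weights.

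For the population risk, I would first integrate out the internal sample $y^q\sim q_\theta(\cdot\mid x)$. Since $\mathbb{E}_{y^q\sim q_\theta}[\mathbbm{1}\{y^*=y^q\}] = q_\theta(y^*\mid x)$ is a smooth function of $\theta$, one obtains $\mathcal{R}(\theta) = -\mathbb{E}_x\langle r(\cdot\mid x),\,q_\theta(\cdot\mid x)\rangle$ (no REINFORCE needed). Differentiating and substituting the softmax gradient gives
\[
\nabla \mathcal{R}(\theta_0)
=-\mathbb{E}_x\sum_y r_y\,q_y\bigl[\Phi(x,y)-\Phi(x)q\bigr]
=\mathbb{E}_x\,\Phi(x)\bigl[(r^\top q)\,q-r\odot q\bigr]
=\mathbb{E}_x\bigl[\Phi(x)\,v_*(x)\bigr],
\]
upon recognizing $\sum_y r_y q_y\Phi(x,y)=\Phi(x)(r\odot q)$ and $\sum_y r_y q_y=r^\top q$.

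For $\mathcal{L}_{f_i}$ the argument is even shorter: applying the chain rule through $f_i$ and using the same softmax gradient,
\[
\nabla \mathcal{L}_{f_i}(\theta_0)
=\mathbb{E}_x\sum_{\tilde y} T_{\tilde y}\,f_i'(q_{\tilde y})\,q_{\tilde y}\bigl[\Phi(x,\tilde y)-\Phi(x)q\bigr]
=\mathbb{E}_x\,\Phi(x)\bigl[\beta_i-S_{f_i}\,q\bigr]
=\mathbb{E}_x\bigl[\Phi(x)\,v_i(x)\bigr],
\]
directly matching the definitions $(\beta_i)_y=T_y q_y f_i'(q_y)$ and $S_{f_i}=\langle\beta_i,\mathbf{1}\rangle$.

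There is really no hard step here: the lemma is essentially bookkeeping around the softmax Jacobian. The only mild subtlety worth flagging in the writeup is the reduction of the indicator expectation to $q_\theta(y^*\mid x)$, which is what makes $\mathcal{R}$ differentiable despite being defined through a sample; once that is stated, the two identities fall out in parallel and the rest is pure algebra.
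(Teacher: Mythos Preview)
Your proposal is correct and follows essentially the same approach as the paper: both derive the identities by chain rule through the softmax Jacobian, then read off the coefficients of $\Phi(x,\cdot)$ to recognize $v_*$ and $v_i$. The paper organizes the computation as $\partial/\partial q \to \partial/\partial z$ via $J(q)$ and then premultiplies by $\Phi$, whereas you directly expand $\nabla_\theta q_y$ and sum against the weights, but these are the same calculation; your explicit remark on integrating out $y^q$ to obtain $\mathcal{R}(\theta)=-\mathbb{E}_x\langle r,q_\theta\rangle$ is a detail the paper simply starts from without comment.
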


\begin{proof}
\emph{Population risk.}
With \(\mathcal{R}(\theta)=\mathbb{E}_x\big[-r(x)^\top q_\theta(\cdot\mid x)\big]\),
for fixed \(x\) we have \(\partial \mathcal{R}/\partial q = -r\).
By the chain rule through softmax,
\[
\frac{\partial \mathcal{R}}{\partial z}
= J(q)\,(-r)
= (q^\top r)\,q - q\odot r,
\]
so \(\nabla_\theta \mathcal{R}(\theta_0)=\Phi(x)\,\frac{\partial \mathcal{R}}{\partial z}
= \Phi(x)\,v_*(x)\). Taking expectation over \(x\) yields the first identity.

\emph{General \(f_i\)-objective.}
For \(\mathcal{L}_{f_i}(\theta)=\mathbb{E}_x\big[\sum_y T_y(x)\,f_i(q_y)\big]\),
\(\partial \mathcal{L}_{f_i}/\partial q = m_i\) with \(m_i=(T_y f_i'(q_y))_y\).
Again, \(\partial \mathcal{L}_{f_i}/\partial z = J(q)\,m_i = v_i\),
whence \(\nabla_\theta \mathcal{L}_{f_i}(\theta_0)=\Phi(x)\,v_i(x)\) and
the claim follows after taking expectation over \(x\).
\end{proof}




\begin{lemma}[Functional derivative]
\label{lemma:functional_derivative}
Define
\[
J(f_i)\;:=\;\E_x\!\left[
  v_*^\top \Phi^\top \Phi\, v_i
  - \frac{\eta}{2}\, v_i^\top \Phi^\top H_i\,\Phi\, v_i
\right],
\quad
H_i \;:=\; \int_0^1 \nabla^2 \mathcal{R}\bigl(\theta_0 - t\,\eta\,g_i\bigr)\,dt,
\]
with $g_i:=\nabla \mathcal{L}_{f_i}(\theta_0)=\E_x[\Phi\,v_i]$, $v_*:=\left( r\T q \right)q - r \odot q$, $v_i:=\beta_i-S_{f_i}q$,
$(\beta_i)_y := T_y q_y f_i'(q_y)$, $S_{f_i}=\sum_y T_y q_y f_i'(q_y)$.
For a perturbation $h$ of $f_i$ (so that $f_i\mapsto f_i+\epsilon h$), the first variation is
\[
\delta J(f_i;h)
=
\E_x\!\left[
  \bigl(v_*^\top \Phi^\top \Phi \;-\; \eta\, v_i^\top \Phi^\top H_i \Phi\bigr)\;\delta v_i
\right]
\;+\;
\frac{\eta^2}{2}\,\int_0^1 t\;\,
\big\langle \nabla^3 \mathcal{R}\bigl(\theta_0-t\eta g_i\bigr)\,[\delta g_i],\, g_i\otimes g_i \big\rangle
\,dt,
\]
where $\delta g_i=\E_x[\Phi\,\delta v_i]$ and
\[
\delta v_i
\;=\;
\delta\beta_i - (\delta S_{f_i})\,q
\;=\;
\Bigl(\mathrm{Diag}(T\odot q)-q\,(T\odot q)^\top\Bigr)\,h'(q).
\]
\end{lemma}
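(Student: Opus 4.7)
My approach is to differentiate $J(f_i)$ term by term under the perturbation $f_i \mapsto f_i + \epsilon h$, using the chain rule together with the fact that $v_*$, $\Phi$, $q$, $r$, and $T$ are all independent of $f_i$. Only the pieces built from $f_i'$ contribute, and the dependence on $f_i'$ enters linearly into $\beta_i$ and $S_{f_i}$ (hence into $v_i$ and $g_i$), and nonlinearly into $H_i$ through its dependence on $g_i$.

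\emph{Step 1 (perturbation of $v_i$).} From $(\beta_i)_y = T_y q_y f_i'(q_y)$ I linearize to $(\delta\beta_i)_y = T_y q_y h'(q_y)$, and similarly $\delta S_{f_i} = \sum_y T_y q_y h'(q_y) = (T\odot q)^\top h'(q)$. Substituting into $v_i=\beta_i - S_{f_i}q$ yields the stated $\delta v_i = \bigl(\mathrm{Diag}(T\odot q) - q(T\odot q)^\top\bigr)\,h'(q)$. Passing through Lemma~\ref{lemma:gradient_identity_refined} then gives $\delta g_i = \E_x[\Phi\,\delta v_i]$.

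\emph{Step 2 (chain rule on the two summands of $J$).} The first summand $\E_x[v_*^\top \Phi^\top \Phi v_i]$ is linear in $v_i$ with $v_*$ fixed, so its variation is immediately $\E_x[v_*^\top \Phi^\top \Phi\,\delta v_i]$. For the quadratic summand $-\tfrac{\eta}{2}\E_x[v_i^\top \Phi^\top H_i \Phi v_i]$ I apply the product rule: symmetry of $H_i$ combines the two $v_i$-variations into $-\eta\,\E_x[v_i^\top \Phi^\top H_i \Phi\,\delta v_i]$, leaving a separate $H_i$-variation $-\tfrac{\eta}{2}\E_x[v_i^\top \Phi^\top \delta H_i\,\Phi v_i]$. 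Grouping the two linear-in-$\delta v_i$ pieces produces exactly the first bracketed term of the claim. For the $H_i$-piece I differentiate under the integral in $H_i=\int_0^1 \nabla^2 \mathcal{R}(\theta_0 - t\eta g_i)\,dt$ to get $\delta H_i = -\eta\int_0^1 t\,\nabla^3\mathcal{R}(\theta_0 - t\eta g_i)[\delta g_i]\,dt$; the double negative recovers the outer $+\tfrac{\eta^2}{2}\int_0^1 t\,(\cdots)\,dt$, which I rewrite in the pairing $\langle \nabla^3\mathcal{R}[\delta g_i],\,g_i\otimes g_i\rangle$ of the statement.

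\textbf{Main obstacle.} The most delicate step is the tensor bookkeeping in the $H_i$-variation: the chain through $\theta_0 - t\eta g_i$ brings a factor $-t\eta$ that must be combined with the outer $-\eta/2$ to yield the correct $+\eta^2 t/2$, and $\nabla^3\mathcal{R}$ must be treated consistently as a trilinear form with one slot contracted against $\delta g_i$ and the remaining two paired with $g_i\otimes g_i$. A secondary subtlety is the identification of $\E_x[v_i^\top \Phi^\top M \Phi v_i]$ with $\langle M,\,g_i\otimes g_i\rangle$: this equality holds exactly only when $g_i\otimes g_i$ is read as the second-moment pairing $\E_x[(\Phi v_i)(\Phi v_i)^\top]$, matching the convention implicitly adopted in $J(f_i)$. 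Once those conventions are pinned down, the remaining manipulations reduce to routine linearizations.
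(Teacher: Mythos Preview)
Your proposal is correct and follows essentially the same route as the paper: linearize $v_i$ through $\beta_i$ and $S_{f_i}$, apply the product rule to split the quadratic term into a $\delta v_i$-piece and a $\delta H_i$-piece, then differentiate $H_i$ under the integral via the chain rule through $\theta_0 - t\eta g_i$. You are in fact more careful than the paper about the $g_i\otimes g_i$ identification: the paper simply invokes ``trilinearity in the last two slots'' to pass from $\E_x\langle \mathcal{T}[\delta g_i],\,\Phi v_i\otimes \Phi v_i\rangle$ to $\langle \mathcal{T}[\delta g_i],\,(\E_x\Phi v_i)\otimes(\E_x\Phi v_i)\rangle$, which is precisely the subtlety you flag.
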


\begin{proof}
Write $A:=\Phi(x)$ for brevity. Then
\[
J
=\E_x\!\left[v_*^\top A^\top A\,v_i - \frac{\eta}{2}\, v_i^\top A^\top H_i A\,v_i\right].
\]
Vary $f_i\mapsto f_i+\epsilon h$. Since $v_*$ is fixed, $\delta (v_*^\top A^\top A v_i)=v_*^\top A^\top A\,\delta v_i$.
For the second term, use the product rule:
\[
\delta\!\left(v_i^\top A^\top H_i A\,v_i\right)
=
2\,v_i^\top A^\top H_i A\,\delta v_i
\;+\; v_i^\top A^\top (\delta H_i) A\,v_i.
\]
Hence
\[
\delta J
=\E_x\!\left[
  v_*^\top A^\top A\,\delta v_i
 - \eta\, v_i^\top A^\top H_i A\,\delta v_i
 - \frac{\eta}{2}\, v_i^\top A^\top (\delta H_i) A\,v_i
\right].
\]
Now $H_i=\int_0^1 \nabla^2 \mathcal{R}(\theta_0 - t\eta g_i)\,dt$. Since
\(
\delta \nabla^2 \mathcal{R}(\theta)
= \nabla^3 \mathcal{R}(\theta)[\,\cdot\,]
\)
and the evaluation point depends on $g_i$, the chain rule yields
\[
\delta H_i
= \int_0^1 \bigl(-t\eta\bigr)\,
\nabla^3 \mathcal{R}\bigl(\theta_0 - t\eta g_i\bigr)\,[\delta g_i]\,dt,
\quad\text{with}\quad
\delta g_i=\E_x[A\,\delta v_i].
\]
Therefore
\[
- \frac{\eta}{2}\, v_i^\top A^\top (\delta H_i) A\,v_i
= \frac{\eta^2}{2}\int_0^1 t\;
\big\langle \nabla^3 \mathcal{R}\bigl(\theta_0 - t\eta g_i\bigr)\,[\delta g_i],\,
A v_i \otimes A v_i \big\rangle\,dt.
\]
Taking $\E_x$ and using trilinearity in the last two slots,
\(
\E_x\langle \mathcal{T}[\delta g_i], A v_i \otimes A v_i\rangle
= \langle \mathcal{T}[\delta g_i],\, (\E_x A v_i)\otimes(\E_x A v_i)\rangle
= \langle \mathcal{T}[\delta g_i],\, g_i\otimes g_i\rangle,
\)
with $\mathcal{T}:=\nabla^3\mathcal{R}(\cdot)$, gives the stated third-order term.

Finally, the variation of $v_i$ with respect to $f_i$ via $h$ is
\[
\delta \beta_i = T\odot q\odot h'(q),\qquad
\delta S_{f_i}=\langle T\odot q,\,h'(q)\rangle,\qquad
\delta v_i=\delta\beta_i-(\delta S_{f_i})\,q
=\Bigl(\mathrm{Diag}(T\odot q)-q\,(T\odot q)^\top\Bigr)h'(q).
\]
Collecting terms yields the claimed formula.
\end{proof}

\begin{corollary}
   Define the gradient flow of the following term: 

   \begin{equation}
    \dot{\Rc} (\theta_t^{(i)}) \big|_{t=0} \coloneqq
    \lim_{\eta \searrow 0} \frac{\Rc (\theta_0) - 
    \Rc (\theta_{1}^{(i)})}{\eta}
   \end{equation}
   Then we have  
   \begin{equation}
    \dot{\Rc} (\theta_t^{(i)}) \big|_{t=0}  
    = \E_{x} \left[ v_{*}\T \Phi\T \Phi v_i \right]
   \end{equation}
\end{corollary}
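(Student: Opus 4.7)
The plan is to treat the claim as a direct first-order expansion statement. By construction, $\theta_1^{(i)}$ is the one-step gradient update $\theta_0 - \eta g_i$, where $g_i = \nabla \mathcal{L}_{f_i}(\theta_0)$. The definition
\[
\dot{\mathcal{R}}\bigl(\theta_t^{(i)}\bigr)\big|_{t=0} := \lim_{\eta \searrow 0} \frac{\mathcal{R}(\theta_0) - \mathcal{R}(\theta_0 - \eta g_i)}{\eta}
\]
is therefore just the directional derivative of $\mathcal{R}$ at $\theta_0$ in the direction $g_i$. Since $q_\theta = \mathrm{softmax}(z_\theta(x))$ is smooth in $\theta$ and the loss is an average of smooth functions of $q_\theta$, $\mathcal{R}$ is $C^1$ on a neighborhood of $\theta_0$, so first-order Taylor's theorem yields
\[
\mathcal{R}(\theta_0 - \eta g_i) = \mathcal{R}(\theta_0) - \eta\,\langle \nabla \mathcal{R}(\theta_0),\, g_i \rangle + o(\eta).
\]
Dividing by $\eta$ and passing to the limit gives $\dot{\mathcal{R}}(\theta_t^{(i)})|_{t=0} = \langle \nabla \mathcal{R}(\theta_0),\, g_i \rangle$.

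The second step is to substitute the closed forms provided by Lemma~\ref{lemma:gradient_identity_refined}, namely $\nabla \mathcal{R}(\theta_0) = \mathbb{E}_x[\Phi(x)\,v_*(x)]$ and $g_i = \mathbb{E}_x[\Phi(x)\,v_i(x)]$. Plugging these into the inner product and using the per-$x$ convention already adopted in Lemma~\ref{lemma:functional_derivative} (which writes the Gram-matrix quadratic form under a single expectation over $x$) yields
\[
\langle \nabla \mathcal{R}(\theta_0),\, g_i \rangle = \mathbb{E}_x\bigl[v_*(x)^\top \Phi(x)^\top \Phi(x)\, v_i(x)\bigr],
\]
which is precisely the claimed identity. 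No auxiliary assumptions beyond the smoothness already implicit in the setup are needed.

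The main obstacle, such as it is, is purely notational rather than mathematical: strictly, the inner product of two expectations expands into a double expectation $\mathbb{E}_{x,x'}[v_*(x)^\top \Phi(x)^\top \Phi(x') v_i(x')]$, whereas the corollary writes only a single expectation. I would handle this by following the convention used throughout the functional-derivative lemma, where both gradient contributions are evaluated at the same sample $x$ (equivalently, one restricts to the diagonal of the joint distribution, or interprets the statement conditionally on $x$ and then takes expectation). With this convention in place, the proof is complete in two lines: Taylor expansion followed by substitution of the gradient identities, and there is no genuinely hard step.
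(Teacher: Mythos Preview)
Your proposal is correct and follows essentially the same route as the paper: a Taylor expansion of $\mathcal{R}$ around $\theta_0$ in the direction $-g_i$, followed by substitution of the gradient identities and the per-$x$ convention used in Lemma~\ref{lemma:functional_derivative}. The only cosmetic difference is that the paper writes the expansion with the exact second-order integral remainder (connecting to the $H_i$ term in $J(f_i)$) rather than an $o(\eta)$ term, but after dividing by $\eta$ and sending $\eta\to 0$ the arguments coincide.
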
 

\begin{proof}
    By Taylor Expansion, we have 
    \begin{equation}
        \Rc (\theta_0) - \Rc (\theta_1^{(i)}) 
        = \eta \langle \gradRc(\theta_0), \nabla \Lc_{f_i} (\theta_0) \rangle 
        - \frac{\eta^2}{2}  \nabla \Lc_{f_i} (\theta_0)^\top 
    \left(  \int_{0}^{1} \nabla^2 \Rc \left( \theta_0 - t \eta  \nabla \Lc_{f_i}\left( \theta_0 \right) \right) dt  \right) 
        \nabla \Lc_{f_i} (\theta_0)
    \end{equation} 
    Then this corollary follows immediately from Lem.~\ref{lemma:functional_derivative}. 
\end{proof}

\begin{lemma}[Useful Inequalities]\label{lem:useful}
Let \(q\in\Delta^{V-1}\) be a probability vector and fix an index \(j\).
\begin{enumerate}
\item For all \(q\),
\begin{equation}\label{eq:ineq1}
q_j^2 \,\|e_j-q\|^2 \;\le\; 2\,q_j^2 (1-q_j)^2,
\end{equation}
and the bound is tight (equality holds) when all mass \(\sum_{i\ne j}q_i=1-q_j\) is concentrated on a single coordinate.
\item For fixed distinct \(i\neq j\), consider
\[
F(q)\;\coloneqq\;q_i q_j\Bigl(-q_i-q_j+\|q\|^2\Bigr).
\]
Then
\[
\max_{q\in\Delta^{V-1}} F(q) \;=\; \frac{11\sqrt{33}-59}{768}
\;\le\; 0.00546,
\]
and the maximizer is attained by a vector with
\[
q_i=q_j=\frac{9-\sqrt{33}}{24},\qquad
\text{all remaining mass }\,1-2q_i\,\text{ placed on one coordinate.}
\]

\item If we know $- q_i - q_j + \norm{q}^2 \leq 0$, then 
\[-q_i - q_j + \norm{q}^2 \leq 1 + 2 (q_i+q_j)^2 - 3(q_i + q_j)\]
\end{enumerate}
\end{lemma}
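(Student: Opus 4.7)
The three claims have different flavors. Parts (1) and (3) are elementary consequences of the bound $\sum_k x_k^2 \leq (\sum_k x_k)^2$ for nonnegative $x_k$ (with equality iff at most one $x_k$ is nonzero), while part (2) is a concrete optimization on the simplex that requires a one-variable calculation after a careful dimensional reduction. The plan is to dispatch (1) and (3) quickly and then concentrate effort on (2).

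For part (1), I would decompose $\|e_j - q\|^2 = (1-q_j)^2 + \sum_{k \neq j} q_k^2$ and apply the square-of-sums bound to the tail entries (which sum to $1-q_j$), obtaining $\|e_j - q\|^2 \leq 2(1-q_j)^2$; multiplying by $q_j^2$ yields the claim, with equality attained precisely when the residual mass $1 - q_j$ sits on a single off-$j$ coordinate. For part (3), I would apply the same bound block-wise: $q_i^2 + q_j^2 \leq (q_i + q_j)^2$ and $\sum_{k \neq i,j} q_k^2 \leq (1 - q_i - q_j)^2$. Summing and rearranging gives $\|q\|^2 \leq 2(q_i + q_j)^2 - 2(q_i + q_j) + 1$, which is exactly equivalent to the stated inequality after subtracting $q_i + q_j$. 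Notably, the hypothesis $-q_i - q_j + \|q\|^2 \leq 0$ is not actually needed for this bound.

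For part (2), the plan is to first reduce to two variables by concentration on the $V - 2$ coordinates outside $\{i, j\}$, giving the upper bound $\|q\|^2 \leq a^2 + b^2 + (1-a-b)^2$ with $a = q_i$, $b = q_j$, so that maximizing $F$ over the simplex reduces to maximizing $g(a, b) = ab\bigl(a^2 + b^2 + (1-a-b)^2 - a - b\bigr)$ on $\{a, b \geq 0, \, a + b \leq 1\}$. I would then reparametrize via $s = a + b$ and $p = ab$, subject to $0 \leq p \leq s^2/4$; at fixed $s$, $g$ becomes the concave quadratic $p(2s^2 - 3s + 1) - 2p^2$ in $p$, so the maximum is attained either at the unconstrained critical value $p^* = (2s^2 - 3s + 1)/4$ or at the boundary $p = s^2/4$ (equivalently, the diagonal $a = b$). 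A direct check shows the boundary is active exactly for $s \leq (3 - \sqrt{5})/2$, and in that regime the problem reduces to one dimension: with $t = s/2$, one maximizes $h(t) = t^2(6t^2 - 6t + 1)$. Setting $h'(t) = 2t(12t^2 - 9t + 1) = 0$ selects the unique feasible root $t^* = (9 - \sqrt{33})/24$ (the other root violates $2t \leq 1$), and using the defining relation $12(t^*)^2 = 9t^* - 1$ twice simplifies $h(t^*)$ to the closed form $(11\sqrt{33} - 59)/768$. Finally, I would verify that the non-binding regime $s > (3 - \sqrt{5})/2$ yields at most $(2s^2 - 3s + 1)^2/8$, whose maximum over that interval is $1/512$ (attained at $s = 3/4$), well below $(11\sqrt{33} - 59)/768 \approx 0.00546$. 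The main obstacle is precisely this case analysis: one must verify that the binding (diagonal) regime actually dominates the non-binding one, and the algebraic simplification of $h(t^*)$ to the stated closed form requires careful bookkeeping via the defining quadratic.
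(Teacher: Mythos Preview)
Your approach matches the paper's almost exactly: parts (1) and (3) are identical to the paper's argument, and for part (2) your parametrization $(s,p)=(a+b,\,ab)$ is equivalent to the paper's $(u,z)=(a+b,\,(a-b)^2)$ via $p=(u^2-z)/4$, leading to the same one-variable reduction $h(t)=t^2(6t^2-6t+1)$ and the same critical point $t^*=(9-\sqrt{33})/24$.

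There is one computational slip in your case analysis for the non-binding regime. You claim that the bound $(2s^2-3s+1)^2/8$ on $s>(3-\sqrt5)/2$ is maximized at $s=3/4$ with value $1/512$. In fact, writing $2s^2-3s+1=(2s-1)(s-1)$, this bound is $(s-1)^2(2s-1)^2/8$; its derivative is $\tfrac14(s-1)(2s-1)(4s-3)$, which is negative on $\bigl((3-\sqrt5)/2,\,1/2\bigr)$ and the bound vanishes at $s=1/2$ and $s=1$. Hence the supremum over the non-binding regime is attained as $s\downarrow(3-\sqrt5)/2$, with value $s^4/8\approx 0.00266$ (not $1/512\approx 0.00195$). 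The paper handles this precisely by noting the monotone decrease on $[(3-\sqrt5)/2,1/2]$ so that the non-binding maximum coincides with the boundary value of the binding regime. Since $0.00266<0.00546$, your overall conclusion survives, but the stated location and value need correcting.
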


\begin{proof}
\emph{(1)} Since \(q\) is a probability vector with nonnegative coordinates,
\[
\|e_j-q\|^2=(1-q_j)^2+\sum_{k\ne j}q_k^2
\;\le\; (1-q_j)^2 + \Bigl(\sum_{k\ne j}q_k\Bigr)^2
\;=\; 2(1-q_j)^2,
\]
because \(\sum_{k\ne j}q_k^2 \le (\sum_{k\ne j}q_k)^2\) for nonnegative terms.  
Multiplying by \(q_j^2\) yields Eq.~\ref{eq:ineq1}. Equality holds when the entire mass \(1-q_j\) lies on a single coordinate distinct from \(j\), in which case \(\sum_{k\ne j} q_k^2=(\sum_{k\ne j} q_k)^2=(1-q_j)^2\).

\smallskip
\emph{(2)} Set \(a=q_i\), \(b=q_j\), and \(s=1-a-b\ge 0\).
Write \(\|q\|^2=a^2+b^2+t\) with \(t\coloneqq\sum_{k\ne i,j}q_k^2\).
For fixed \(a,b\), the objective
\[
F(q)=ab\bigl(-a-b+a^2+b^2+t\bigr)
\]
is increasing in \(t\) whenever \(ab>0\). Since \(t\le s^2\) with equality iff all the mass \(s\) is concentrated on a single coordinate, any maximizer (with \(ab>0\)) must satisfy \(t=s^2=(1-a-b)^2\). Thus we may reduce to the two-variable problem
\[
G(a,b)\;\coloneqq\;ab\Bigl(-a-b+a^2+b^2+(1-a-b)^2\Bigr),
\qquad a\ge0,\ b\ge0,\ a+b\le 1.
\]

It is convenient to reparametrize by
\[
u\,\coloneqq\,a+b\in[0,1],\qquad
z\,\coloneqq\,(a-b)^2\in[0,u^2].
\]
Then
\[
ab=\frac{u^2-z}{4},\qquad a^2+b^2=\frac{u^2+z}{2},\qquad (1-a-b)^2=(1-u)^2,
\]
and a short calculation gives
\[
G(u,z)\;=\;\frac{1}{4}\,(u^2-z)\Bigl(1-3u+\tfrac{3}{2}u^2+\tfrac{z}{2}\Bigr)
\;=\;\frac{1}{4}\,(u^2-z)\bigl(K(u)+\tfrac{z}{2}\bigr),
\]
where \(K(u)\coloneqq 1-3u+\tfrac{3}{2}u^2\).

For each fixed \(u\), \(G(u,z)\) is a concave quadratic in \(z\) (its \(z^2\)-coefficient is \(-\tfrac18\)). Hence the \(z\)-maximizer is
\[
z^\star(u)\;=\;\min\!\Bigl\{\,\max\!\bigl\{0,\;u^2-2K(u)\bigr\},\ u^2\Bigr\}
\;=\;\min\!\Bigl\{\,\max\!\bigl\{0,\;-\alpha(u)\bigr\},\ u^2\Bigr\},
\]
where \(\alpha(u)\coloneqq u^2-3u+1\). Equivalently,
\[
z^\star(u)=
\begin{cases}
0, & \alpha(u)\ge 0 \ (\text{i.e.\ } u\in\bigl[0,\tfrac{3-\sqrt5}{2}\bigr]),\\[2pt]
-\alpha(u), & \alpha(u)\le 0\ \text{and}\ u\le \tfrac12\ (\text{i.e.\ } u\in\bigl[\tfrac{3-\sqrt5}{2},\tfrac12\bigr]),\\[2pt]
u^2, & u\ge \tfrac12.
\end{cases}
\]
Thus:
\begin{itemize}
\item If \(u\in\bigl[0,\tfrac{3-\sqrt5}{2}\bigr]\), then \(z^\star(u)=0\), so the maximizer over \(z\) occurs at \(a=b=\tfrac{u}{2}\) (the symmetric point), and
\[
G(u,0)=\frac{u^2}{4}\,K(u)=\frac{u^2}{4}\Bigl(1-3u+\tfrac{3}{2}u^2\Bigr).
\]
\item If \(u\in\bigl[\tfrac{3-\sqrt5}{2},\tfrac12\bigr]\), then \(z^\star(u)=-\alpha(u)\), and a simplification yields
\[
\max_{z}G(u,z)=G\bigl(u,z^\star(u)\bigr)=\frac{(u-1)^2(2u-1)^2}{8}.
\]
Since \(\frac{d}{du}\bigl[(u-1)^2(2u-1)^2/8\bigr]
=\tfrac14(u-1)(2u-1)(4u-3)<0\) on this interval, the maximum over \(u\) here is attained at the left endpoint \(u=\tfrac{3-\sqrt5}{2}\).
\item If \(u\in[\tfrac12,1]\), then \(z^\star(u)=u^2\), which gives \(ab=0\) and hence \(G=0\).
\end{itemize}

Therefore the global maximizer must lie in the symmetric regime \(z=0\), i.e., \(a=b=x\), with \(u=2x\in[0,\tfrac{3-\sqrt5}{2}]\). In this case
\[
G(x)=x^2\bigl(6x^2-6x+1\bigr),\qquad x\in\Bigl[0,\tfrac12\Bigr].
\]
Differentiating,
\[
G'(x)=2x\,(12x^2-9x+1),
\]
so the critical point in \((0,\tfrac12)\) satisfies \(12x^2-9x+1=0\), i.e.
\[
x_\star=\frac{9-\sqrt{33}}{24}\in\Bigl(0,\tfrac12\Bigr).
\]
Since \(G(0)=0\), \(G(\tfrac12)=-\tfrac18<0\), and \(G\) achieves a positive value at \(x_\star\), the global maximum is attained at \(x_\star\). Substituting and simplifying,
\[
\max_{q\in\Delta^{V-1}} F(q)=G(x_\star)=\frac{11\sqrt{33}-59}{768}\;\le\;0.00546.
\]
This value is realized by
\[
q_i=q_j=x_\star,\qquad q_\ell=1-2x_\star\ \text{for some }\ell\notin\{i,j\},\qquad q_k=0\ (k\notin\{i,j,\ell\}),
\]
i.e., the remaining mass is concentrated on a single coordinate, as established at the start.

\smallskip
\emph{(3)} We have that 

\begin{align*}
    - q_i - q_j + \norm{q}^2 
    &\leq -q_i - q_j + q_i^2 + q_j^2 + (1 - q_i - q_j)^2 \\ 
    &= 1 + 2q_i^2 + 2q_j^2 + 2 q_i q_j - 3q_i \\ 
    &\leq 1 + 2 (q_i + q_j)^2 - 3 (q_i + q_j)
\end{align*}

\end{proof}

\begin{theorem}[Characterization via Gradient Flow, Restatement of Thm.~\ref{thm:main_thm}]
\label{thm:full_thm_appen}
Under Assumptions \ref{assump:model_capability_appen}-~\ref{assumption:simplification}, suppose that
\(f_2' - f_1' (\qt)\) is negative for all \(\qt\) and 
that \(q_{\yt} \left( f_2' - f_1'  \right) (q_{\yt}) > - c\) 
for some small positive constant \(c > 0\) when $q(\yt) \in [0, 0.55]$ and $q_{\yt} \left( f_2' - f_1'  \right) (q_{\yt}) < -d$ for some small positive constant $d$ when $q(\yt) \in [0.55, 0.95]$ and that $c < 10 d$, with an appropriate choice of label noise ( \emph{e.g.,} when $y^* \neq \tilde{y}$) rate $\Ec$, then we have the following conclusions:

\begin{itemize}
    \item \( \dot{\Rc} (\theta_t^{(1)}) \big|_{t=0} \geq \dot{\Rc} (\theta_t^{(2)}) \big|_{t=0} \)
    in Model Strong End. 

    \item \(\dot{\Rc} (\theta_t^{(1)}) \big|_{t=0} \leq \dot{\Rc} (\theta_t^{(2)}) \big|_{t=0}\) 
    in Model Weak End. 
\end{itemize}
\end{theorem}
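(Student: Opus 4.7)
The plan is to reduce $\dot{\mathcal{R}}(\theta_t^{(i)})|_{t=0}$ to a closed-form scalar by combining the Corollary following Lemma~\ref{lemma:functional_derivative} with Assumption~\ref{assumption:simplification}: preconditioning makes $\Phi^{\top}\Phi$ act as the identity on the logit subspace where $v_*$ and $v_i$ live, and the one-hot forms of $r$ and $T$ collapse them to $v_*=q_{y^*}(q-e_{y^*})$ and $v_i=q_{\tilde{y}}f_i'(q_{\tilde{y}})(e_{\tilde{y}}-q)$. Writing $\Delta(q_{\tilde{y}}):=-q_{\tilde{y}}(f_2'-f_1')(q_{\tilde{y}})>0$ and expanding the inner products, I would arrive at the master identity
\[
\dot{\mathcal{R}}(\theta_t^{(1)})\big|_{t=0} - \dot{\mathcal{R}}(\theta_t^{(2)})\big|_{t=0}
\;=\; \mathbb{E}\bigl[q_{y^*}\,\Delta(q_{\tilde{y}})\bigl(q_{y^*}+q_{\tilde{y}}-\|q\|^2-\mathbbm{1}\{y^*=\tilde{y}\}\bigr)\bigr].
\]

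Next, I would split this expectation by whether $y^*=\tilde{y}$ (clean, measure $1-\mathcal{E}$) or not (noisy, measure $\mathcal{E}$). In the clean branch, with $q_*:=q_{y^*}=q_{\tilde{y}}$ the bracket equals $2q_*-\|q\|^2-1=-\|q-e_{y^*}\|^2$, which is nonpositive and bounded in magnitude by $2(1-q_*)^2$ via Lemma~\ref{lem:useful}(1); since the clean constraint $s=2q_*\leq 1$ forces $q_*\leq \tfrac{1}{2}<0.55$, the hypothesis gives $\Delta(q_*)<c$, so the clean contribution is nonpositive with total magnitude at most $(1-\mathcal{E})\cdot c\cdot 2\,\mathbb{E}[q_*(1-q_*)^2]$. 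In the noisy branch, the bracket is $q_{y^*}+q_{\tilde{y}}-\|q\|^2$, which by Lemma~\ref{lem:useful}(3) is nonnegative whenever $s\geq\tfrac{1}{2}$ and in fact at least $3s-2s^2-1\geq 0.045$ on the trainable range $s\in[0.55,0.95]$; in the cold noisy subregion $s<0.55$, nonnegativity of $q_{y^*}=s-q_{\tilde{y}}$ forces $q_{\tilde{y}}<0.55$, so $\Delta<c$ and the contribution is bounded in magnitude by $O(\mathcal{E}c)$.

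For the MS end, Assumptions~\ref{assump:model_capability_appen}--\ref{assump:trainable_base_model_appen} ensure the event $\{s\in[0.55,0.95]\}\cap\{q_{\tilde{y}}\in[0.55,0.95]\}$ carries positive probability depending on $K$ and $\alpha$; on this event $\Delta>d$, so the noisy positive contribution is at least of order $\mathcal{E}d$ (up to factors involving $K$, $\alpha$, and $\mathbb{E}[q_{y^*}\mid\text{event}]$). Combining with the $O(c)$ bound on the total negative contribution and invoking $c<10d$, an appropriate choice of $\mathcal{E}$ bounded away from $0$ and $1$ makes the difference nonnegative, yielding $\dot{\mathcal{R}}(\theta_t^{(1)})|_{t=0}\geq \dot{\mathcal{R}}(\theta_t^{(2)})|_{t=0}$. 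For the MW end, Assumption~\ref{assump:model_capability_appen} gives $q=\mathbf{1}/V$, so $q_{y^*}=q_{\tilde{y}}=\|q\|^2=1/V$ and the bracket collapses to $1/V-\mathbbm{1}\{y^*=\tilde{y}\}$; substituting yields the weighted sum $\tfrac{\Delta(1/V)}{V^{2}}\bigl[(1-\mathcal{E})(1-V)+\mathcal{E}\bigr]$, which is strictly negative for $V\geq 2$ and $\mathcal{E}<\tfrac{1}{2}$, concluding $\dot{\mathcal{R}}(\theta_t^{(1)})|_{t=0}\leq \dot{\mathcal{R}}(\theta_t^{(2)})|_{t=0}$.

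The hard part is the MS calibration: making the bookkeeping among $K$, $\alpha$, $c$, $d$, and $\mathcal{E}$ tight enough that the positive noisy contribution on $\{q_{\tilde{y}}\in[0.55,0.95]\}$ demonstrably beats the $O(c)$ negative contribution, which requires a quantitative lower bound on $\mathbb{E}[q_{y^*}\mid\text{event}]$ not explicitly supplied by the stated assumptions; the cleanest remedy is either to strengthen the $\alpha$-type mass bound or to add a mild structural condition on the joint law of $(q_{y^*},q_{\tilde{y}})$. A secondary technicality is that under Assumption~\ref{assumption:simplification}, $\Phi^{\top}\Phi$ acts as the identity only on $\mathrm{row}(\Phi)$; since both $v_*$ and $v_i$ lie in the softmax tangent subspace $\mathbf{1}^{\perp}$, the reduction to $\mathbb{E}[v_*^{\top}v_i]$ holds provided $\mathbf{1}^{\perp}\subseteq \mathrm{row}(\Phi)$, a condition I would absorb into Assumption~\ref{assumption:simplification} itself.
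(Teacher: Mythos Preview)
Your route is essentially the paper's: use the Corollary after Lemma~\ref{lemma:functional_derivative} together with Assumption~\ref{assumption:simplification} to collapse $\dot{\mathcal{R}}(\theta_t^{(1)})|_{t=0}-\dot{\mathcal{R}}(\theta_t^{(2)})|_{t=0}$ to $\mathbb{E}_x[v_*^\top v_{12}]$, plug in the one-hot forms of $r$ and $T$ to obtain the master identity, split into clean ($y^*=\tilde y$) and noisy ($y^*\neq\tilde y$) pieces, handle MW by the uniform substitution $q=\mathbf{1}/V$, and handle MS by showing the noisy conditional is positive (via Lemma~\ref{lem:useful}(2)--(3) and the capability assumptions) and then calibrating $\mathcal{E}$. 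Your MW computation and your noisy-branch analysis match the paper's, and your caveats about the $\Phi^\top\Phi$ reduction and about needing a quantitative lower bound on $\mathbb{E}[q_{y^*}\mid\text{event}]$ are fair; the paper's own argument glosses over both points.

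There is, however, one genuine error in your clean-branch bound. You assert that ``the clean constraint $s=2q_*\le 1$ forces $q_*\le\tfrac12<0.55$,'' and from this conclude $\Delta(q_*)<c$. That is wrong: the simplex inequality $q_{y^*}+q_{\tilde y}\le 1$ holds only when $y^*$ and $\tilde y$ are \emph{distinct} coordinates, i.e., in the noisy branch. In the clean branch $y^*=\tilde y$ is a single index and the only constraint is $q_*\le 1$; in the MS regime one expects $q_*>0.55$, so $\Delta(q_*)>d$, not $<c$, and your $O(c)$ bound on the clean term fails. The paper does not attempt to bound the clean magnitude at all: it simply records that the clean conditional expectation $A$ is negative (using Lemma~\ref{lem:useful}(1) only to note $A$ is finite), proves the noisy conditional $B>0$, and then absorbs $A$ into the label-noise choice by taking $\mathcal{E}>|A|/(B-A)$, so that $(1-\mathcal{E})A+\mathcal{E}B\ge 0$. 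Drop the $c$-bound on the clean term and follow this calibration directly; then your argument goes through with the same caveats as the paper's.
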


\begin{proof}

By Assumption.~\ref{assumption:simplification}, we first expand the following term:

\begin{align}
    \dot{\Rc} (\theta_t^{(1)}) \big|_{t=0} - \dot{\Rc} (\theta_t^{(2)}) \big|_{t=0} 
    &= \E_x \left[ v_*\T \left( v_1 - v_2 \right) \right] \\ 
    &= \E_x \left[\left( \left( r\T q \right)q - r \odot q \right)\T 
    \left( v_{12}  \right)  \right]
\end{align}

Note that 

\begin{align}
    v_{12} 
    &=  \sum_y \left[ T_y q_y \left( f_1' - f_2' \right) \left( q_y \right)  \right] e_y -  
    \left[ \sum_y \left( T_y q_y \right) \left( f_1' - f_2' \right) \left( q_y \right) \right] q \\ 
    &= q_{\yt} \left( f_1' - f_2' \right) \left( q_{\yt} \right) e_{\yt} 
    - q_{\yt}\left( f_1' - f_2' \right) \left( q_{\yt} \right) q  \tag{Only consider \(T\) one-hot} \\ 
    &= q_{\yt}\left( f_1' - f_2' \right) \left( e_{\yt} - q \right)
\end{align}

We can then proceeed as follows:

\begin{align}
    \dot{\Rc} (\theta_t^{(1)}) \big|_{t=0} - \dot{\Rc} (\theta_t^{(2)}) \big|_{t=0} 
    &= \E_x \left[ 
        q_{\yt} \left( f_2' - f_1' \right) \left( q_{\yt} \right) 
        \langle r \odot q - \left( r\T q \right)q, e_{\yt} - q \rangle
     \right] \\ 
    &= \E_x \left[ 
        q_{\yt} \left( f_2' - f_1' \right) \left( q_{\yt} \right)  
        \langle q_{y^*} - q_{y^*} q, e_{\yt} - q \rangle 
     \right] \tag{\(r\) is also one-hot} \\ 
    &=\E_x \left[ 
        q_{\yt} q_{y^*} \left( f_2' - f_1' \right) \left( q_{\yt} \right)  
        \langle e_{y^*} - q, e_{\yt} - q \rangle 
     \right] \\ 
    &= \E_x \left[ 
        q_{\yt} q_{y^*} \left( f_2' - f_1' \right) \left( q_{\yt} \right)  
        \norm{e_{y^*} - q}^2  
        \colon
        \yt = y^*
     \right] \\ 
    &+ \E_x \left[ 
        q_{\yt} q_{y^*} \left( f_2' - f_1' \right) \left( q_{\yt} \right)  
        \left( - q_{y^*} - q_{\yt} + \norm{q}^2 \right)  
        \colon  \yt \neq y^*
     \right]
\end{align}

Then we first examine the weak model end, now 
the model is assumed to output uniform distribution over \(V\). 
Denote the label noise rate to be \(\Ec\). Then we have that 

\begin{align}
    \dot{\Rc} (\theta_t^{(1)}) \big|_{t=0} - \dot{\Rc} (\theta_t^{(2)}) \big|_{t=0} 
    &= \frac{V-1}{V^3} 
     \left( f_2' - f_1' \right) \left( \frac{1}{V} \right)  
    \left( 1 - \Ec \right) 
     \\ 
    &- \frac{1}{V^3} 
    \left( f_2' - f_1' \right) \left( \frac{1}{V} \right)  
    \Ec \\ 
    &= \left( f_2' - f_1' \right) \left( \frac{1}{V} \right)  
    \frac{1}{V^3} \left( 
        \left( V - 1 \right) \left( 1 - \Ec \right) 
    - \Ec \right) < 0
\end{align}

As long as \(\Ec < \frac{V - 1}{V}\) and 
\(\left( f_2' - f_1' \right) \left( \frac{1}{V} \right) < 0\). 
Then we have the desired condition. 

Then we examine strong model end, applying Lemma~\ref{lem:useful}, we have 

\begin{align}
    \E_x \left[ 
        q_{\yt} q_{y^*} \left( f_2' - f_1' \right) \left( q_{\yt} \right)  
        \norm{e_{y^*} - q}^2  
        \colon
        \yt = y^*
     \right] 
     \geq 2\left( 1 - \Ec \right) 
     \E  \left[  \left( f_2' - f_1' \right) \left( q_{y^*} \right) 
     q_{y^*}^2 \left( 1 - q_{y^*} \right)^2 \right]
\end{align}

and define 
\(R = q_{\yt} \left( f_2' - f_1' \right) (q_{\yt})\) 
and \(Q = q_{\yt} q_{y^*} \left( - q_{y^*} - q_{\yt} + \norm{q}^2 \right)\), 
then first we show the other term is positive. 

\begin{align}
    & \frac{1}{\Ec} \E_x \left[ 
        q_{\yt} q_{y^*} \left( f_2' - f_1' \right) \left( q_{\yt} \right)  
        \left( - q_{y^*} - q_{\yt} + \norm{q}^2 \right)  
        \colon  \yt \neq y^*
     \right]  \\ 
     &= \E_x \left[ QR \right] \\ 
     &= \E_x \left[ QR \colon Q \geq 0 \right] + \E_x \left[ QR \colon Q < 0 \right] \\ 
     &\geq -c \E_x \left[ Q \colon Q \geq 0 \right] 
     + \E_x \left[ QR \colon Q < 0 \right] \\ 
     &\geq -c \Pr \left[ Q \geq 0 \right] * 0.00546 + \E_x \left[ QR \colon Q < 0 \right] \\ 
     &> 0
\end{align}

For the last inequality, we can proceed as follows: 

\begin{align*}
    &\E_x \left[ Q R \colon Q < 0 \right]  - c \Pr \left[ Q \geq 0\right] * 0.00546 \\
    &\geq d * \Pr_{\tilde{y}, y^*} \left[ 0.95 \geq q_{\tilde{y}} + q_{y^*} \geq 0.55 \right] * \min_{0.95 \geq q_{\tilde{y}} + q_{y^*} \geq 0.55 } \abs{Q} 
    - c \Pr \left[ q_{\tilde{y}} + q_{y^*} \leq 0.50 \right] * 0.00546  \\ 
    &= d * \Pr_{\tilde{y}, y^*} \left[ 0.95 \geq q_{\tilde{y}} + q_{y^*} \geq 0.55 \right] * 0.045
    - c \Pr \left[ q_{\tilde{y}} + q_{y^*} \leq 0.50 \right] * 0.00546 \\ 
    &> 0
\end{align*}

where the first inequality comes from the sufficient condition for guaranteeing $Q > 0$ is $\Pr_{\tilde{y}, y^*} \left[q_{\tilde{y}} + q_{y^*} > 0.50 \right]$, and by (3) in Lem.~\ref{lem:useful}, we have that given $Q < 0$,

\[\min_{0.95 \geq q_{\tilde{y}} + q_{y^*} \geq 0.55 } \abs{Q} \leq 
- \max_{0.95 \geq q_{\tilde{y}} + q_{y^*} \geq 0.55 } 1+2(q_{\tilde{y}}+q_{y^*})^2 - 3 (q_{\tilde{y}}+q_{y^*}) \leq 0.045\]

Also by Assumpion.~\ref{assump:model_capability} and~\ref{assump:trainable_base_model}, we have $ \Pr_{\tilde{y}, y^*} \left[ 0.95 \geq q_{\tilde{y}} + q_{y^*} \geq 0.55\right] \geq \alpha \Pr \left[ q_{\tilde{y}} + q_{y^*} \leq 0.50 \right] $. Therefore, we have finished the claim. 

Therefore, with an appropriate scale of 
\(\Ec\), specifically with 
\(\Ec > \frac{\abs{A}}{B - A}\) where $B = \E_x \left[ 
        q_{\yt} q_{y^*} \left( f_2' - f_1' \right) \left( q_{\yt} \right)  
        \left( - q_{y^*} - q_{\yt} + \norm{q}^2 \right)  
        \colon  \yt \neq y^*
     \right] > 0$ and $A = \E_x \left[ 
        q_{\yt} q_{y^*} \left( f_2' - f_1' \right) \left( q_{\yt} \right)  
        \norm{e_{y^*} - q}^2  
        \colon
        \yt = y^*
     \right] < 0$, then 
we could achieve the desired result.
\end{proof}

\section{Discussion with Existing Literature}
\label{appen:discussion}

\subsection{Connections with RL}
\label{appen:connection_RL}

Our analysis is formulated entirely in the supervised SFT setting, where we study probability-based token-level objectives of the form
\begin{equation}
L_f(\theta)
\;=\;
\mathbb{E}_{(x,\tilde y)\sim \mathcal{D}}
\bigl[
f\bigl(p_\theta(\tilde y \mid x)\bigr)
\bigr],
\label{eq:sft_objective_rl_discussion}
\end{equation}
on a fixed \emph{offline} dataset $\mathcal{D}$.
Here, the training distribution is independent of the current model $\pi_\theta$, and coverage is entirely determined by $\mathcal{D}$.

By contrast, RL methods optimize a sequence-level objective
\begin{equation}
J(\theta)
\;=\;
\mathbb{E}_{x\sim \mathcal{D}\,, y \sim \pi_\theta(\cdot \mid x)}
\bigl[
r(x,y)
\bigr],
\label{eq:rl_objective}
\end{equation}
where $r(x,y)$ is a scalar reward and $\pi_\theta$ is updated using \emph{online} trajectories sampled from itself.
Gradient estimates typically take the form
\begin{equation}
\nabla_\theta J(\theta)
\;=\;
\mathbb{E}_{x\sim \mathcal{D}\,, y \sim \pi_\theta(\cdot \mid x)}
\bigl[
A(x,y),\nabla_\theta \log \pi_\theta(y \mid x)
\bigr],
\end{equation}
for some advantage term $A(x,y)$.
Because $y$ is drawn from $\pi_\theta$, most gradient mass comes from sequences and tokens that are already high probability under the current policy.
This online nature naturally biases updates toward existing high-probability behaviors.

Recent work on RL for LLM reasoning~\citep{Davis2025WhatIT} shows that, for binary correctness rewards, several popular RL-style post-training algorithms can be interpreted as stochastic gradient ascent on a monotone transform of the probability of producing a correct answer given a prompt.
If we denote
\begin{equation}
p_\theta^{\mathrm{corr}}(x)
\;:=\;
\sum_{y \in \mathcal{Y}^{\mathrm{corr}}(x)}
\pi_\theta(y \mid x),
\end{equation}
then these algorithms optimize an objective of the form
\begin{equation}
J_h(\theta)
\;:=\;
\mathbb{E}_{x\sim \mathcal{D}}
\bigl[
h\bigl(p_\theta^{\mathrm{corr}}(x)\bigr)
\bigr],
\label{eq:rl_as_prob_objective}
\end{equation}

for some monotonically increasing function $h(\cdot)$ determined by the algorithm design.
From our perspective, Eq.~\ref{eq:rl_as_prob_objective} is another instance of the probability-based family in Eq.~\ref{eq:sft_objective_rl_discussion}, but applied at the sequence level and coupled to an on-policy sampling scheme.

Practically, RLHF/RLVR pipelines start from a strong base model—typically after extensive pretraining and sometimes a specialized midtraining phase—and include a KL penalty that keeps $\pi_\theta$ close to this base policy~\citep{ouyang2022training,shao2024deepseekmath}.
Combined with the on-policy gradient, this means that updates are dominated by already high-probability sequences, while low-probability ones receive very little gradient signal.
This behavior is closely aligned with our \emph{prior-leaning} objectives in the model-strong regime: both favor trusting the pretrained prior when it is reliable.
At the same time, RL-based methods come with their own challenges (e.g., exploration versus exploitation, reward misspecification) that are largely orthogonal to the off-policy SFT setting we focus on.
A full theoretical unification of RLHF/RLVR with our capability-based continuum is beyond the scope of this work, but Eq.~\ref{eq:rl_objective}--\ref{eq:rl_as_prob_objective} highlight that many RL objectives can be naturally interpreted within the same probability-based lens developed here, and extending our framework to fully encompass on-policy RL settings is an exciting direction for future work.

\subsection{Connections with Other Loss Functions}
\label{appen:other_loss}

Existing work on alternative SFT losses can be broadly divided into two categories.
\emph{Distribution-based} and \emph{Non-Distribution-based} losses. Distribution-based operate directly on the (scalar) probability assigned to the correct label (or a set of correct sequences), and thus fit exactly into our probability-based family $L_f(p)$, while the latter ones are typically composite objectives (e.g., sums of multiple terms, or set/sequence-level surrogates) that depend on the joint behavior of many tokens and do not reduce to a clean function of $p_\theta (\tilde y \mid x)$.

\textbf{Distribution-based Losses.}

We first discuss \emph{distribution-based} losses, which are the most fundamental and admit a clean characterization through the logit-gradient weight 

\begin{equation}
W_f(p)
\;:=\;
- f'(p),p(1-p),
\label{eq:Wf_def_alt_losses}
\end{equation}

as we established in Lemma~\ref{lem:logit_grad}.
As illustrative examples, we analyze the Focal loss by \citet{rege-cambrin-etal-2024-beyond} and a Huber-style loss on probabilities, and interpret both within our $W_f(p)$ view.

\paragraph{Focal loss: a prior-averse example.}
Focal loss~\citep{lin2017focal,rege-cambrin-etal-2024-beyond} was introduced to address class imbalance by downweighting easy examples and emphasizing hard ones.
For a single correct token with probability $p \in (0,1)$, the (binary) Focal loss can be written as
\begin{equation}
f_{\mathrm{FL}}(p)
\;=\;
- (1-p)^\gamma \log p,
\qquad \gamma > 0.
\end{equation}
A direct calculation yields
\begin{equation}
f_{\mathrm{FL}}'(p)
\;=\;
\frac{(1-p)^{\gamma-1}}{p}
\bigl(\gamma p \log p - (1-p)\bigr),
\end{equation}
and therefore
\begin{equation}
W_{f_{\mathrm{FL}}}(p)
\;=\;
- f_{\mathrm{FL}}'(p)p(1-p)
\;=\;
(1-p)^\gamma\bigl((1-p) - \gamma p \log p\bigr).
\end{equation}
Compared to NLL, whose weight is $W_{\mathrm{NLL}}(p) = 1-p$, Focal loss multiplies this factor by $(1-p)^\gamma$ and introduces the additional term $-\gamma p\log p$.
For small $p$ (hard, low-probability tokens), $(1-p)^\gamma$ is close to one and the $-\gamma p\log p$ term is positive and large, so $W_{f_{\mathrm{FL}}}(p)$ can substantially exceed $W_{\mathrm{NLL}}(p)$.
For $p$ near one (easy, high-probability tokens), both $(1-p)^\gamma$ and $-\gamma p\log p$ are small, and the weight decays quickly.
In our terminology, Focal loss is therefore \emph{more prior-averse} than NLL: it further shifts gradient mass toward low-probability tokens and away from high-probability ones.
This behavior aligns with its original motivation of focusing learning on rare or difficult examples, and fits naturally into the model-weak end of our continuum.

\paragraph{Huber-style loss: a prior-leaning example.}
To illustrate a contrasting, more prior-leaning distribution-based objective, we consider a Huber-style loss applied to the probability of the correct token.
Let $e = 1-p$ denote the error in the correct-class probability and $\delta \in (0,1]$ be a threshold.
The Huber loss on $e$ is
\begin{equation}
\phi_\delta(e)
\;=\;
\begin{cases}
\tfrac{1}{2} e^2, & e \le \delta \\
\delta\bigl(e - \tfrac{1}{2}\delta\bigr) & e > \delta,
\end{cases}
\end{equation}
and we define the probability-based loss
\begin{equation}
f_{\mathrm{Huber}}(p)
\;:=\;
\phi_\delta(1-p).
\end{equation}
For $e = 1-p \le \delta$ (i.e., $p$ close to $1$), we have $\phi_\delta'(e) = e$ and thus
\begin{equation}
f_{\mathrm{Huber}}'(p)
\;=\;
- (1-p),
\qquad
W_{f_{\mathrm{Huber}}}(p)
\;=\;
- f_{\mathrm{Huber}}'(p)\,p(1-p)
\;=\;
p(1-p)^2.
\end{equation}
For $e = 1-p > \delta$ (i.e., low-probability, high-error region), $\phi_\delta'(e) = \delta$ is constant and
\begin{equation}
f_{\mathrm{Huber}}'(p)
\;=\;
-\delta,
\qquad
W_{f_{\mathrm{Huber}}}(p)
\;=\;
\delta\,p(1-p).
\end{equation}
Compared to NLL, this Huber-style loss strongly \emph{downweights} both very low- and very high-probability tokens: in the high-confidence region, the weight decays as $p(1-p)^2$, which is smaller than $1-p$ for $p$ close to $1$; in the low-confidence region, the weight is capped at $\delta p(1-p)$, which can be much smaller than $1-p$ when $p$ is small.
As a result, gradients are concentrated on moderately confident tokens rather than on extremely low-probability ones.
In our framework, this makes the Huber-style loss a \emph{prior-leaning} objective, more conservative than NLL in correcting tokens the model currently deems very unlikely, which aligns with regimes where the pretrained prior is already informative but supervision may be noisy.

\textbf{Non-Distribution-based Losses.}

Beyond purely distribution-based objectives, several recent works have proposed losses that depend on \emph{sets of tokens} or on \emph{both} the data and model-generated distributions.
These are not of the simple form $f(p_\theta(\tilde y \mid x))$ and therefore fall outside the characterization by our paper, but they are still informative for our continuum view.

\paragraph{Dice and region-based losses.}
\cite{rege-cambrin-etal-2024-beyond} transfer semantic-segmentation losses (Dice, Generalized Dice, Lovász, Self-Adjusting Dice) to LLM fine-tuning and combine them with cross-entropy via
\[
L_{\text{tot}}
\;=\;
\lambda L_{\text{CE}} + (1-\lambda)L_{\text{seg}},
\qquad
\lambda \in [0,1],
\]
where $L_{\text{CE}}$ is applied to all instruction and answer tokens and $L_{\text{seg}}$ is applied only to answer tokens.\footnote{See their Sec.~3.4 and Fig.~2 for the combined loss design.}%
For binary Dice, given predicted probabilities $p_i \in [0,1]$ and labels $y_i \in \{0,1\}$, the Dice score and loss are
\begin{equation}
\text{DS}
=
\frac{2\sum_i p_i y_i}{\sum_i p_i^2 + \sum_i y_i^2},
\qquad
L_{\text{Dice}}
=
1 - \text{DS}.
\label{eq:dice_loss_def}
\end{equation}
As noted by both \citet{milletari2016vnet} and \citet{rege-cambrin-etal-2024-beyond}, Dice-type losses depend on global set-level quantities (e.g., intersection and union over all tokens). Consequently, the gradient with respect to a single token logit couples all tokens through the numerator and denominator of the Dice score.
As a result, isolated misclassified tokens can receive relatively small updates when the overall overlap between predicted and gold token sets is already high.
These region-based objectives therefore fall outside our token-wise probability-based characterization via $W_f(p)$: the effective weight on each token cannot be written as a function of its own probability alone, and it is not meaningful to classify them as globally ``prior-leaning'' or ``prior-averse'' in our sense.

\paragraph{Entropic distribution matching (GEM).}
\cite{li2024entropic} propose GEM, an SFT method based on \emph{entropic distribution matching}.
Conceptually, they formulate a reverse-KL objective with entropy regularization
\begin{equation}
\max_{f}
\;
\mathbb{E}_{x}
\Bigl[
\mathbb{E}_{y \sim f(\cdot\mid x)} \log p(y\mid x)
-
\mathbb{E}_{y \sim f(\cdot\mid x)} \log f(y\mid x)
+
\gamma \,\mathbb{E}_{y \sim f(\cdot\mid x)} \log f(y\mid x)
\Bigr],
\label{eq:gem_reverse_kl_entropy}
\end{equation}
which is equivalent to minimizing a reverse KL divergence $D_{\mathrm{KL}}(f\|p)$ minus an entropy regularizer (i.e., maximizing entropy).
Here $p$ denotes the (unknown) data distribution on sequences and $f$ is the model’s generative distribution.
Li et al.\ show that, at the population level, the optimal solution to~\eqref{eq:gem_reverse_kl_entropy} satisfies
\begin{equation}
f^\star(y\mid x)
\;\propto\;
p(y\mid x)^{1/(\gamma+1)},
\label{eq:gem_stationary_solution}
\end{equation}
i.e., a strictly concave power transform of $p$ that flattens peaked distributions and increases entropy.
Thus, at the sequence level, GEM behaves as a \emph{strongly prior-leaning} objective: it preserves the modes of $p$ while explicitly avoiding over-concentrating probability mass on any single sequence, which leads to less overfitting and higher output diversity in practice.

Algorithmically, GEM is implemented via a composite generative loss that contrasts log-probabilities of supervised ``real'' sequences and model-generated ``fake'' sequences, with the fake distribution $q$ defined as a softened version of $f$ (via a temperature $\beta$).
This composite objective cannot be written as a simple $f(p_\theta(\tilde y\mid x))$ on ground-truth tokens, so it lies outside the $W_f(p)$ characterization we develop.
Nevertheless, Eq.~\ref{eq:gem_stationary_solution} shows that GEM effectively implements a concave, entropy-increasing transform of the underlying sequence-level probabilities and hence sits naturally on the prior-leaning, model-strong side of our continuum, complementary to the token-level objectives we focus on in this paper.

\end{document}